\def\1{\bm{1}}
\def\eps{{\epsilon}}
\DeclareMathAlphabet{\mathsfit}{\encodingdefault}{\sfdefault}{m}{sl}
\SetMathAlphabet{\mathsfit}{bold}{\encodingdefault}{\sfdefault}{bx}{n}
\newcommand{\R}{\mathbb{R}}
\newcommand{\KL}[2]{\text{KL}\left(#1\Vert #2\right)}
\DeclareMathOperator{\Tr}{Tr}
\newtheorem{theorem}{Theorem}[section]
\newtheorem{lemma}[theorem]{Lemma}
\newtheorem{corollary}[theorem]{Corollary}
\newcommand*{\defeq}{\stackrel{\text{def}}{=}}
\DeclarePairedDelimiter{\norm}{\lVert}{\rVert}
\newcommand*{\normo}[1]{{\norm{#1}}_2}
\newcommand*{\normf}[1]{{\norm{#1}}_F}
\title{Diffusion \& Adversarial Schrödinger Bridges \\via Iterative Proportional Markovian Fitting}
\author{%
  Sergei Kholkin\thanks{Equal contribution.} \hspace{0.1mm} \thanks{Corresponding author: \texttt{<kholkinsd@gmail.com>}.} \hspace{0.5mm}$^{1}$
  \And
  Grigoriy Ksenofontov$^{*1,3}$
  \And
  David Li$^{*4}$
  \And
  Nikita Kornilov$^{*1,3,7}$
  \And
  Nikita Gushchin$^{*1,2}$
  \And
  Alexandra Suvorikova$^{5,6}$
  \And 
  Alexey Kroshnin$^{5}$
  \AND
  Evgeny Burnaev$^{1,2}$
  \And
  Alexander Korotin$^{1,2}$
  \AND
  \\[-7mm]
  \textsuperscript{\rm 1}Applied AI Institute\thanks{Moscow, Russia.} , \textsuperscript{\rm 2}AXXX$^\ddagger$, \\ \textsuperscript{\rm 3}Moscow Independent Research Institute of Artificial Intelligence$^\ddagger$, \\
  \textsuperscript{\rm 4}Mohamed bin Zayed University of Artificial Intelligence, \\
  \textsuperscript{\rm 5}Weierstrass Institute for Applied Analysis and Stochastics,\\
  \textsuperscript{\rm 6}Mathematical Foundations of Machine Learning Institute$^\ddagger$,\\
  \textsuperscript{\rm 7}Basic Research of Artificial Intelligence Laboratory$^\ddagger$.
}
\begin{document}

\maketitle

\vspace{-4mm}
\begin{abstract}
    \vspace{-1mm}
    The Iterative Markovian Fitting (IMF) procedure, which iteratively projects onto the space of Markov processes and the reciprocal class, successfully solves the Schrödinger Bridge (SB) problem. However, an efficient practical implementation requires a heuristic modification---alternating between fitting forward and backward time diffusion at each iteration. This modification is crucial for stabilizing training and achieving reliable results in applications such as unpaired domain translation. Our work reveals a close connection between the modified version of IMF and the Iterative Proportional Fitting (IPF) procedure---a foundational method for the SB problem, also known as Sinkhorn’s algorithm. Specifically, we demonstrate that the heuristic modification of the IMF effectively integrates both IMF and IPF procedures. We refer to this combined approach as the Iterative Proportional Markovian Fitting (IPMF) procedure. Through theoretical and empirical analysis, we establish the convergence of the IPMF procedure under various settings, contributing to developing a unified framework for solving SB problems. Moreover, from a practical standpoint, the IPMF procedure enables a flexible trade-off between image similarity and generation quality, offering a new mechanism for tailoring models to specific tasks. The code for our method can be found at: \url{https://github.com/gregkseno/ipmf}.
\end{abstract}


\begin{figure}[h]
\centering
\vspace{-6mm}
\hspace{-0mm}\includegraphics[width=0.88\linewidth]{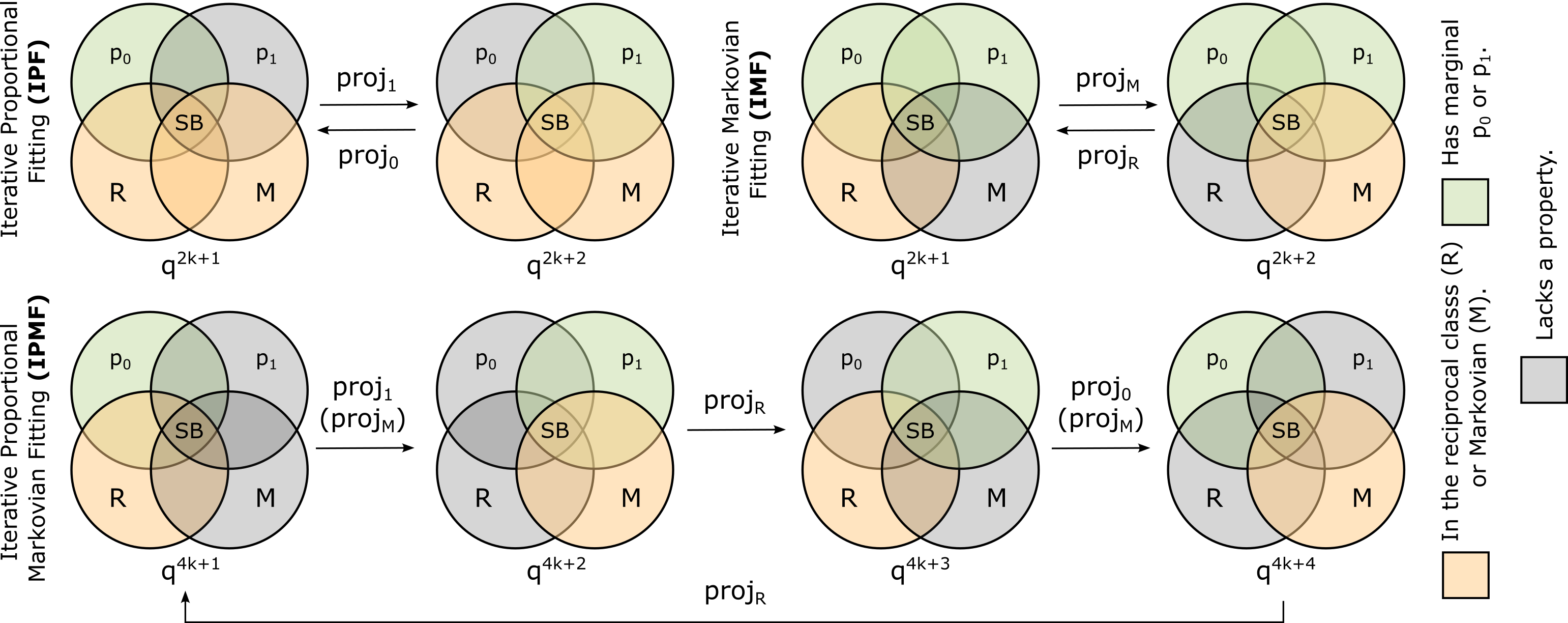}
\caption{\centering  Diagrams of IPF, IMF, and unified IPMF procedure. All procedures aim to converge to the Schrödinger Bridge, i.e., a Markovian process in the reciprocal class, with marginals $p_0$ and $p_1$. }\label{fig:teaser}
\vspace{-4mm}
\end{figure}

\section{Introduction}
\label{sec:intro}
\vspace{-3mm}
Diffusion Bridge models inspired by the Schrödinger Bridge (SB) theory, which connects stochastic processes with optimal transport, have recently become powerful approaches
in biology \citep{tong2024simulation, bunne2023schrodinger}, chemistry \citep{somnath2023aligned, igashovretrobridge, kim2024discrete}, image processing \citep{liu20232, shi2023diffusion, korotin2024light,gushchin2024light} and speech processing \citep{chen2023schrodinger}. Most of these applications deal with 
either supervised domain translation, e.g., image super-resolution and inpainting \citep{liu20232} or with unpaired translation, e.g., 
image style-transfer \citep{shi2023diffusion} or single-cell data analysis \citep{tong2024simulation}.

This work specifically focuses on \textit{unpaired} domain translation \citep[Fig. 2]{zhu2017unpaired}. In this setting, given two domains represented solely by unpaired samples, the goal is to transform a sample from the input domain into a sample related to it in the target domain. In this context, researchers usually use SB-based algorithms because they enforce \textbf{two} key properties: \textit{the optimality property}, ensuring similarity between the input and the translated object, and the \textit{marginal matching property}, ensuring the translation of the input domain to the target domain. \underline{The motivation} for relying on such specialized methods, rather than general text-to-data models, is further discussed in Appx.~\ref{app:motivation_unpaired_setup}.

Early works \citep{de2021diffusion, vargas2021solving,chen2021likelihood, pavon2021data} on using 
the SB for unpaired domain translation employed the well-celebrated \textbf{Iterative Proportional Fitting} (\textbf{IPF}) procedure \citep{kullback1968probability}, also known as the Sinkhorn algorithm \citep{cuturi2014fast}. The IPF procedure is initialized with a simple prior process 
satisfying the optimality property. It then refines this process iteratively through optimality-preserving transformations until the marginal matching property is achieved. In each iteration, IPF decreases the \textit{forward} KL-divergence $\text{KL}(q^*\|q)$ between the current approximation $q$ and the ground-truth Schrödinger Bridge $q^*$. 
However, in practice, approximation errors may cause IPF to suffer from the ``prior forgetting'', where the marginal matching property is achieved but optimality is lost \citep{vargas2024transport, vargas2021solving}. 

The \textbf{Iterative Markovian Fitting} (\textbf{IMF}) procedure \citep{shi2023diffusion, peluchetti2023diffusion, gushchin2024adversarial,ksenofontov2025categorical} emerged as a promising competitor to IPF. Contrary to IPF, IMF starts from a stochastic process 
satisfying the marginal matching property 
and iteratively achieving optimality. 
Each iteration of IMF decreases the \textit{reverse} KL-divergence $\text{KL}(q\|q^*)$ between the current approximation $q$ and the ground-truth SB $q^*$ (cf. with IPF). 
The approach generalizes rectified flows \citep{liu2022flow} to stochastic processes, which are employed \citep{liu2023instaflow,yan2024perflow} in modern foundational models such as Stable Diffusion 3 \citep{esser2024scaling}.
Like IPF, IMF may also accumulate errors. 
Specifically, it may fail to approximate data distributions due to an imperfect fit at each iteration, causing the marginal matching property to be lost.

In practice, to stabilize IMF training, prevent error accumulation and loss of marginal matching property, practitioners use a heuristic modification of IMF. This is a bidirectional procedure alternating between learning forward and backward processes, either by diffusion-based models in the \textbf{Diffusion} Schrödinger Bridge Matching (DSBM) algorithm \citep{shi2023diffusion} or GANs in \textbf{Adversarial} Schrödinger Bridge Matching (ASBM) algorithm \citep{gushchin2024adversarial}. 
In this work, we investigate the properties of the heuristic modification of the IMF. \textbf{Our contributions}:
\vspace{-2mm}
\begin{enumerate}[leftmargin=*]
    \item \textbf{Theory.} We show that the heuristic bidirectional IMF procedure used in practice is closely related to IPF---in fact, it \textit{secretly} uses IPF iterations. Therefore, we propose calling the bidirectional IMF procedure \textbf{Iterative Proportional Markovian Fitting} (IPMF, \wasyparagraph\ref{sec:bi_imf_is_ipmf}). We prove that the IPMF procedure \textit{exponentially} converges for Gaussians under various settings. We also guarantee that IPMF converges to $q^*$, if $p_0$ and $p_1$ have bounded supports and \textit{conjecture} that IPMF converges under very general settings, offering a promising way of developing a unified framework for solving the SB problem (\wasyparagraph\ref{sec:1d-gaussians}). 
    \item \textbf{Practice I.} 
    We empirically validate our conjecture through a series of experiments, including the Gaussian setup (\wasyparagraph\ref{sec:high-dim-gaussians}), toy 2D setups (\wasyparagraph\ref{sec:2d_exp}), the Schrödinger Bridge benchmark (\wasyparagraph\ref{sec:bench}), setup with real-world colored MNIST and CelebA image data (\wasyparagraph\ref{sec:unpaired-i2i}).  
    \item \textbf{Practice II.} Thanks to the proposed IPMF framework, we introduce a novel way to trade-off between generation quality and input-output similarity of Schrödinger Bridge solvers by designing the starting coupling. Empirically, we demonstrate on real-world image data that the proposed initializations outperform classical ones (\wasyparagraph{\ref{sec:exp_celeba}}).
\end{enumerate}
\vspace{-2mm}
These contributions demonstrate that the IPMF procedure has significant potential to \textbf{unify} a range of previously introduced SB methods---including IPF and IMF-based ones---in both discrete \citep{gushchin2024adversarial,de2021diffusion} and continuous time \citep{shi2023diffusion, peluchetti2023diffusion,vargas2021solving} settings, as well as their online versions \citep{de2024schr, peluchetti2024bm, karimi2024sinkhorn}. Furthermore, the forward-backward IPMF framework could enable rectified flows to avoid error accumulation, making them even more powerful in generative modeling.



\textbf{Notations.} $\mathcal{P}_{2,ac}(\R^D)$ is a set of absolutely continuous distributions on $\mathbb{R}^{D}$ with finite second moment and finite entropy. We fix $N \geq 1$ intermediate time moments and set $0=t_{0}<t_{1}<\dots<t_{N} < t_{N+1}=1$. Let $q \in \mathcal{P}_{2,ac}(\mathbb{R}^{D\times (N+2)})$ be an associated discrete stochastic process on this grid. For any such $q$, we denote the density at $(x_0, x_{t_{1}}, \dots,  x_{t_{N}}, x_1) \in \mathbb{R}^{D \times (N+2)}$ as $q(x_0, x_{\text{in}}  ,x_{1})$, with $x_{\text{in}} = (x_{t_{1}}, \dots,  x_{t_{N}})$. $W^{\epsilon}$ is a Wiener process with volatility $\epsilon > 0$ and initial distribution $p_0$. Let $p^{W^{\epsilon}}$ be its discretization, i.e., ${p^{W^{\epsilon}}\!(x_0, x_{\text{in}}, x_1) \!=\! p_0(x_0)\!\!\prod\limits_{n=1}^{N+1}\!\!\mathcal{N}(x_{t_{n}}|x_{t_{n-1}}, \epsilon (t_{n}\!\! - \!t_{n-1}) I_{D})}$, where $\mathcal{N}(\cdot | \cdot)$ is a conditional Gaussian distribution. $H(q)$ is the differential entropy of $q$. 








\vspace{-3mm}
\section{Background}
\label{sec:background}
\vspace{-2mm}
This section details the study's key concepts; \wasyparagraph\ref{sec-sb} introduces the Schrödinger Bridge (SB) problem, \wasyparagraph\ref{sec-ipf} presents the Iterative Proportional Fitting (IPF), \wasyparagraph\ref{sec-imf} describes the Iterative Markovian Fitting (IMF), \wasyparagraph\ref{sec:bidirectional-imf} discusses the heuristic modification of the IMF (Bidirectional IMF).

Recall that the SB problem \citep{Schrodinger1931umkehrung}, IPF, and IMF admit both discrete-- and continuous--time setups leading to the same problem solution. 
Moreover, the explicit formulas for IPF and IMF in the discrete setting are expressed in terms of probability densities, which helps to convey the main idea of our paper.
Thus, for the sake of presentation flow, \textbf{the main text focuses exclusively on the discrete setup}, while Appendix \ref{app:continous-time-setup} presents the \underline{continuous setup}.
\vspace{-2.5mm}
\subsection{Schrödinger Bridge (SB) Problem}
\label{sec-sb}
\vspace{-2.5mm}
 
The SB problem with a Wiener prior in the discrete-time setting \citep{de2021diffusion}, given the initial distribution $p_0(x_0)$ and the final distribution $p_1(x_1)$, is stated as
\begin{equation}\label{eq:discrete-general_SB}
    \min_{q \in \Pi_{N}(p_0, p_1)}\text{KL}(q(x_0, x_{\text{in}}, x_1)
    \|p^{W^{\epsilon}}(x_0, x_{\text{in}}, x_1)),
\end{equation}
where $\Pi_{N}(p_0, p_1) \subset \mathcal{P}_{2,ac}(\mathbb{R}^{D\times (N+2)})$ is the subset of discrete stochastic processes with marginals $q(x_0) = p_0(x_0)$, $q(x_1)=p_1(x_1)$. 
The objective function in \eqref{eq:discrete-general_SB} admits a decomposition 
\begin{gather*}
\text{KL}\big(q(x_0, x_{\text{in}}, x_1)|| p^{W^{\epsilon}}(x_0, x_{\text{in}}, x_1)\big) = 
\text{KL}\big(q(x_0, x_1)||p^{W^{\epsilon}}(x_0, x_1)\big) \notag \\
+ \int \text{KL}\big(q(x_{\text{in}}|x_0, x_1)||p^{W^{\epsilon}}(x_{\text{in}}|x_0, x_1)\big) q(x_0,x_1)dx_0 dx_1. \notag
\end{gather*}
All $q(x_{\text{in}}|x_0, x_1)$ can be chosen independently of $q(x_0, x_1)$. Thus, we can consider $q(x_{\text{in}}|x_0, x_1) = p^{W^{\epsilon}}(x_{\text{in}}|x_0, x_1)$and get  $\text{KL}(q(x_{\text{in}}|x_0, x_1)||p^{W^{\epsilon}}(x_{\text{in}}|x_0, x_1)) = 0$. 

This leads to the \textbf{Static SB problem}:
\begin{gather}
\min_{q \in \Pi(p_0, p_1)} \!\! \text{KL}\big(q(x_0, x_1) || p^{W^{\epsilon}}(x_0, x_1)\big), \label{eq:static-sb}
\end{gather}
where $\Pi(p_0, p_1) \subset \mathcal{P}_{2,ac}(\mathbb{R}^{D\times D})$ is the subset of joint distributions $q(x_0, x_1)$ s.t. $q(x_0) = p_0(x_0)$, $q(x_1)=p_1(x_1)$.
Finally, the static SB objective can be expanded \citep[Eq. 7]{gushchin2023entropic}
\begin{gather}
 \text{KL}(q(x_0, x_1)||p^{W^{\epsilon}}(x_0, x_1)) = \int \frac{\|x_1-x_0\|^2}{2\epsilon} dq(x_0,x_1)   - H(q(x_0, x_1)) + C, \label{eq:kl-between-process-joints}
\end{gather}
that is equivalent to the objective of the \textit{entropic optimal transport} (EOT) problem with the \textit{quadratic cost} up to an additive constant  \citep{cuturi2013sinkhorn,peyre2019computational,leonard2013survey,genevay2019entropy}.

\vspace{-2.5mm}
\subsection{Iterative Proportional Fitting (IPF)} \label{sec-ipf}
\vspace{-2.5mm}

Early works on SB \citep{vargas2021solving, de2021diffusion, tang2024simplified} propose computational methods based on the IPF procedure \citep{kullback1968probability}. The IPF-based algorithm is started by setting the process $q^0(x_0, x_{\text{in}}, x_1) = p_0(x_0)p^{W^{\epsilon}}(x_{\text{in}}, x_1|x_0)$. Then, the algorithm alternates between two types of IPF projections, $\text{proj}_{1}$ and $\text{proj}_{0}$, given by \citep[Prop. 2]{de2021diffusion}:
\begin{gather}
    q^{2k+1} = \text{proj}_{1}\big(\underbrace{q^{2k}(x_1)\prod_{n=0}^N q^{2k}(x_{t_{n}}|x_{t_{n+1}})}_{q^{2k}(x_1)q^{2k}(x_0, x_{\text{in}}|x_1)}\big) \defeq p_1(x_1)\underbrace{\prod_{n=0}^{N}q^{2k}(x_{t_{n}}|x_{t_{n+1}})}_{q^{2k}(x_0, x_{\text{in}}|x_1)},
    \label{eq:ipf_1}
    \\
    q^{2k+2} = \text{proj}_{0}\big(\underbrace{q^{2k+1}(x_0)\prod_{n=1}^{N+1} q^{2k+1}(x_{t_{n}}|x_{t_{n-1}})}_{q^{2k+1}(x_0)q^{2k+1}(x_{\text{in}}, x_1|x_0)}\big) \defeq p_0(x_0)\underbrace{\prod_{n=1}^{N+1}q^{2k+1}(x_{t_{n}}|x_{t_{n-1}})}_{q^{2k+1}(x_{\text{in}}, x_1|x_0)}.
    \label{eq:ipf_0}
\end{gather} 
Thus, 
$\text{proj}_{1}$ and $\text{proj}_{0}$ replace marginal distributions $q(x_1)$ and $q(x_0)$ in $q(x_0, x_{\text{in}}, x_1)$ by $p_1(x_1)$ and $p_0(x_0)$ respectively. 
The constructed sequence $\{q^{k}\}$ converges to the solution of the SB problem $q^*$ and causes the \underline{forward KL-divergence $\text{KL}(q^*\|q^{k})$} to decrease  monotonically  at each iteration.
In practice, since the prior process $p^{W^{\epsilon}}$ is used only for initialization, the imperfect fit may lead to a deviation from the SB solution at some iteration. This problem is called ``prior forgetting'' and was discussed in \citep[Appx. E.3]{vargas2024transport}. 
 The authors of \cite{vargas2021solving} consider a \underline{continuous analog} of the IPF procedure using inversions of diffusion processes (see Appx.~\ref{app:continuous-ipf}).


\vspace{-2.5mm}
\subsection{Iterative Markovian Fitting (IMF)}\label{sec-imf}
\vspace{-2.5mm}
The Iterative Markovian Fitting (IMF) procedure \citep{peluchetti2023diffusion, shi2023diffusion, gushchin2024adversarial} 
emerged as a strong competitor to the IPF procedure. 
In contrast to IPF, IMF does not suffer from the ``prior forgetting''. 
The procedure is initialized with any $q^0 \in \Pi_N(p_0, p_1)$. Then it alternates between reciprocal projection $\text{proj}_{\mathcal{R}}$ and Markovian projection $\text{proj}_{\mathcal{M}}$:
\begin{gather}
    q^{2k+1} = \text{proj}_{\mathcal{R}}(q^{2k}) \defeq q^{2k}(x_0, x_1)p^{W^{\epsilon}}(x_{\text{in}}|x_0, x_1),
    \label{eq:reciprocal-proj-discrete}
    \\
    \!\!\!\!\! q^{2k+2} \!= \text{proj}_{\mathcal{M}}(q^{2k+1})
    \defeq \underbrace{q^{2k+1}(x_0)\prod_{n=1}^{N+1}q^{2k+1}(x_{t_{n}}|x_{t_{n-1}})}_{\text{forward representation}}
    \! = \!  \underbrace{q^{2k+1}(x_1)\prod_{n=0}^{N}q^{2k+1}(x_{t_{n}}|x_{t_{n+1}})}_{\text{backward representation}}
    \label{eq:markovian-proj}
\end{gather}
The reciprocal projection $\text{proj}_{\mathcal{R}}$ creates a new (in general, non-Markovian) process combining the distribution $q(x_0, x_1)$ and $p^{W^{\epsilon}}(x_{\text{in}}|x_0, x_1)$. The latter is called the discrete Brownian Bridge. The Markovian projection $\text{proj}_{\mathcal{M}}$ uses the set of transitional densities $\{q(x_{t_{n}}|x_{t_{n-1}})\}$ or $\{q(x_{t_{n}}|x_{t_{n+1}})\}$ to create a new Markovian process starting from $q(x_0)$ or $q(x_1)$ respectively. Markovian projection keeps the marginal distributions at each timestep, but, in general, changes the joint distributions between them.
The sequence $\{q^{k}\}$ converges to the SB $q^*$ and causes the \underline{reverse KL-divergence $\text{KL}(q^k\|q^{*})$} to decrease monotonically at each iteration (cf. with IPF).
The authors of \cite{shi2023diffusion, peluchetti2023diffusion} consider a continuous-time version of the IMF.
\vspace{-2.5mm}
\subsection{Heuristic bidirectional modification of IMF}\label{sec:bidirectional-imf}
\vspace{-2.5mm}
The result of the Markovian projection \eqref{eq:markovian-proj} admits both forward and backward representation. To learn the corresponding transitional densities, one uses neural networks $\{q_{\theta}(x_{t_{n}}|x_{t_{n-1}})\}$ (\textbf{forward parametrization}) or $\{q_{\phi}(x_{t_{n}}|x_{t_{n+1}})\}$ (\textbf{backward parametrization}). 
The starting distributions are as follows: $q_{\theta}(x_0) = p_0(x_0)$ for the forward parametrization and $q_{\phi}(x_1) = p_1(x_1)$ for the backward parametrization.
In practice, the alternation between representations of Markovian processes is used in both implementations of continuous-time IMF by \textbf{DSBM} algorithm  \citep[Alg. 1]{shi2023diffusion} based on diffusion models and discrete-time IMF by \textbf{ASBM} algorithm  \citep[Alg. 1]{gushchin2024adversarial} based on the GANs. This \textbf{bidirectional} procedure can be described as follows:
\vspace{-1mm}
\begin{gather}
    q^{4k+1} = \underbrace{q^{4k}(x_0, x_1)p^{W^{\epsilon}}(x_{\text{in}}|x_0, x_1)}_{\text{proj}_{\mathcal{R}}(q^{4k})}, \quad
    q^{4k+2} =  \underbrace{p(x_1)\prod_{n=0}^{N}q_{\phi}^{4k+1}(x_{t_{n-1}}|x_{t_{n}})}_{\text{backward parametrization}},
    \label{eq:backward-markovian-proj}
    \\
    q^{4k+3} =  \underbrace{q^{4k+2}(x_0, x_1)p^{W^{\epsilon}}(x_{\text{in}}|x_0, x_1)}_{\text{proj}_{\mathcal{R}}(q^{4k+2})}, \quad
    q^{4k+4} =  \underbrace{p(x_0)\prod_{n=1}^{N+1}q_{\theta}^{4k+3}(x_{t_{n}}|x_{t_{n-1}})}_{\text{forward parametrization}}.
    \label{eq:forward-markovian-proj}
\end{gather}
Thus, only one marginal is fitted perfectly, e.g., $q_{\theta}(x_0) = p_0(x_0)$ in the case of forward representation, while the other marginal is only learned, e.g., $q_{\theta}(x_1) = \int p_0(x_0)\prod_{n=1}^{N+1}q_{\theta}(x_{t_{n}}|x_{t_{n-1}}) dx_0dx_1\cdots dx_{N} \approx p_1(x_1)$. 

\vspace{-3mm}
\section{Iterative Proportional Markovian Fitting}
\vspace{-3mm}
This section demonstrates that the heuristic bidirectional IMF (\wasyparagraph\ref{sec:bidirectional-imf}) is, in fact, the alternating implementation of IPF and IMF projections.   
\wasyparagraph\ref{sec:bi_imf_is_ipmf} establishes that this heuristic defines the unified Iterative Proportional Markovian Fitting (IPMF) procedure.
\wasyparagraph\ref{sec:1d-gaussians} provides the analysis of the convergence of the IPMF procedure under various settings, with \underline{the proofs} provided in Appendix \ref{sec:proofs}.

\vspace{-2.5mm}
\subsection{Bidirectional IMF is IPMF}\label{sec:bi_imf_is_ipmf}
\vspace{-2.5mm}
For a given Markovian process $q$, we recall that its IPF projections (proj$_0(q)$ \eqref{eq:ipf_0} and proj$_1(q)$ \eqref{eq:ipf_1}) replace the starting distribution $q(x_0)$ with $p_0(x_0)$ and $q(x_1)$ with $p_1(x_1)$, respectively.
Further, the process $q^{4k+2}$ \eqref{eq:backward-markovian-proj} is 
a result of a combination of the Markovian projection proj$_{\mathcal{M}}$ \eqref{eq:markovian-proj} in forward parametrization and of the IPF projection proj$_1$ \eqref{eq:ipf_1}:
\vspace{-1.5mm}
\begin{gather}
    q^{4k+2} = p(x_1)\prod_{n=0}^{N}q^{4k+1}(x_{t_{n}}|x_{t_{n+1}})  = \underbrace{\text{proj}_{1}\Big(q^{4k + 1}(x_1)\prod_{n=0}^N q^{4k + 1}(x_{t_n}|x_{t_{n+1}})\Big)}_{\text{proj}_{1}(\text{proj}_{\mathcal{M}}(q^{4k+1}))}.
    \nonumber
\end{gather}
Next, the process $q^{4k+4}$ \eqref{eq:forward-markovian-proj} 
results from a combination of the Markovian projection proj$_{\mathcal{M}}$ \eqref{eq:markovian-proj} in backward parametrization and of the IPF projection proj$_0$ \eqref{eq:ipf_0}:
\vspace{-1.5mm}
\begin{gather}
    q^{4k+3} = p(x_0)\prod_{n=1}^{N+1}q^{4k+3}(x_{t_{n}}|x_{t_{n-1}})  = \underbrace{\text{proj}_{0}\big(q^{4k + 3}(x_0)\prod_{n=1}^{N+1} q^{4k + 3}(x_{t_n}|x_{t_{n-1}})\big)}_{\text{proj}_{0}(\text{proj}_{\mathcal{M}}(q^{4k+3}))}.
    \nonumber
\end{gather}
Thus, we can represent the heuristic bidirectional IMF given by \eqref{eq:forward-markovian-proj} and \eqref{eq:backward-markovian-proj} as follows:
\begin{gather}
\vspace{-1mm}
    \textbf{Iterative Proportional Markovian Fitting (Discrete time)} 
\vspace{-1mm}
    \nonumber
    \\
    q^{4k+1} = \underbrace{q^{4k}(x_0, x_1)p^{W^{\epsilon}}(x_{\text{in}}|x_0, x_1)}_{\text{proj}_{\mathcal{R}}(q^{4k})}, \quad
    q^{4k+2} =  \underbrace{p(x_1)\prod_{n=0}^{N}q^{4k+1}(x_{t_{n-1}}|x_{t_{n}})}_{\text{proj}_{1}(\text{proj}_{\mathcal{M}}(q^{4k +1}))},
    \nonumber
    \\
    q^{4k+3} =  \underbrace{q^{4k+2}(x_0, x_1)p^{W^{\epsilon}}(x_{\text{in}}|x_0, x_1)}_{\text{proj}_{\mathcal{R}}(q^{4k+2})}, \quad
    q^{4k+4} =  \underbrace{p(x_0)\prod_{n=1}^{N+1}q^{4k+3}(x_{t_{n}}|x_{t_{n-1}})}_{\text{proj}_{0}(\text{proj}_{\mathcal{M}}(q^{4k + 3}))}.
    \nonumber
\end{gather}
The heuristic bidirectional IMF alternates between two IMF projections ($\text{proj}_{\mathcal{M}} (\text{proj}_{\mathcal{R}}(\cdot))$) during which the process ``became more optimal'' (step towards optimality property) and two IPF projections (proj$_0$ and proj$_1$) during which the marginal fitting improves (step towards marginal matching property). We refer to this procedure as \textbf{Iterative Proportional Markovian Fitting (IPMF)}.
\textit{An IPMF step} consists of two IMF projections and two IPF projections. We hypothesize that IPMF converges from any initial process $q^0(x_0, x_{\text{in}}, x_1)$, unlike IPF and IMF, which require a specific form of the starting process. We emphasize that IPMF reduces to IMF when the initial coupling has the correct marginals $p_0$ and $p_1$ and has Brownian
Bridge between the marginals. Similarly, if the initial coupling is Markovian, is in the reciprocal class, and has the correct initial marginal $p_0$ or $p_1$, then IPMF reduces to IPF. Fig.~\ref{fig:teaser} visualizes these cases, clarifying the role of the initial coupling and the iterative steps. A similar analysis for \underline{continuous-time IPMF} is provided in Appx.~\ref{app:continuous-imf} and the summarization of the conceptual differences between prior bidirectional SB heuristics and the proposed IPMF framework is presented in Table~\ref{tab:ipmf_positioning}.

\vspace{-2mm}
\subsection{Theoretical Convergence Analysis In Various Cases}
\label{sec:1d-gaussians} 
\vspace{-2mm}
Our first result introduces a novel approach to quantify the optimality property for a Gaussian plan.
We show that any $2D$ Gaussian distribution ($D \ge 1$) is an entropic OT plan between its marginals for a certain transport cost. 
Let $Q, S \in \R^{D \times D}$ be positive definite matrices ($Q, S \succ 0$) and $P\in \R^{D \times D}$ be s.t. $Q - P (S)^{-1}P^\top \succ 0$. Define
\vspace{-0.2cm}
\begin{equation} 
    \Xi(P, Q, S) \defeq (S)^{-1}  P^\top ( Q - P (S)^{-1}P^\top )^{-1}. \vspace{-0.05cm}\label{eq: optimality matrix formula}
\end{equation}
\begin{theorem}\label{thm:main-lemma-multi} 
Let $q(x_0,x_1)$ be Gaussian with marginals $p = \mathcal{N}(\eta, Q)$ and $\tilde{p} = \mathcal{N}(\nu, S)$,
    \begin{equation*}
         q(x_0, x_1) = \footnotesize \mathcal{N} \left(\begin{pmatrix}
    \eta \\ \nu
\end{pmatrix}, \begin{pmatrix}
    Q & P  \\
    P^\top   & S
\end{pmatrix} \right).
\end{equation*}
Let $A = \Xi(P, Q, S)$. Then $q$ is the unique minimizer of
\vspace{-1mm}
\begin{equation}\min_{q' \in \Pi(p, \tilde{p})} \bigg\{\int  (- x_1^\top A x_0) \cdot  q'(x_0,x_1)dx_0dx_1- H\big(q'\big)\bigg\}.\label{eq:eot multi}
\end{equation}
\label{lemma-eot-gaussian}
\end{theorem}
\vspace{-0.3cm}
Problem \eqref{eq:eot multi}
is the OT problem with the transport cost $c_A(x_0, x_1) := - x_1^\top A x_0 $ and entropy regularization (with weight $1$) \citep{cuturi2013sinkhorn,genevay2019entropy}.
In other words, for any $2D$ Gaussian distribution $q$, there exists a matrix $A(q) \in \R^{D\times D}$ that defines the cost function for which $q$ solves the EOT problem. We name $A(q)$ the \textbf{optimality matrix}. If $q$ is such that $A(q)=\epsilon^{-1} I_D$, then the corresponding transport cost is $c_{A}(x_0, x_1) = - \epsilon^{-1} \cdot\langle x_1, x_0\rangle$ which is equivalent to $\epsilon^{-1} \cdot \|x_1 - x_0\|^2/2$. 
Consequently, $q$ is the static SB \eqref{eq:static-sb} between its $q_0(x_0)$ and $q_1(x_1)$ for the prior $W^{\epsilon}$, recall \eqref{eq:kl-between-process-joints}.


\textbf{Main result.} We prove the exponential convergence of IPMF (w.r.t. the parameters) to the solution $q^*$ of the static SB problem \eqref{eq:static-sb} between $p_0$ and $p_1$ under certain settings. 

\begin{theorem}[Convergence of IPMF for Gaussians]
\label{thm-main multi} 
    Let $p_0 = \mathcal{N}(\mu_0, \Sigma_0)$ and $p_1 = \mathcal{N}(\mu_1, \Sigma_1)$ be $D$-dimensional Gaussians. Assume that we run IPMF with $\epsilon > 0$, 
    starting from some $2D$ Gaussian\footnote{We assume that $q^{0}(x_0)=p_0(x_0)$, i.e., the initial process starts at $p_0$ at time $t=0$. This is reasonable, as after the first IPMF round the process will satisfy this property thanks to the IPF projections involved.}
        $$ q^{0}(x_0,x_1)=\footnotesize\mathcal{N} \left(\begin{pmatrix}
        \mu_0 \\ \nu
    \end{pmatrix}, \begin{pmatrix}
        \Sigma_0 & P_0 \\
        P_0  & S_0
    \end{pmatrix} \right) \in\mathcal{P}_{2,ac}(\mathbb{R}^{D}\times\mathbb{R}^{D}).$$
    We denote the distribution obtained after $k$ IPMF steps by \vspace{-0.2cm}
    \begin{equation*}
        q^{4k}(x_0,x_1)\defeq \footnotesize \mathcal{N} \left(\begin{pmatrix}
        \mu_0 \\ \nu_k
    \end{pmatrix},\begin{pmatrix}
        \Sigma_0 & P_k \\
        P_k  & S_k
    \end{pmatrix} \right) \in\mathcal{P}_{2,ac}(\mathbb{R}^{D}\times\mathbb{R}^{D}) \vspace{-0.2cm}
    \end{equation*}
    and $A_k \defeq \Xi(P_k, \Sigma_0,S_{k})$. Then in the following settings 
    \begin{itemize}[leftmargin=*]
        \item $D = 1$, discrete- or continuous-time IMF ($N=1$), any $\epsilon > 0$;
        \item $D > 1$, discrete-time IMF, $\eps\gg 0$ (see Appendix \ref{sec:multidim_theory});
    \end{itemize}
    the following exponential convergence bounds hold:
    \vspace{-1mm}
    \begin{gather}
        \|S_k^{-\frac12} \Sigma_1 S_k^{-\frac12} - I_D\|_2 \leq \alpha^{2k}\|S_0^{-\frac12} \Sigma_1 S_0^{-\frac12} - I_D\|_2, \notag \\  \|\Sigma_1^{-\frac12}(\nu_k - \mu_1)\|_2 \leq  \alpha^k \|\Sigma_1^{-\frac12}(\nu_0 - \mu_1)\|_2, \quad
        \|A_k - \epsilon^{-1}I_D\|_2 \leq  \beta^{2k}  \|A_{0} -  \epsilon^{-1}I_D\|_2,
        \label{eq: optimality convergence}
    \end{gather}
    with $\alpha, \beta < 1$ and $\|\cdot\|_2$ being the spectral norm; $\alpha,\beta$ depend on IPMF type (discrete or continuous), initial parameters $S_0, \nu_0, P_0$, marginal distributions $p_0,p_1$, and $\epsilon$. Consequently, $\textup{KL}\left(q^{4k}\Vert q^*\right), \textup{KL}\left(q^*\Vert q^{4k}\right) \stackrel{k\rightarrow\infty}{\rightarrow} 0$.
\end{theorem}
\vspace{-0.4cm}
\begin{proof}[Proof idea]    
An IPF step does not change the ``copula'', i.e., the information about the joint distribution that is invariant w.r.t. changes in marginals $p_0$ and $p_1$. In the Gaussian case can be represented by the optimality matrix $A_k$ (see Lemma \ref{lem: ipf does not change optimality}).
In contrast, an IMF iteration changes the copula but preserves the marginals. Next, we analyze closed formulas for the IMF step in the Gaussian case \citep{peluchetti2023diffusion,gushchin2024adversarial} and show that the IMF step makes $A_k$ closer to $\epsilon^{-1}I_D$. Specifically, we verify the contractivity of each step w.r.t. $A_{k}$. 
\end{proof}
\vspace{-3mm}Our next result shows that the convergence of IPMF holds far beyond the Gaussian setting.

\begin{theorem}[Convergence of IPMF under boundness assumption]
\label{thm:ipmf_convergence}
    Assume $p_0$ and $p_1$ have bounded supports. Then for both discrete-time and continuous-time IPMF it holds  $\small {q^{4k}(x_0,x_1\!) \!\stackrel{w}{\rightarrow} \!q^*(x_0,x_1\!)}$, where $\stackrel{w}{\rightarrow}$ denotes weak convergence.
\end{theorem}

\vspace{-2mm}
\textbf{General conjecture.} \textit{Given our results, we believe that IPMF converges under very general settings (beyond the Gaussian and bounded cases). Moreover, in the Gaussian case, we expect exponential convergence for all $\epsilon > 0$, all $D$, and IMF types. We verify these claims experimentally (\wasyparagraph\ref{sec:exp}).}

\textbf{Related works and our novelty.} Our work provides the first theoretical analysis of bidirectional IMF, whereas prior studies analyzed only vanilla IMF. \citep[Theorem 8]{shi2023diffusion} and \citep[Theorem 3.6]{gushchin2024adversarial} proved sublinear convergence of IMF in reverse KL divergence for continuous and discrete cases. For IPF, sublinear convergence in forward KL divergence under mild assumptions, as well as geometric convergence for Gaussians, are shown in \citep[Propositions 4 and 43]{de2021diffusion}. Previous results cannot be directly generalized to IPMF. First, IPF and IMF converge in different divergence measures, and a decrease in one does not imply a reduction in the other. Second, IPF updates marginals at each step, so IMF must optimize toward a moving target, whereas pure IMF has a fixed optimum. Unlike \citep{shi2023diffusion}, which proves convergence only from the IPF starting coupling, our analysis applies to arbitrary starting couplings. We also view the starting coupling as a tunable hyperparameter and examine its effect in the next section.

\section{Experimental Illustrations}\label{sec:exp}
\vspace{-3mm}



This section provides empirical evidence that IPMF converges under a more general setting---specifically, from any starting process---unlike IPF and IMF. The first goal is to achieve the same or similar results across all used starting couplings and for both discrete-time (ASBM) and continuous-time (DSBM) solvers on illustrative setups (\wasyparagraph\ref{sec:high_dim_gaussians_exp}, \wasyparagraph\ref{sec:2d_exp}, \wasyparagraph\ref{sec:bench}, \wasyparagraph\ref{sec:cmnist_exp_main}). The second one is to highlight that, while all initializations converge to qualitatively similar outcomes, in practice, some offer better generation quality and others better input-output similarity on real-world data, due to different starting points. This allows one to choose initializations based on specific task requirements (\wasyparagraph\ref{sec:exp_celeba}).

In \wasyparagraph\ref{sec:bi_imf_is_ipmf} and Appx.~\ref{app:continous-time-setup} we show that the bidirectional IMF and the proposed IPMF differ only in the initial starting process. Since both practical implementations of continuous-time IMF \citep[Alg. 1]{shi2023diffusion} and discrete-time IMF \citep[Alg. 1]{gushchin2024adversarial} use the considered bidirectional version, we use practical algorithms introduced in these works, i.e., Diffusion Schrödinger Bridge Matching (\textbf{DSBM}) and Adversarial Schrödinger Bridge Matching (\textbf{ASBM}) respectively. 


\label{sec:high-dim-gaussians}
\begin{figure*}[!t]
\vspace{-5mm}
\begin{subfigure}[b]{0.49\linewidth}
\centering
\includegraphics[width=0.995\linewidth]{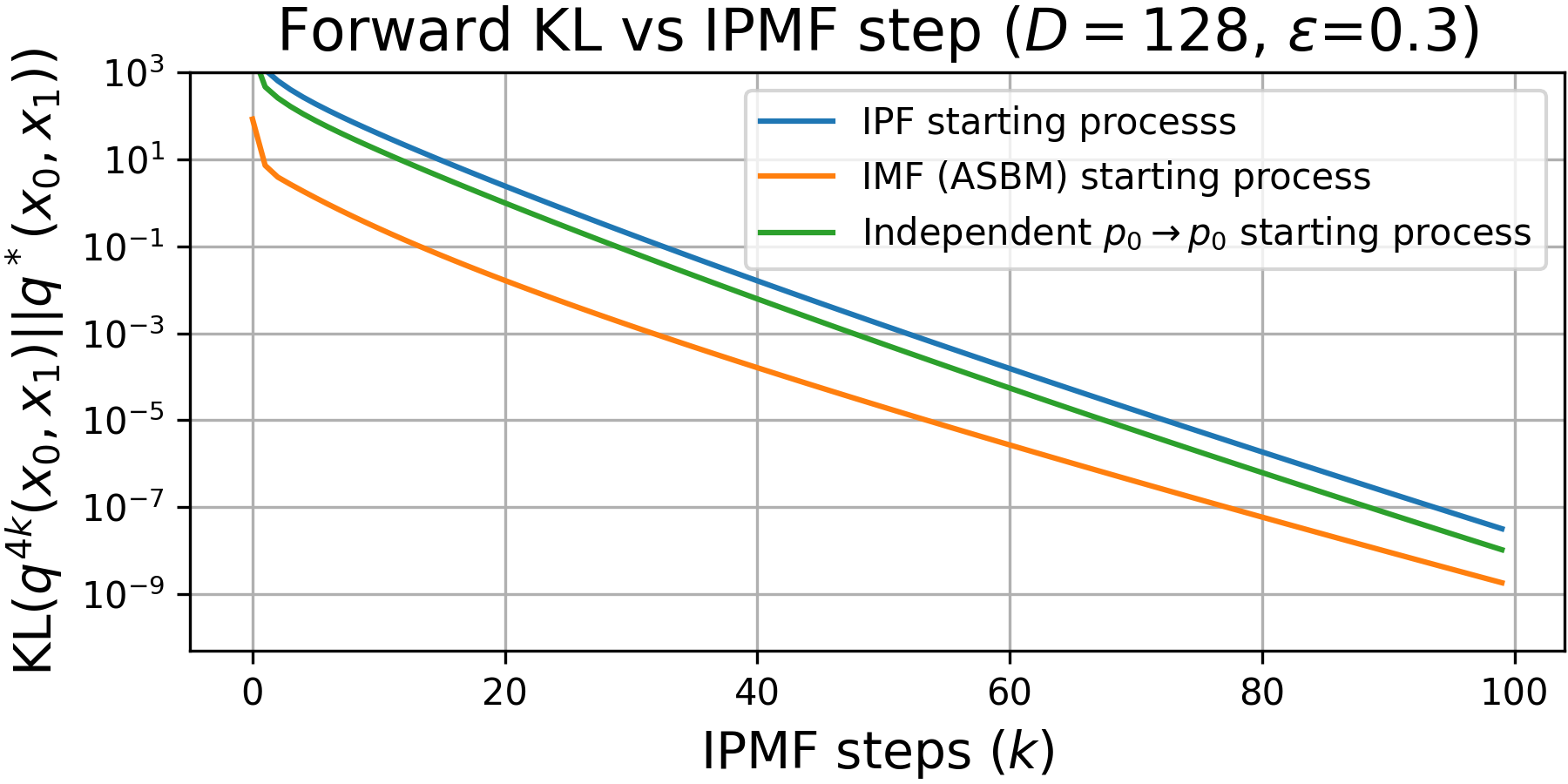}
\caption{\centering  Forward KL-divergence during IPMF steps. }\label{fig:forward_gauss}
\end{subfigure}
\hfill\begin{subfigure}[b]{0.49\linewidth}
\centering
\includegraphics[width=0.995\linewidth]{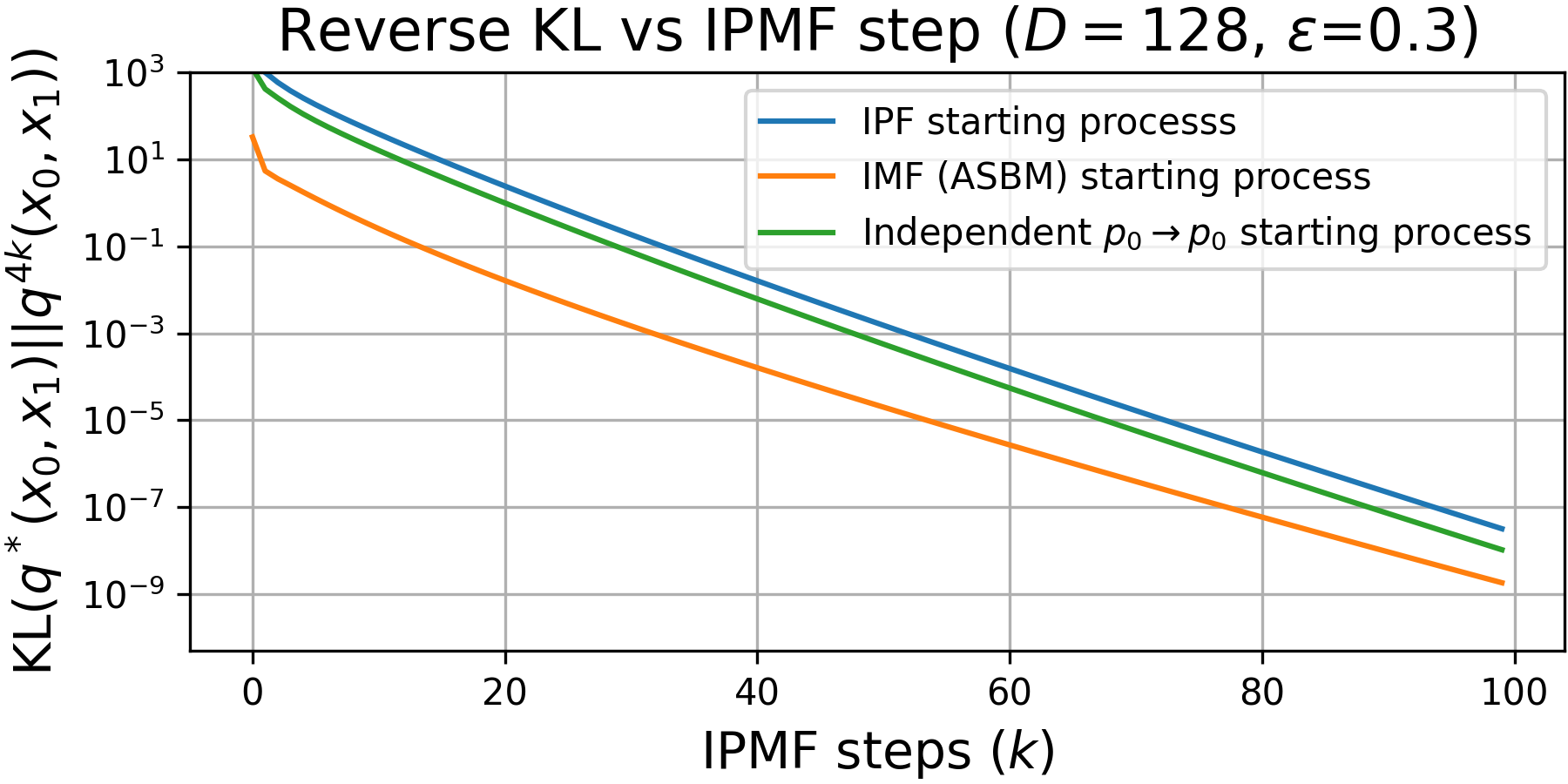}
\caption{\centering  Reverse KL-divergence during IPMF steps. }\label{fig:reverse_gauss}
\end{subfigure}
\hfill\begin{subfigure}[b]{0.325\linewidth}
\centering
\includegraphics[width=0.995\linewidth]{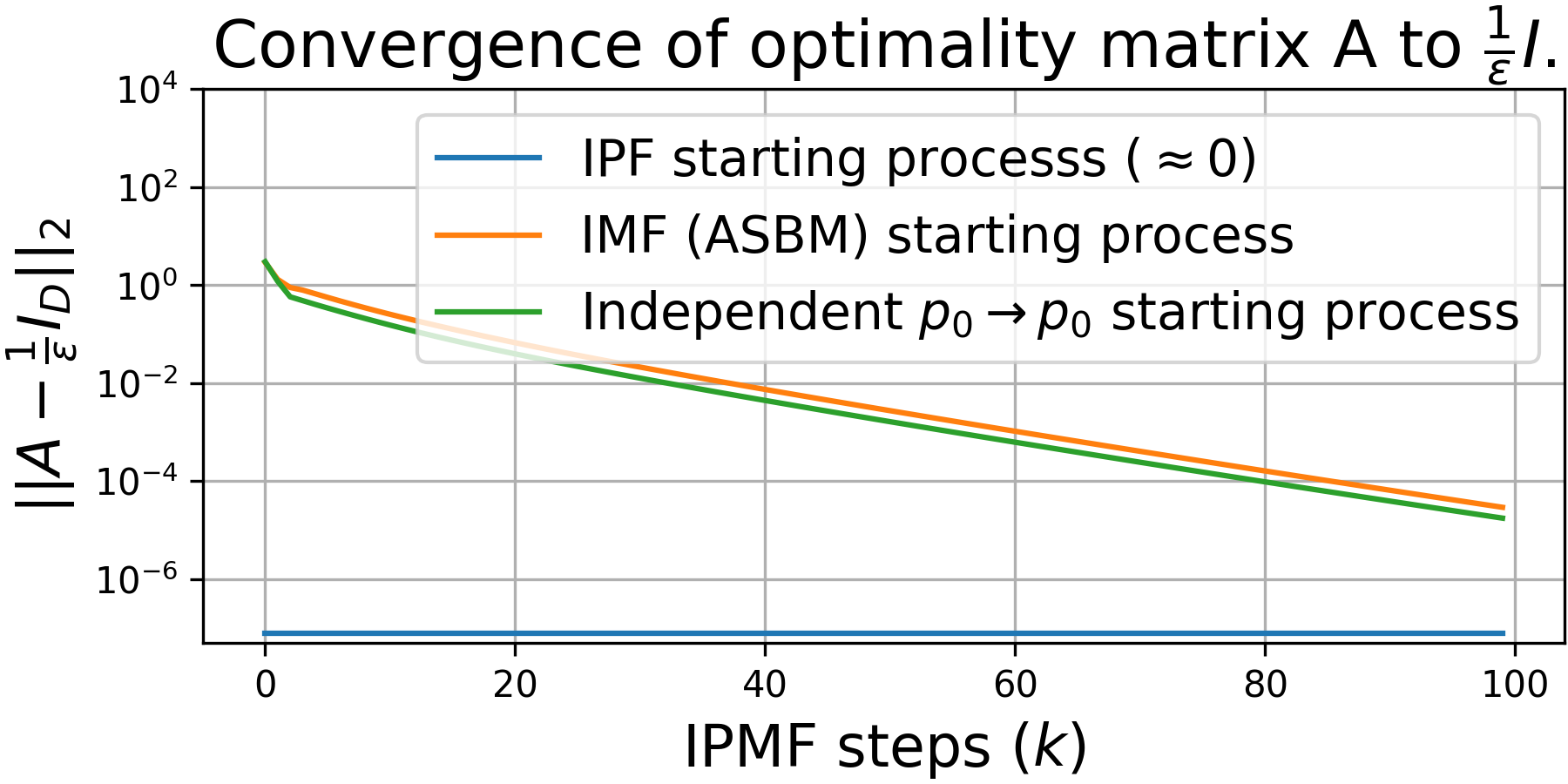}
\caption{\centering  Convergence of matrix $A$. }\label{fig:A_conv}
\vspace{-2mm}
\end{subfigure}
\hfill\begin{subfigure}[b]{0.325\linewidth}
\centering
\includegraphics[width=0.995\linewidth]{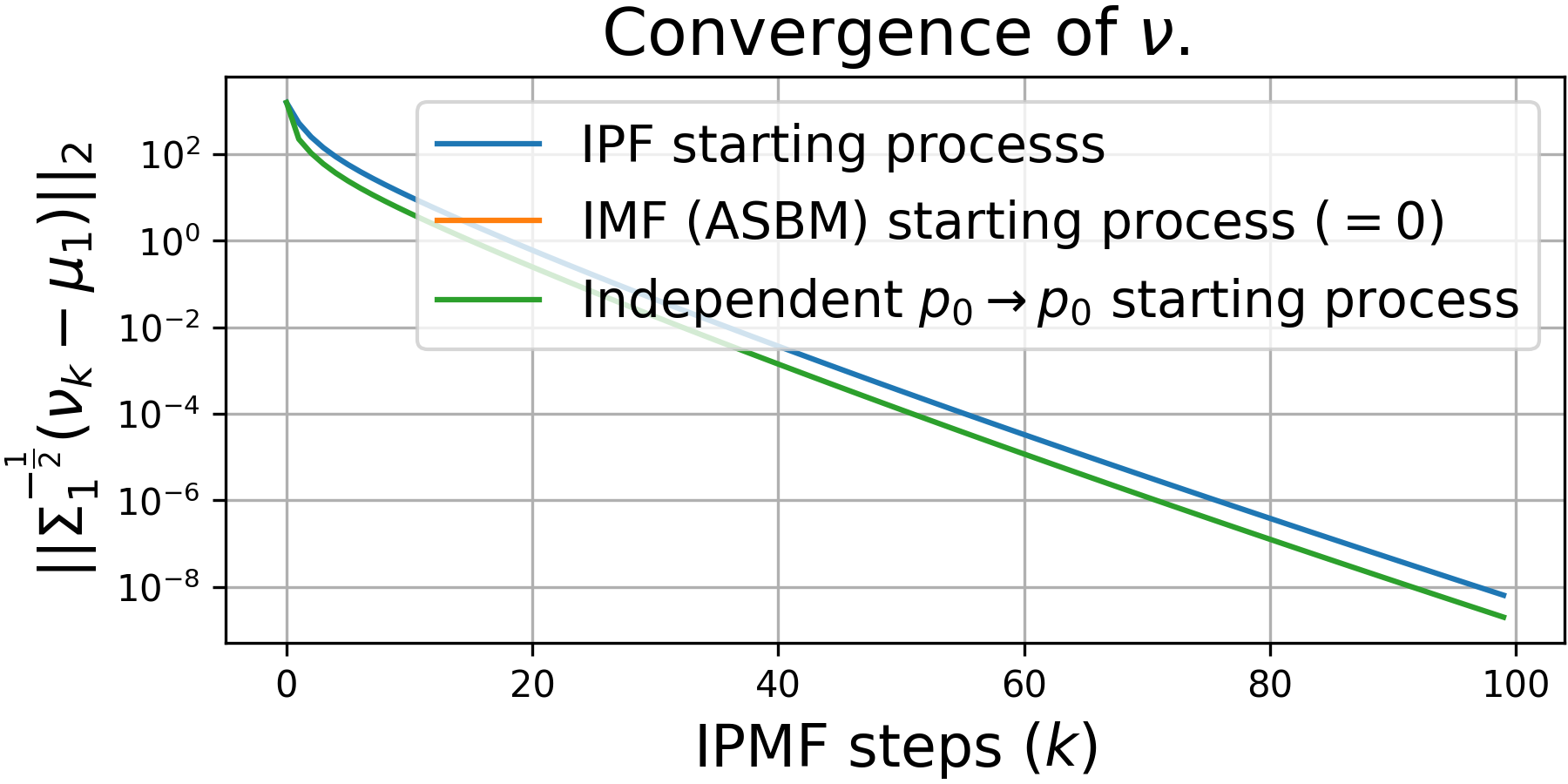}
\caption{\centering  Convergence of $\nu$. }\label{fig:mean_conv}
\vspace{-2mm}
\end{subfigure}
\hfill\begin{subfigure}[b]{0.325\linewidth}
\centering
\includegraphics[width=0.995\linewidth]{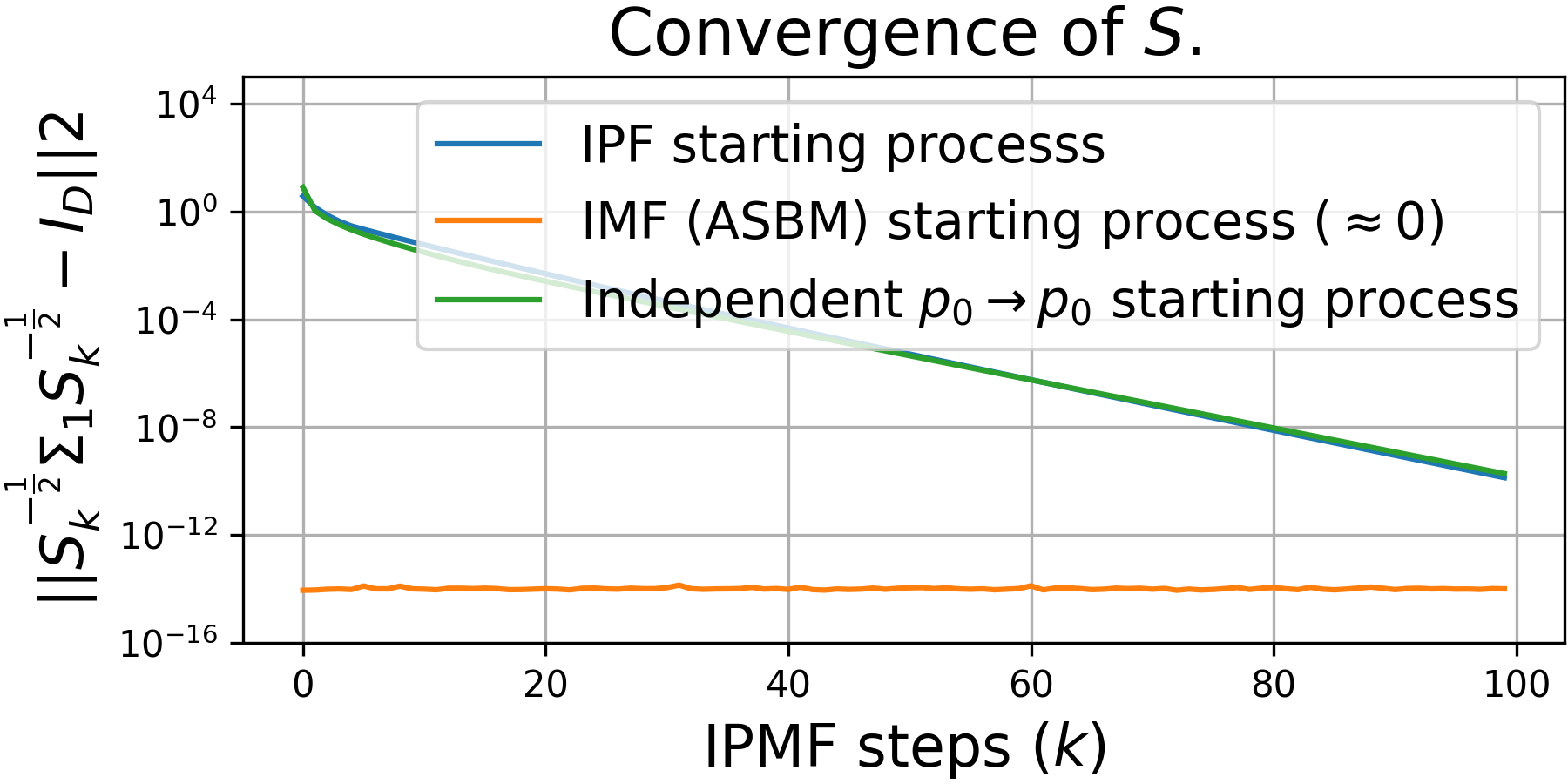}
\caption{\centering Convergence of $S$. }\label{fig:sigma_conv}
\vspace{-2mm}
\end{subfigure}
\vspace{0.5mm} \caption{\centering Convergence of IPMF procedure with different starting process $q^0$.}
\label{fig:ipmf-gauss}
\vspace{-7mm}
\end{figure*}

\textbf{Experimental setups.} We consider multivariate Gaussian distributions for which we have closed-form IPMF update formulas, an illustrative 2D example, the Schr\"{o}dinger Bridges Benchmark \citep{gushchin2023building} and real-life image data distributions, i.e., the colored MNIST dataset and the CelebA dataset \citep{liu2015faceattributes}. \underline{All technical details} can be found in the Appx.~\ref{appx:exp_details}. 

\textbf{Starting processes}. We focus on running the IPMF procedure from various initializations, referred to as \textit{starting processes}. The starting processes are constructed by selecting different couplings $q^0(x_0, x_1)$ and incorporating the Brownian Bridge process $W^{\epsilon}_{|x_0, x_1}$ 
(i.e., $W^{\epsilon}$ conditioned on $x_0, x_1$). In the discrete-time setup, for each selected coupling $q^0(x_0, x_1)$ we construct the starting process as $q^0(x_0, x_{\text{in}}, x_1) = q^0(x_0, x_1)p^{W^{\epsilon}}(x_{\text{in}}|x_0, x_1)$
and $T^0=\int W^\epsilon_{|x_0, x_1} d q^0(x_0, x_1)$ for the continuous-time case (see Appx.~\ref{app:continous-time-setup}). 
We consider three ``starting'' scenarios: IMF-like starting process of the form $q^0(x_0, x_1) = p_0(x_0)p_1(x_1)$, IPF-like starting process of the form $q^0(x_0, x_1) = p_0(x_0)p^{W^{\epsilon}}(x_1|x_0)$, and various starting processes which cannot be used to initialize IMF or IPF. The latter demonstrates that IPMF converges under a more general setting.

The results of DSBM and ASBM with different starting processes are denoted as \underline{(D/A)}SBM-\underline{*coupling*}, e.g., DSBM-IMF for DSBM with IMF as the starting process.

\textbf{Remark.} Notably, in practice, both the IPF and IMF procedures can be recovered through different implementations. For example, IPF can be realized through (D/A)SBM with the IPF starting coupling, or alternatively via DSB \citep{de2021diffusion}. IMF, in turn, can be implemented using (D/A)SBM with a one-directional parametrization. However, in practice, matching-based methods exhibit superior performance \citep{shi2023diffusion}. Furthermore, the authors of \citep{shi2023diffusion, peluchetti2023diffusion, gushchin2024adversarial} observed that bidirectional IMF does not accumulate approximation errors, whereas relying solely on one direction parametrization leads to error accumulation and eventual divergence \citep[Appx. I]{de2024schr}. Therefore, we argue that a direct comparison between the IPMF procedure and previous practical implementations is unnecessary.

\vspace{-2.5mm}
\subsection{High Dimensional Gaussians}\label{sec:high_dim_gaussians_exp}
\vspace{-2mm}

This section experimentally validates the convergence of IPMF for the multivariate Gaussians (see our \textbf{General conjecture}, \wasyparagraph\ref{sec:1d-gaussians}). We use explicit formulas for the discrete IPF and IMF \citep[Thm. 3.8]{gushchin2024adversarial} and follow the setup from \citep[Sec. 5.2]{gushchin2023entropic}. Specifically, we consider the Schrödinger Bridge (SB) problem with $D = 128$ and $\epsilon = 0.3$, where the marginal distributions are Gaussian: $p_0 = \mathcal{N}(\mathbf{0}, \Sigma_0)$ and $p_1 = \mathcal{N}(\mathbf{3}, \Sigma_1)$, with $\mathbf{0} \in \mathbb{R}^D$ denoting the vector of all zeros and $\mathbf{3} \in \mathbb{R}^D$ - the vector of all threes. The eigenvectors of $\Sigma_0$ and $\Sigma_1$ are sampled from the uniform distribution on the unit sphere. Their eigenvalues are sampled from the loguniform distribution on $[-\log 2, \log2]$. 
We choose $N=3$ intermediate time points uniformly between $t=0$ and $t=1$ and run 100 steps of the IPMF procedure, each consisting of two IPF projections and two Markovian--Reciprocal projections (see \wasyparagraph\ref{sec:bi_imf_is_ipmf}). Denote as $q^{4k} = q^{4k}(x_0, x_1)$ the IPMF output at the $k$-th step and let $q^* = q^*(x_0, x_1)$ be the solution of the static SB. 
 Fig.~\ref{fig:ipmf-gauss} shows that both the forward $\KL{q^*}{q^{4k}}$ and reverse $\text{KL}(q^{4k}\|q^*)$ divergences converge. The quantities from \eqref{eq: optimality convergence} converge to zero exponentially, as expected. Note that particular starting processes can cause an immediate match of the parameters: an IMF starting process that already has the required marginals converges only in optimality, while IPF converges to the true marginals with the unchanged optimality matrix.



\vspace{-2.5mm}
\subsection{Illustrative 2D example}\label{sec:2d_exp}
\vspace{-2.5mm}
We consider the SB problem with $\epsilon=0.1$, $p_0$ being a Gaussian distribution on $\mathbb{R}^2$ and $p_1$ being the Swiss roll. We train DSBM and ASBM algorithms using IMF and IPF starting processes. Additionally, we consider \textit{Identity} starting processes induced by $q^0(x_0, x_1) = p_0(x_0)\delta_{x_0}(x_1)$, i.e. we set $x_1 = x_0$ after sampling $x_0 \sim p(x_0)$. The purpose of testing the Identity coupling is to verify that IPMF converges even when initialized with a naive coupling. Furthermore, we hypothesize that this coupling is the best in terms of the optimality property. We present the starting processes and \underline{the results} in Fig.~\ref{fig:swiss_roll} in Appendix~\ref{appx:exp_details}. In all the cases, we observe similar results.

\vspace{-2.5mm}
\subsection{Evaluation on the SB Benchmark}\label{sec:bench}
\vspace{-2.5mm}

We use the SB mixtures benchmark \citep{gushchin2023building} with the ground truth solution to the SB problem to test ASBM and DSBM with IMF, IPF, and \textit{Identity} (\wasyparagraph\ref{sec:2d_exp}) as the starting processes.
The benchmark provides continuous distribution pairs $p_0$, $p_1$ for dimensions $D \in \{2, 16, 64, 128\}$ that have known SB solutions for volatility $\epsilon \in \{0.1, 1, 10\}$. To evaluate the quality of the recovered SB solutions, we use the $\text{cB}\mathbb{W}_{2}^{2}\text{-UVP}$ metric \citep{gushchin2023building}. Tab.~\ref{table-cbwuvp-benchmark} provides the results. We also include the results of the standard baseline for Bridge Matching tasks called $\text{SF}^2$M-Sink \citep{gushchin2023building}. We provide \underline{training details and additional results} in Appx.~\ref{appx:add_exp}. All starting processes yield similar results within each solver type (DSBM or ASBM).

\begin{table*}[!t]
\vspace{-10mm}
\resizebox{\linewidth}{!}{%
\begin{tabular}{rrcccccccccccc}
\toprule
& & \multicolumn{4}{c}{$\epsilon=0.1$} & \multicolumn{4}{c}{$\epsilon=1$} & \multicolumn{4}{c}{$\epsilon=10$}\\
\cmidrule(lr){3-6} \cmidrule(lr){7-10} \cmidrule(l){11-14}
& Algorithm Type & {$D\!=\!2$} & {$D\!=\!16$} & {$D\!=\!64$} & {$D\!=\!128$} & {$D\!=\!2$} & {$D\!=\!16$} & {$D\!=\!64$} & {$D\!=\!128$} & {$D\!=\!2$} & {$D\!=\!16$} & {$D\!=\!64$} & {$D\!=\!128$} \\
\midrule  
Best algorithm on benchmark$^\dagger$ & Varies &  $1.94$  &  $13.67$  &  $11.74$  &  $11.4$  &  $1.04$ &  $9.08$ &   $18.05$  &  $15.23$  &  $1.40$  &  $1.27$  &  $2.36$  &  $\mathbf{1.31}$ \\
\midrule 
DSBM-IMF & \multirow{6}{*}{IPMF} & $1.21$ & $4.61$ & $9.81$ & $19.8$ & $0.68$ & $\mathbf{0.63}$ & $5.8$ & $29.5$ & $0.23$ & $5.45$ & $68.9$ & $362$ \\ 

DSBM-IPF & & $2.55$ & $17.4$ & $15.85$ & $17.45$ & $0.29$ & $0.76$ & $\mathbf{4.05}$ & $29.59$ & $0.35$ & $3.98$ & $83.2$ & $210$ \\   

DSBM-\textit{Identity} &  & $1.23$ & $18.86$ & $24.71$ & $21.39$ & $0.26$ & $0.69$ & $7.46$ & $29.5$ & $0.13$ & $3.99$ & $88.2$ & $347$ \\   


ASBM-IMF$^\dagger$ & & $0.89$ & $8.2$ & $13.5$ & $53.7$ & $0.19$ & $1.6$ & $5.8$ & $\mathbf{10.5}$ & $0.13$ & $0.4$ & $1.9$ & $4.7$ \\

ASBM-IPF & & $3.06$& $14.37$& $44.35$& $32.5$& $\mathbf{0.18}$& $1.68$ & $9.25$& $20.47$& $0.13$& $0.36$& $2.28$& $4.97$\\   

ASBM-\textit{Identity} &  & $0.58$& $24.9$& $29.1$& $85.2$& $0.19$& $2.44$ & $8.28$& $11.61$& $\mathbf{0.12}$& $\mathbf{0.35}$& $\mathbf{1.66}$& $\mathbf{2.86}$\\   
\midrule


$\text{SF}^2$M-Sink$^\dagger$ & Bridge Matching & $\mathbf{0.54}$ & $\mathbf{3.7}$ & $\mathbf{9.5}$ & $\mathbf{10.9}$ & $0.2$ & $1.1$ & $9$ & $23$ & $0.31$ & $4.9$ & $319$ & $819$ \\  

\bottomrule
\end{tabular}}
\vspace{-2mm}
\captionsetup{justification=centering, font=scriptsize}
\caption{Comparisons of $\text{cB}\mathbb{W}_{2}^{2}\text{-UVP}\downarrow$ (\%) between the static SB solution $q^*(x_0,x_1)$ and the learned solution on the SB benchmark. \\ The best metric is \textbf{bolded}. Results marked with $\dagger$ are taken from \citep{gushchin2024adversarial} and \citep{gushchin2023building}. The results of DSBM and ASBM algorithms starting from different starting processes are denoted as \underline{(D/A)}SBM-\underline{*name of starting process*}}
\label{table-cbwuvp-benchmark}
\vspace{-4mm}
\end{table*}

\begin{figure*}[!t]
\captionsetup[subfigure]{font=scriptsize}
\begin{subfigure}[b]{0.09 \linewidth}
\centering
\includegraphics[width=0.635\linewidth]{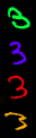}
\caption{\centering ${x\sim p_0}$}
\vspace{-1mm}
\end{subfigure}
\vspace{-1mm}\hfill\begin{subfigure}[b]{0.22\linewidth}
\centering
\includegraphics[width=0.995\linewidth]{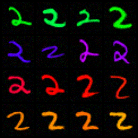}
\caption{\centering DSBM-IMF}
\vspace{-1mm}
\end{subfigure}
\vspace{-1mm}\hfill\begin{subfigure}[b]{0.22\linewidth}
\centering
\includegraphics[width=0.995\linewidth]{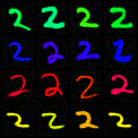}
\caption{\centering DSBM-\textit{Inverted 7} }
\vspace{-1mm}
\end{subfigure}
\vspace{-1mm}\hfill\begin{subfigure}[b]{0.22\linewidth}
\centering
\includegraphics[width=0.995\linewidth]{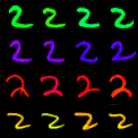}
\caption{\centering ASBM-IMF}
\vspace{-1mm}
\end{subfigure}
\hfill\begin{subfigure}[b]{0.22\linewidth}
\centering
\includegraphics[width=0.995\linewidth]{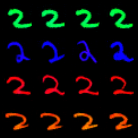}
\caption{\centering ASBM-\textit{Inverted 7}}
\vspace{-1mm}
\end{subfigure}
\vspace{1mm} \caption{\centering\small Samples from DSBM and ASBM learned with IPMF using IMF and $q^{\text{inv} 7}$ starting processes on Colored MNIST \textit{3}$\rightarrow$\textit{2} ($32\times 32$) translation for $\epsilon=10$.}
\label{fig:c_mnist_ipmf}
\vspace{-6mm}
\end{figure*}


\vspace{-2.5mm}
\subsection{Unpaired Image-to-Image translation}\label{sec:unpaired-i2i}
\vspace{-2.5mm}
To test IPMF on real data, we consider two unpaired image-to-image translation setups: \textit{colorized 3} $\rightarrow$ \textit{colorized 2} digits from the MNIST dataset with 32$\times$32 resolution size and \textit{male}$\rightarrow$\textit{female} faces from the CelebA dataset with  64$\times$64 resolution size.

\textbf{Colored MNIST}. \label{sec:cmnist_exp_main} We construct train and test sets by RGB colorization of MNIST digits from corresponding train and test sets of classes ``2'' and ``3''. We train ASBM and DSBM algorithms starting from the IMF process. Additionally, we test a starting process induced by the independent coupling of the distribution of colored digits of class ``3'' ($p_0$) and the distribution of colored digits of class ``7'' with inverted RGB channels ($p^{\text{inv} 7}(x_1)$). We refer to this process as \textit{Inverted 7}, i.e., $q^0(x_0, x_1)=p_0(x_0)p^{\text{inv} 7}(x_1)$ (see Fig.~\ref{fig:inv_seven_recip}).
Appx.~\ref{appx:exp_details} contains \underline{further technical details}. We learn DSBM and ASBM on the \textit{train} set of digits and visualize the translated \textit{test} images (Fig.~\ref{fig:c_mnist_ipmf}).

\begin{wrapfigure}{R}[5mm]{.56\linewidth}
\vspace{-5mm}
\captionsetup[subfigure]{font=scriptsize}
\begin{subfigure}[b]{0.15\linewidth}
    \centering
    \includegraphics[width=0.765\linewidth]{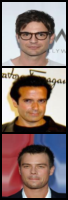}
    \caption{\centering ${x_0\!\sim\!p_0\!\!\!}$}
\end{subfigure}
\hspace{-2mm}
\begin{subfigure}[b]{0.355\linewidth}
    \centering
    \includegraphics[width=0.95\linewidth]{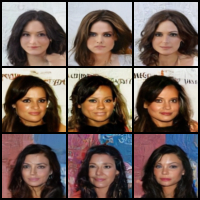}
    \caption{\centering DSBM-IMF-OT}
\end{subfigure}
\hspace{-1mm}
\begin{subfigure}[b]{0.355\linewidth}
    \centering
    \includegraphics[width=0.95\linewidth]{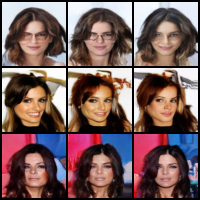}
    \caption{\centering ASBM-IMF-OT} 
\end{subfigure}
\hspace{-2mm}
\begin{subfigure}[b]{0.15\linewidth}
    \centering
    \includegraphics[width=0.765\linewidth]{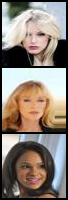}
    \caption{\centering ${x_1\!\sim\!q^0\!}$}
\end{subfigure}


\begin{subfigure}[b]{0.15\linewidth}
    \centering
    \includegraphics[width=0.765\linewidth]{pics/celeba/main_text_3x3/input_3.png}
    \caption{\centering ${\!x_0\!\sim\!p_0\!\!}$}
\end{subfigure}
\hspace{-2mm}
\begin{subfigure}[b]{0.355\linewidth}
    \centering
    \includegraphics[width=0.95\linewidth]{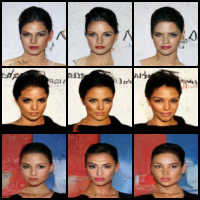}
    \caption{\centering DSBM-DDPM SDEdit}
\end{subfigure}
\hspace{-1mm}
\begin{subfigure}[b]{0.355\linewidth}
    \centering
    \includegraphics[width=0.95\linewidth]{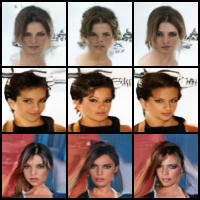}
    \caption{\centering ASBM-DDPM SDEdit}
\end{subfigure}
\hspace{-2mm}
\begin{subfigure}[b]{0.15\linewidth}
    \centering
    \includegraphics[width=0.765\linewidth]{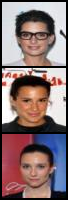}
    \caption{\centering ${x_1\!\sim\!q^0\!}$}
\end{subfigure}


\begin{subfigure}[b]{0.15\linewidth}
    \centering
    \includegraphics[width=0.765\linewidth]{pics/celeba/main_text_3x3/input_3.png}
    \caption{\centering ${x_0\!\sim\!p_0\!\!\!}$}
\end{subfigure}
\hspace{-2mm}
\begin{subfigure}[b]{0.355\linewidth}
    \centering
    \includegraphics[width=0.95\linewidth]{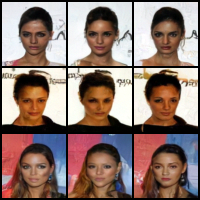}
    \caption{\centering DSBM-SD SDEdit}
\end{subfigure}
\hspace{-1mm}
\begin{subfigure}[b]{0.355\linewidth}
    \centering
    \includegraphics[width=0.95\linewidth]{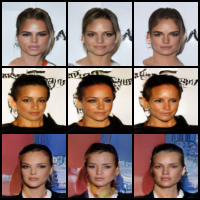}
    \caption{\centering ASBM-SD SDEdit}
\end{subfigure}
\hspace{-2mm}
\begin{subfigure}[b]{0.15\linewidth}
    \centering
    \includegraphics[width=0.765\linewidth]{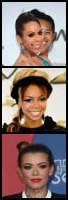}
    \caption{\centering ${x_1\!\sim\!q^0}$}
\end{subfigure}


\begin{subfigure}[b]{0.15\linewidth}
    \centering
    \includegraphics[width=0.765\linewidth]{pics/celeba/main_text_3x3/input_3.png}
    \caption{\centering ${x_0\!\sim\!p_0\!\!\!}$}
\end{subfigure}
\hspace{-2mm}
\begin{subfigure}[b]{0.355\linewidth}
    \centering
    \includegraphics[width=0.95\linewidth]{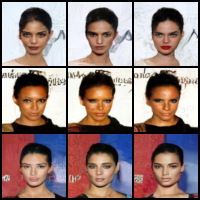}
    \caption{\centering DSBM-\textit{Identity}}
\end{subfigure}
\hspace{-1mm}
\begin{subfigure}[b]{0.355\linewidth}
    \centering
    \includegraphics[width=0.95\linewidth]{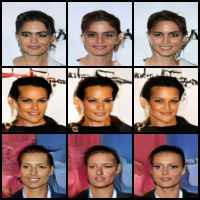}
    \caption{\centering ASBM-\textit{Identity}}
\end{subfigure}
\hspace{-2mm}
\begin{subfigure}[b]{0.15\linewidth}
    \centering
    \includegraphics[width=0.765\linewidth]{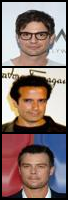}
    \caption{\centering ${x_1\!\sim\!q^0\!}$}
\end{subfigure}

\centering
\vspace{-2mm}
\caption{\centering \small Results of CelebA at 64$\times$64 size for \textit{male}$\rightarrow$\textit{female} translation learned with ASBM and DSBM using various starting processes for $\epsilon=1$. Samples $x_0 \sim p_0$ are samples from the source marginal. Samples $x_1 \sim q^0$ are samples from the initialization coupling $q^0(x_1|x_0)$ for a given $x_0$ from $p_0$.}
\label{fig:celeba_64_samples_ASBM_DSBM}
\vspace{-9mm}
\end{wrapfigure}

Both DSBM and ASBM algorithms starting from both IMF and \textit{Inverted 7} starting process fit the target distribution of colored MNIST digits of class ``2'' and preserve the color of the input image during translation. This supports that the limiting behavior of IPMF resembles the solution of SB.


\textbf{CelebA.} \label{sec:exp_celeba} 
We consider the IMF-OT variation of the IMF starting process. It is induced by a mini-batch optimal transport coupling $q^{\text{OT}}(x_0, x_1)$ \citep{tong2024simulation, pooladian2023multisample}. We also test \textit{Identity} (\wasyparagraph\ref{sec:2d_exp}) starting process. 
Additionally, we test starting processes induced by \textit{DDPM SDEdit} and \textit{SD SDEdit} couplings, which is the SDEdit method \citep{meng2021sdedit} used for \textit{male}$\rightarrow$\textit{female} translation with \textbf{(1)} DDPM \citep{ho2020denoising} model trained on the female part of CelebA and \textbf{(2)} \underline{Stable Diffusion v1.5} \citep{rombach2022high} with designed \underline{text prompt}, see Appx.~\ref{appx:celeba_add_exp}. The aim of introducing such a coupling is to test the hypothesis that well-designed SDEdit couplings can improve the metrics of both properties.
We use approximately the same number of parameters for the DSBM and the ASBM generators and 10\% of images for evaluation (see Appx.~\ref{appx:exp_details}, \underline{other details}).


\begin{table}[!t]
    \vspace{-7mm}
    \centering
    \scriptsize
    \setlength{\tabcolsep}{3pt}
    \resizebox{\linewidth}{!}{%
    \begin{tblr}{
        colspec = {r|cccc|cccc|cccc},
        colsep = 2pt,
        stretch = 0.9
    }
        \toprule
        & \SetCell[c=4]{c,m}{Initialisation (coupling)}
        & & & &
        \SetCell[c=4]{c,m}{DSBM}
        & & & &
        \SetCell[c=4]{c,m}{ASBM} \\
        \cmidrule{2-5} \cmidrule{6-9} \cmidrule{10-13}
        &
        \shortstack{IMF\\{}} &
        \shortstack{DDPM\\SDEdit} &
        \shortstack{SD\\SDEdit} &
        \shortstack{Identity\\{}} &
        \shortstack{IMF\\{}} &
        \shortstack{DDPM\\SDEdit} &
        \shortstack{SD\\SDEdit} &
        \shortstack{Identity\\{}} &
        \shortstack{IMF\\{}} &
        \shortstack{DDPM\\SDEdit} &
        \shortstack{SD\\SDEdit} &
        \shortstack{Identity\\{}} \\
        \midrule
        FID$\downarrow$ 
        & 0.0 & 35.23 & 28.77 & 61.56
        & \textbf{13.65} & \underline{14.84} & 22.65 & 33.11
        & \textbf{19.32} & 21.84 & 20.64 & \underline{19.58} \\
        MSE$(x_0, \widehat{x}_1)$$\downarrow$ 
        & 0.16 & 0.02 & 0.02 & 0.0
        & 0.16 & 0.09 & \underline{0.04} & \textbf{0.03}
        & 0.17 & \textbf{0.07} & 0.08 & \underline{0.07} \\
        \bottomrule
    \end{tblr}}
    \caption{\small \centering 
        Qualitative results on CelebA ($64\times64$) for \textit{male}$\rightarrow$\textit{female} translation with ASBM and DSBM across different starting processes. Generative quality (FID$\downarrow$) and similarity (MSE$(x_0,\widehat{x}_1)$$\downarrow$) are reported on the test set. Best and second-best values for solvers are marked in \textbf{bold} and \underline{underline}, respectively. 
    }
    \label{tab:ipmf-results-main}
    \vspace{-8mm}
\end{table}

We provide qualitative results in Fig.~\ref{fig:celeba_64_samples_ASBM_DSBM}. Additionally, we report the final FID score (generation quality) and the Mean Squared Error (MSE) between the input $x_0$ and the translated image $\widehat{x}_1$ (input-output similarity) in Table~\ref{tab:ipmf-results-main}. Figure~\ref{fig:celeba_64_samples_ASBM_DSBM} illustrates that the models (1) converge to the target distribution and (2) preserve semantic alignment between input and output (e.g., hair color, background). Despite this, their outputs differ due to the influence of initialization on optimization trajectories. For DSBM, our couplings (SD SDEdit, DDPM SDEdit, \textit{Identity}) maintain generation quality while greatly improving similarity. For ASBM, they boost similarity but slightly reduce quality. Results with Identity couplings support our hypothesis (\wasyparagraph\ref{sec:2d_exp}), whereas experiments with SDEdit offer only partial validation and yield moderate FID. 

Furthermore, in Figure~\ref{fig:fid_l2_plots_ipmf} we present a quantitative study of IPMF convergence, reporting FID and the Mean Squared Error (MSE) between the inputs and the translated outputs as functions of the IPMF iteration. Both metrics are computed on the CelebA \textit{male}$\rightarrow$\textit{female} (64 $\times$ 64) test set. We observe a consistent pattern: the higher the similarity or generation quality of the coupling, the better the model performs on the corresponding metric. For \underline{additional quantitative} \underline{results} on CelebA $64\times64$, we refer the reader to Appendix~\ref{app:additional_celeba_results}.
\begin{figure}[!t]
\centering
\vspace{-8mm}
\begin{subfigure}[b]{0.34\linewidth}
    \includegraphics[width=0.99\linewidth]{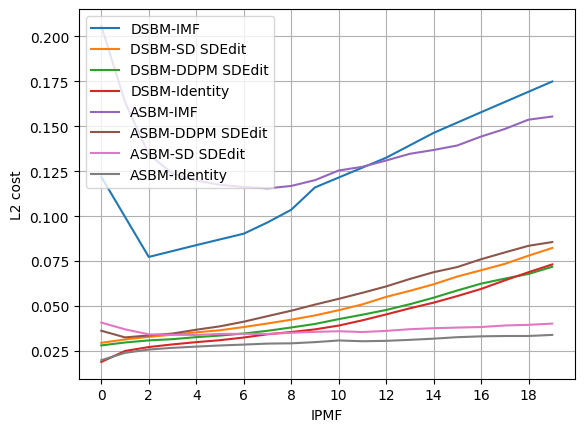}
    \caption{\centering MSE($x_0$, $\hat{x}_1$) for (D/A)SBM}
\end{subfigure}
\hspace{-3mm}
\begin{subfigure}[b]{0.34\linewidth}
    \centering
    \includegraphics[width=0.95\linewidth]{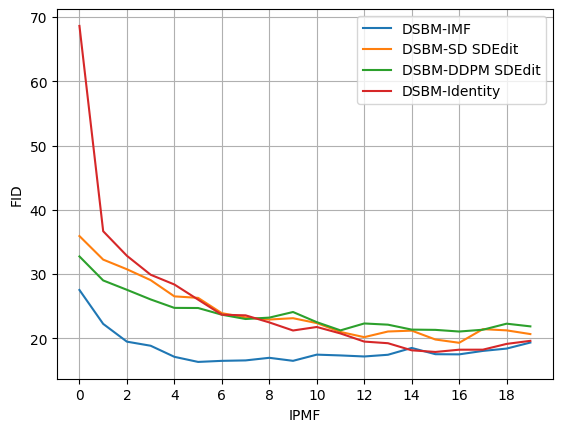}
    \caption{\centering FID for DSBM }
\end{subfigure}
\hspace{-4mm}
\begin{subfigure}[b]{0.34\linewidth}
    \centering
    \includegraphics[width=0.95\linewidth]{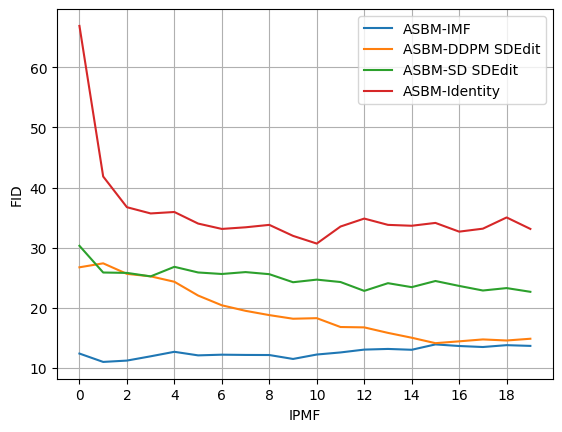}
    \caption{ \centering FID for ASBM }
\end{subfigure}

\caption{\centering Test metrics in CelebA \textit{male}$\rightarrow$\textit{female} (64 $\times$ 64) as a function of IPMF iteration for various starting couplings. }
\label{fig:fid_l2_plots_ipmf}
\vspace{-3mm}
\end{figure}

\begin{figure*}[t]
\vspace{-1mm}
\centering
\captionsetup[subfigure]{font=scriptsize}
\begin{subfigure}{\linewidth}
    \centering
    \includegraphics[width=0.75\linewidth]{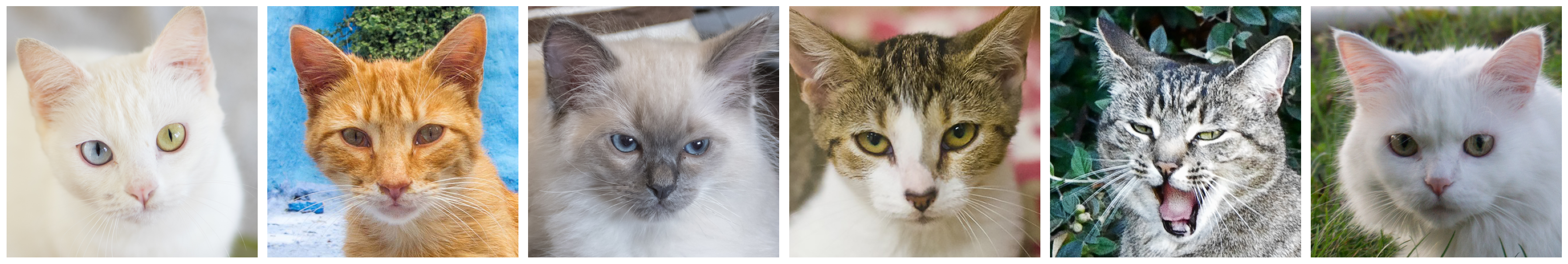}
    \caption{\centering Input AFHQ cat images}
\end{subfigure}
\vspace{-0mm}
\begin{subfigure}{\linewidth}
    \centering
    \includegraphics[width=0.75\linewidth]{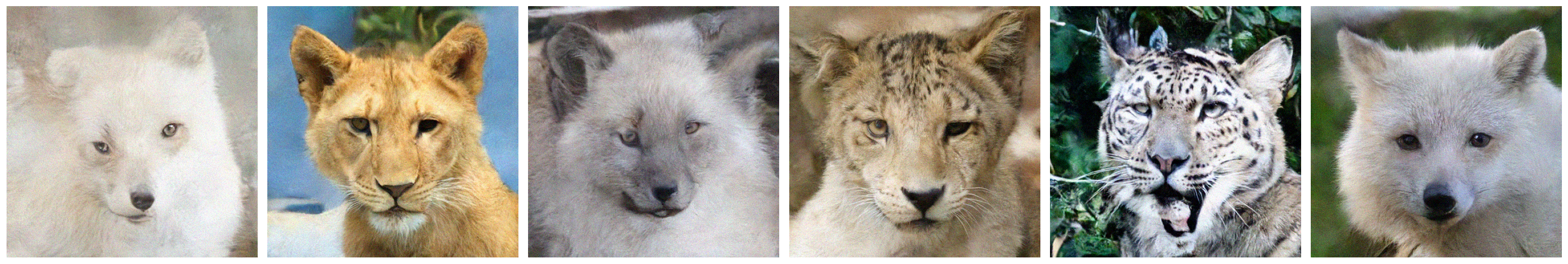}
    \caption{\centering DSBM-IMF-OT translation results.}
\end{subfigure}
\vspace{-0mm}
\begin{subfigure}{\linewidth}
    \centering
    \includegraphics[width=0.75\linewidth]{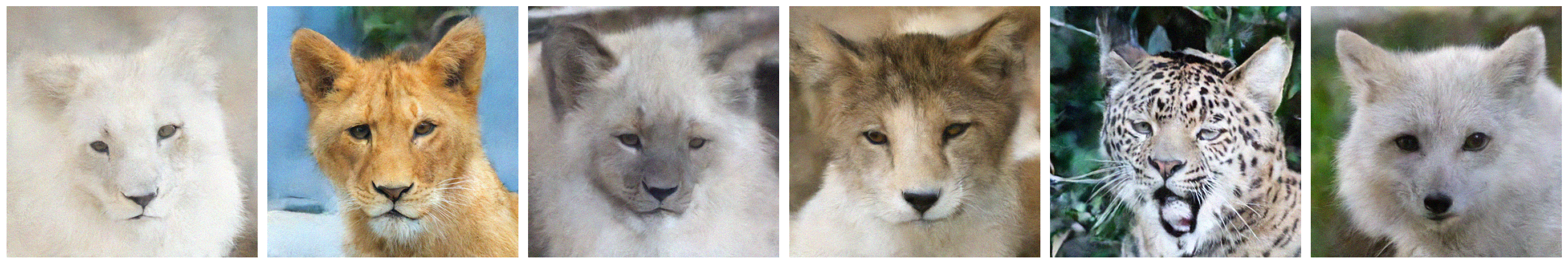}
    \caption{\centering DSBM-\textit{Identity} translation results.}
\end{subfigure}
\vspace{-5mm}
\caption{\small \centering \small Results of AFHQ at 512$\times$512 size for \textit{cat}$\rightarrow$\textit{wild} translation learned with DSBM using various starting processes for $\epsilon=1$.} \label{fig:afhq-images}
\vspace{-7mm}
\end{figure*}




\begin{wrapfigure}{R}{0.45\linewidth}
\captionsetup{type=table}
\vspace{-2mm}
\centering
\small
\setlength{\tabcolsep}{4pt}
\renewcommand{\arraystretch}{0.9}
\vspace{-2mm}
\begin{tabular}{c|ccc}
\toprule
Coupling & FID$\downarrow$ & MSE$\downarrow$ & CMMD$\downarrow$ \\
\midrule
DSBM-IMF-OT & \textbf{53.42} & 0.085 & \textbf{0.591} \\
DSBM-\textit{Identity}  & 65.19 & \textbf{0.054} & 0.731 \\
\bottomrule
\end{tabular}
\caption{\small Results of AFHQ at 512$\times$512 size for \textit{cat}$\rightarrow$\textit{wild} translation learned with DSBM using various starting processes for $\epsilon=1$.}
\vspace{-3mm}
\end{wrapfigure}

\textbf{AFHQ.} For AFHQ \citep{choi2020stargan}, we consider classes cat and wild with $512 \times 512$  resolution images. Each class contains approximately 5000 samples. We run DSBM with IMF-OT and Identity couplings and present the results in Figure~\ref{fig:afhq-images} and Table~\ref{app:afhq-details}. We observe similar quality-similarity tradeoff as Celeba setup. We provide \underline{technical details} for this setup in Appendix~\ref{app:afhq-details}.

For \underline{a broader discussion} of the potential implications and limitations of this work, see Appendix~\ref{appx:discussion}.

\vspace{-3mm}
\section{Broader impact}\label{sec:broader_impact}
\vspace{-3mm}
This paper presents work whose goal is to advance the field of Artificial Intelligence, Machine Learning and Generative Modeling. There are many potential societal consequences of our work, none which we feel must be specifically highlighted here.

\vspace{-2mm}
\paragraph{Acknowledgements.} We thank Kirill Sokolov for the careful feedback and support in refining the proofs. The work was supported by the grant for research centers in the field of AI provided by the Ministry of Economic Development of the Russian Federation in accordance with the agreement 000000C313925P4F0002 and the agreement №139-10-2025-033.

\newpage
\vspace{-3mm}
\section{LLM Usage} 
\vspace{-3mm}
Large Language Models (LLMs) were used only to assist with rephrasing sentences and improving the clarity of the text. All scientific content, results, and interpretations in this paper were developed solely by the authors.

\bibliography{iclr2026_conference}
\bibliographystyle{iclr2026_conference}

\appendix
\onecolumn
\startcontents[apx]

\printcontents[apx]{l}{1}{\section*{Appendix Contents}}

\section{Discussion}
\label{appx:discussion}

\textbf{Potential impact.} The IPMF  procedure demonstrates a potential to overcome the error accumulation problem observed in distillation methods---such as rectified flows \citep{liu2022flow, liu2023instaflow}---which are used to accelerate foundational image generation models like StableDiffusion 3 in \citep{esser2024scaling}. These distillation methods are the limit of one-directional IMF procedure with $\epsilon \rightarrow 0$. The one-directional version accumulates errors, which may lead to the divergence \citep[Appx. I]{de2024schr}.
The use of the bidirectional version (with $\epsilon > 0$) should correct the marginals and make diffusion trajectories straighter to accelerate the inference of diffusion models. We believe that considering such distillation techniques from the IPMF perspective may help to overcome the current limitations of these techniques.

Another potential impact of our contribution is the advancement of multi-marginal SB methods. This direction has been explored only rarely in the literature \citep{chen2019multi, chen2023deep, shen2025multi, howard2025schr, lavenant2021towards, theodoropoulos2025momentummultimarginalschrodingerbridge}, mainly because the multi-marginal case is inherently difficult: it requires solving multiple two-marginal (classical) SB instances. A notable examples are \citep{howard2025schr, theodoropoulos2025momentummultimarginalschrodingerbridge}, which extends the IMF procedure to the multi-marginal setting. Within this context, our framework provides a way to select a suitable starting coupling for initialization, thereby offering a potential route to reducing the training burden. In this sense, our contribution may encourage more systematic and deeper analysis of multi-marginal SB.

\textbf{Limitations}. 
While we show the proof of exponential convergence of the IPMF procedure in the Gaussian case in various settings, and present a wide set of experiments supporting this procedure, the proof of its convergence in the general case still remains a promising avenue for future work.

\begin{table}[t]
    \centering
    \scriptsize
    \begin{tblr}{
      colspec = {
        c
        Q[m,0.24\textwidth]
        Q[m,0.33\textwidth]
        Q[m]
      },
      row{1} = {halign=c}
    }
        \toprule
        \textbf{Method} &
        \textbf{View of bidirectional procedure} &
        \textbf{Convergence guarantees} &
        \textbf{Starting coupling} \\
        \midrule
        \shortstack{DSBM \\ \citep{shi2023diffusion}} &
        A heuristic approach for mitigating error accumulation &
        Only for one-directional continuous-time procedure with IMF and IPF starting couplings &
        Only IMF and IPF \\
        \midrule 
        \shortstack{ASBM \\ \citep{gushchin2024adversarial}} &
        A heuristic approach for mitigating error accumulation &
        Only for one-directional discrete-time procedure with IMF starting couplings &
        Only IMF \\
        \midrule 
        \shortstack{IPMF \\ \textbf{(our work)}} &
        A theoretically grounded approach for mitigating error accumulation and managing the trade-off between input–output similarity and generative quality &
        For Gaussian marginals in discrete and continuous time (Theorem \ref{thm-main multi}), and convergence for bounded-support distributions in discrete and continuous time (Theorem \ref{thm:ipmf_convergence}). &
        Arbitrary \\
        \bottomrule
    \end{tblr}
    \caption{Positioning of our IPMF framework relative to prior bidirectional SB heuristics.}
    \label{tab:ipmf_positioning}
\end{table}

\section{Motivation for SB over Foundational Models}
\label{app:motivation_unpaired_setup}
At first glance, one might consider foundational models as potential baselines, since translation via large text-to-image models trained on extensive image corpora may work adequately on the benchmark datasets we consider (CelebA, MNIST). However, they do not constitute a relevant baseline for the unpaired translation task, because their training data may lack the domain-specific examples required. In contrast, methods for solving the unpaired translation (including the SB) are designed to address domain‑specific tasks across various scientific fields \citep{schneider2022whole, singh2024data, shi2023diffusion}. Moreover, these methods successfully address non-image-related downstream tasks such as single-cell data analysis \citep[Section 6]{tong2024simulation}, where large text-to-image models are just irrelevant.

\section{Continuous-time Schrödinger Bridge Setup}\label{app:continous-time-setup}

For considering the continuous version of Schrödinger Bridge we denote by $\mathcal{P}(C([0, 1]), \mathbb{R}^{D})$ the set of continuous stochastic processes with time $t \in [0,1]$, i.e., the set of distributions on continuous trajectories $f: [0, 1] \rightarrow \mathbb{R}^{D}$. We use $dW_t$ to denote the differential of the standard Wiener process. We denote by $p^{T}\in \mathcal{P}(\mathbb{R}^{D\times(N+2)})$ the discrete process which is the finite-dimensional projection of $T$ to time moments $0=t_{0}<t_{1}<\dots<t_{N} < t_{N+1}=1$.

\subsection{Schrödinger Bridge (SB) Problem in continuous-time}\label{app:continuous-sb}

This section covers the continuous-time formulation of SB as its IPF and IMF procedures.
First, we introduce several new notations to better align the continuous version with the discrete-time version considered in the main text. Consider the Markovian process $T$ defined by the corresponding forward or backward (time-reversed) SDEs:
\begin{gather}
    T: dx_t = v^{+}(x_t, t)dt + \sqrt{\epsilon}dW_t^{+}, \quad x_0 \sim p_0(x_0),
    \nonumber
    \\
    T: dx_t = v^{-}(x_t, t)dt + \sqrt{\epsilon}dW_t^{-}, \quad x_1 \sim p_1(x_1),
    \nonumber
\end{gather}
where we additionally denote by $W_t^{+}$ and $W_t^{-}$ the Wiener process in forward or backward time. We say $T_{|x_0}$ and $T_{|x_1}$ denotes the conditional process of $T$ fixing the marginals using delta functions, i.e., setting $p_0(x_0) = \delta_{x_0}(x)$ and $p_1(x_1) = \delta_{x_1}(x)$:
\begin{gather}
    T_{|x_0}: dx_t = v^{+}(x_t, t)dt + \sqrt{\epsilon}dW_t^{+}, \quad x_0 \sim \delta_{x_0}(x),
    \nonumber
    \\
    T_{|x_1}: dx_t = v^{-}(x_t, t)dt + \sqrt{\epsilon}dW_t^{-}, \quad x_1 \sim \delta_{x_1}(x).
    \nonumber
\end{gather}
Moreover, we use $p(x_0)T_{|x_0}$ to denote the stochastic process which starts by sampling ${x_0 \sim p(x_0)}$ and then moving this $x_0$ according the SDE given by $T_{|x_0}$, i.e., $p(x_0)T_{|x_0}$ is short for the process $\int T_{|x_0}p(x_0)dx_0$. Finally, we use the shortened notation of the process $T_{|0,1}(x_0, x_1)$ conditioned on its values at times 0 and 1, saying $p^T(x_0, x_1)T_{|0,1}(x_0, x_1) = \int T_{|0,1}(x_0, x_1)p^T(x_0, x_1)dx_0dx_1$. This links the following equations with the discrete-time formulation. 

\textbf{Schrödinger Bridge problem.} Considering the continuous case, the Schrödinger Bridge problem is stated using continuous stochastic processes instead of one with predefined timesteps. Thus, the Schrödinger Bridge problem finds the most likely in the sense of Kullback-Leibler divergence stochastic process $T$  with respect to prior Wiener process $W^\epsilon$, i.e.:
\begin{equation}
    \label{eq:continuous_SB}
    \min_{T \in \mathcal{F}(p_0, p_1)}\text{KL}(T || W^\epsilon),
\end{equation}
where $\mathcal{F}(p_0, p_1) \subset \mathcal{P}(C([0, 1]), \mathbb{R}^D)$ is the set of all stochastic processes pinned by marginal distributions $p_0$ and $p_1$ at times $0$ and $1$, respectively. The minimization problem \eqref{eq:continuous_SB} has a unique solution $T^*$ which can be represented as forward or backward diffusion \citep{leonard2013survey}:
\begin{gather}
    T^*: dx_t = v^{*+}(x_t, t)dt + \sqrt{\epsilon}dW_t^{+}, \quad x_0 \sim p_0(x_0),
    \nonumber
    \\
    T^*: dx_t = v^{*-}(x_t, t)dt + \sqrt{\epsilon}dW_t^{-}, \quad x_1 \sim p_1(x_1),
    \nonumber
\end{gather}
where $v^{*+}$ and $v^{*-}$ are the corresponding drift functions.

\textbf{Static Schrödinger Bridge problem.} As in discrete-time, Kullback-Leibler divergence in \eqref{eq:continuous_SB} could be decomposed as follows:
\begin{equation}
    \label{eq:continuous_SB_decomposition}
    \text{KL}(T || W^{\epsilon}) = \text{KL}(p^{T}(x_0, x_1) || p^{W^{\epsilon}}(x_0, x_1)) + \int \text{KL}(T_{|x_0, x_1}||W^{\epsilon}_{|x_0, x_1})dp^T(x_0, x_1).
\end{equation}
It has been proved \citep{leonard2013survey} that for the solution $T^*$ it's conditional process is given by $T^*_{|x_0, x_1} = W^{\epsilon}_{|x_0, x_1}$. Thus, we can set $T_{|x_0, x_1} = W^{\epsilon}_{|x_0, x_1}$ zeroing the second term in \eqref{eq:continuous_SB_decomposition} and minimize over processes with  $T_{|x_0, x_1} = W^{\epsilon}_{|x_0, x_1}$. This leads to the equivalent Static formulation of the Schrödinger Bridge problem:
\begin{equation}
    \min_{q \in \Pi(p_0, p_1)}\text{KL}(q(x_0, x_1)||p^{W^{\epsilon}}(x_0, x_1)),
\end{equation}
where $\Pi(p_0, p_1)$ is the set of all joint distributions with marginals $p_0$ and $p_1$. Whether time is discrete or continuous, the decomposition of SB leads to the same static formulation, which is closely related to Entropic OT as shown in \eqref{eq:kl-between-process-joints}.

\subsection{Iterative Proportional Fitting (IPF) for continuous-time}\label{app:continuous-ipf}
Following the main text, we describe the IPF procedure for continuous-time setup using stochastic processes. Likewise, IPF starts with setting $T^0 = p_0(x_0)W^{\epsilon}_{|x_0}$ and then it alternates betwethe followinging projections: 
\begin{gather}
    T^{2k+1} = \text{proj}_{1}\left(p^{T^{2k}}(x_1)T^{2k}_{|x_1}\right) \defeq p_1(x_1)T^{2k}_{|x_1},
    \label{eq:continuous_ipf_0}
    \\
    T^{2k+2}=\text{proj}_{0}\left(p^{T^{2k+1}}(x_0)T^{2k+1}_{|x_0}\right) \defeq p_0(x_0)T^{2k+1}_{|x_0}.
    \label{eq:continuous_ipf_1}
\end{gather}
As in the discrete-time case, these projections replace marginal distributions $p^{T}(x_1)$ and $p^{T}(x_0)$ in the processes $p^{T}(x_1)T_{|x_1}$ and $p^{T}(x_0)T_{|x_0}$ by $p_1(x_1)$ and $p_0(x_0)$ respectively.
Similarly to discrete-time formulation, the sequence of $T^k$ converges to the solution of the Schrödinger Bridge problem $T^*$ implicitly decreasing the reverse Kullback-Leibler divergence $\text{KL}(T^k||T^*)$ between the current process $T^k$ and the solution to the SB problem $T*$. Additionally, it should be mentioned that existing methods perform projections via numerical approximation of forward and time-reversed conditional processes, $T_{|x_0}$ and $T_{|x_1}$, by learning their drifts via one of the methods: score matching \citep{de2021diffusion} or maximum likelihood estimation \citep{vargas2021solving}. 

\subsection{Iterative Markovian Fitting (IMF) for continuous-time}\label{app:continuous-imf}
IMF introduces new projections that alternate between reciprocal and Markovian processes starting from any process $T^0$ pinned by $p_0$ and $p_1$ at times $0$ and $1$, i.e., in $T^0 \in \mathcal{F}(p_0, p_1)$:
\begin{gather}
    T^{2k+1} = \text{proj}_{\mathcal{R}}\left(T^{2k}\right) \defeq p^{T^{2k}}(x_0, x_1)W^{\epsilon}_{|x_0, x_1},
    \\
    T^{2k+2}=\text{proj}_{\mathcal{M}}\left(T^{2k+1}\right) \defeq  \underbrace{p^{T^{2k+1}}(x_0)T^{2k+1}_{M|x_0}}_{\text{forward representation}} = \underbrace{p^{T^{2k+1}}(x_1)T^{2k+1}_{M|x_1}}_{\text{backward representation}}.
    \label{eq:markovian-proj-cont}
\end{gather}
where we denote by $T_{M}$ the Markovian projections of the processes $T$, which can be represented as the forward or backward time diffusion as follows \citep[Section 2.1]{gushchin2024adversarial}:
\begin{gather}
     T_{M}: dx_t^{+} = v_M^{+}(x^{+}_t, t)dt + \sqrt{\epsilon}dW_t^{+}, \quad x_0 \sim p^T(x_0), \quad v_M^{+}(x^{+}_t, t) = \int \frac{x_1 - x^{+}_t}{1-t} p^{T}(x_1|x_t) dx_1,
     \nonumber
     \\
     T_{M}: dx_t^{-} = v_M^{-}(x^{-}_t, t)dt + \sqrt{\epsilon}dW_t^{-}, \quad x_1 \sim p^T(x_1), \quad v_M^{-}(x^{-}_t, t) = \int \frac{x_0 - x^{-}_t}{1-t} p^{T}(x_0|x_t) dx_0.
     \nonumber
\end{gather}
This procedure converges to a unique solution, which is the Schrödinger bridge $T^*$ \citep{leonard2013survey}. While reciprocal projection can be easily done by combining the joint distribution $p^T(x_0, x_1)$ of the process $T$ and Brownian bridge $W^{\epsilon}_{|x_0, x_1}$, the Markovian projection is much more challenging and must be fitted via Bridge matching \citep{shi2023diffusion, liu2022let, peluchetti2023non}.


Since the result of the Markovian projection \eqref{eq:markovian-proj-cont} can be represented  both by forward and backward representation, in practice, neural networks $v^+_{\theta}$ (\textbf{forward parametrization}) or $v^-_{\phi}$ (\textbf{backward parametrization}) are used to learn the corresponding drifts of the Markovian projections. In turn, starting distributions are set to be $p_0(x_0)$ for forward parametrization and $p_1(x_1)$ for the backward parametrization. So, this \textbf{bidirectional} procedure can be described as follows:
\begin{gather}
    T^{4k+1} = \underbrace{p^{T^{4k}}(x_0, x_1)W^{\epsilon}_{|x_0, x_1}}_{\text{proj}_\mathcal{R}(T^{4k})}, \quad T^{4k+2} = \underbrace{p_1(x_1){T}^{4k+1}_{M|x_1}}_{\text{backward parametrization}},
    \label{eq:backward-markovian-proj-cont}
    \\
    T^{4k+3} = \underbrace{p^{T^{4k+2}}(x_0, x_1)W^{\epsilon}_{|x_0, x_1}}_{\text{proj}_\mathcal{R}(T^{4k+2})}, \quad T^{4k+4} = \underbrace{p_0(x_0){T}^{4k+3}_{M|x_0}}_{\text{forward parametrization}}.
    \label{eq:forward-markovian-proj-cont}
\end{gather}

\subsection{Iterative Proportional Markovian Fitting (IPMF) for continuous-time}\label{app:continuous-ipmf}

Here, we analyze the continuous version of the heuristical bidirectional IMF. First, we recall, that the IPF projections proj$_0(T)$ and proj$_1(T)$ given by \eqref{eq:continuous_ipf_0} and \eqref{eq:continuous_ipf_1} of the Markovian process $T$ is just change the starting distribution from $p^T(x_0)$ to $p_0(x_0)$ and $p^T(x_1)$ to $p_1(x_1)$.


Now we note that the process $T^{4k+2}$ in \eqref{eq:backward-markovian-proj-cont} is obtained by using a combination of Markovian projection proj$_{\mathcal{M}}$ given by \eqref{eq:markovian-proj-cont} in backward parametrization and IPF projection proj$_1$ given by \eqref{eq:continuous_ipf_1}:
\begin{gather}
    T^{4k+2} =  p_1(x_1)T^{4k+1}_{M|x_1} = \underbrace{\text{proj}_{1}\big(p^{T^{4k+1}}(x_1)T^{4k+1}_{M|x_1}\big)}_{\text{proj}_{1}(\text{proj}_{\mathcal{M}}(T^{4k+1}))}.
    \nonumber
\end{gather}
In turn, the process $T^{4k+4}$ in \eqref{eq:forward-markovian-proj-cont} is obtained by using a combination of Markovian projection proj$_{\mathcal{M}}$ given by \eqref{eq:markovian-proj-cont} in forward parametrization and IPF projection proj$_0$ given by \eqref{eq:continuous_ipf_0}:
\begin{gather}
    T^{4k+3} = p_0(x_0)T^{4k+3}_{M|x_0} = \underbrace{\text{proj}_{0}\big(p^{T^{4k + 3}}(x_0)T^{4k+3}_{M|x_0}\big)}_{\text{proj}_{0}(\text{proj}_{\mathcal{M}}(T^{4k+3}))}.
    \nonumber
\end{gather}
Combining these facts we can rewrite bidirectional IMF in the following manner:
\begin{gather}
    \textbf{Iterative Proportional Markovian Fitting (Conitnious time setting)} 
    \nonumber
    \\
    T^{4k+1} = \underbrace{p^{T^{4k}}(x_0, x_1)W^{\epsilon}_{|x_0, x_1}}_{\text{proj}_\mathcal{R}(T^{4k})}, \quad T^{4k+2} = \underbrace{p_1(x_1)T^{4k+1}_{M|x_1}}_{\text{proj}_1\left(\text{proj}_{\mathcal{M}}(T^{4k+1})\right)} 
    \\
    T^{4k+3} = \underbrace{p^{T^{4k+2}}(x_0, x_1)W^{\epsilon}_{|x_0, x_1}}_{\text{proj}_\mathcal{R}(T^{4k+2})}, \quad T^{4k+4} = \underbrace{p_0(x_0){T}^{4k+3}_{M|x_0}}_{\text{proj}_0\left(\text{proj}_{\mathcal{M}}(T^{4k+3})\right)}.
\end{gather}
Thus, we obtain the analog of the discrete-time IPMF procedure, which concludes our description of the continuous setups.

\section{Theoretical Analysis for Gaussians}\label{sec:proofs}
Here, we study behavior of IPMF with volatility $\epsilon$ between $D$-dimensional Gaussians  $p_0 = \mathcal{N}(\mu_0, \Sigma_0)$ and $p_1 = \mathcal{N}(\mu_1, \Sigma_1)$. For various settings, we prove that the parameters of $q^{4k}$ with each step geometrically converge to desired values $\mu_0, \mu_1, \Sigma_0, \Sigma_1, \epsilon$. The steps are as follows:

\textbf{1)} In Appendix \ref{sec: gauss and EOT}, we reveal the connection between  $2D$-dimensional Gaussian distribution and solution of entropic OT problem with specific transport cost, i.e., we prove our Theorem \ref{lemma-eot-gaussian}.

\textbf{2)} In Appendix \ref{sec:IPF analysis}, we study the effect of IPF steps on the current process. We show that during these steps, the marginals become close to $p_0$ and $p_1$, and the optimality matrix does not change. We also prove that the spectral norms of the marginal matrices are bounded during the whole IPMF procedure. 

\textbf{3)} In Appendix \ref{sec: imf multi}, we study the effect of IMF step on the current process when $D>1$. We show that after a discrete IMF step, the distance between current optimality matrix and desired one can be bounded by the scaled previous distance. 

\textbf{4)} In Appendix \ref{sec:IMF analysis}, we study the effect of IMF step in a particular case $D=1$. We show that after IMF step (continuous or discrete with $N=1$), marginals remain the same, and the intermediate distribution becomes closer to the intermediate distribution of the static $\epsilon$-EOT solution between the marginals.

\textbf{5)} Finally, in Appendices \ref{sec:multidim_theory} and \ref{sec:final proof}, we prove our main Theorem \ref{thm-main multi} for the case $D>1$ and $D = 1$, respectively.

\subsection{Gaussian plans as entropic optimal transport plans} \label{sec: gauss and EOT}
\begin{proof}[Proof of Theorem \ref{lemma-eot-gaussian}]

The conditional distribution of $q(x_0| x_1)$ has a closed form: 
\begin{eqnarray}
    q(x_0| x_1) &=& \mathcal{N}\left(x_0| \eta + P(S)^{-1}(x_1 - \nu), Q - P (S)^{-1}P^\top  \right)  \notag \\
    &=& Z_{x_0} Z_{x_1}  \exp\left(x_0^\top ( Q - P (S)^{-1}P^\top )^{-1} P(S)^{-1} x_1 \right) \notag \\
    &=& Z_{x_0}Z_{x_1}\exp\left(x_1^\top A x_0\right), \label{eq: gauss cond A}
 \end{eqnarray}
where factors $Z_{x_0}$ and $Z_{x_1}$ depend only on $x_0$ and $x_1$, respectively, and the matrix $A$ is
\begin{equation}
    A = (S)^{-1}  P^\top ( Q - P (S)^{-1}P^\top )^{-1}.  \label{eq:eps func multi}
\end{equation}
Theorem 3.2 from \cite{gushchin2023building} states that if the conditional distribution $q(x_1|x_0)$ can be expressed as:
    \begin{eqnarray}
        q(x_0|x_1) &\propto&\exp \left(-c(x_0, x_1) + f_{c}(x_0) \right), \label{eq: cond dist c}
    \end{eqnarray}
    where $c(x_0,x_1)$ is a lower bounded cost function, and the function $f_{c}(x_0)$ depends only on $x_0$, then $q$ solves 1-entropic OT with the cost function $c(x_0, x_1)$. Equating the terms in \eqref{eq: gauss cond A} and \eqref{eq: cond dist c} which depend on both $x_0$ and $x_1$, we derive the formula for the cost function is $c(x_0, x_1) = -x_0^{\top}Ax_1$. We denote it as $c_A(x_0, x_1) := -x_0^{\top}Ax_1$  to show the dependency on the optimality matrix $A.$

We only need to note that we can add any functions $f(x_0)$ and $g(x_1)$ depending only on $x_0$ or $x_1$, respectively, to the cost function $c_A(x_0,x_1) = -x_1^\top A x_0$, and the OT solution will not change. This is because the integrals of such functions over any transport plan will be constants, as they will depend only on the marginals (which are given) but not on the plan itself. Thus, for any $A \in \R^{D \times D}$, we can rearrange the cost term $c_A(x_0, x_1)$ so that it becomes lower-bounded:
    \begin{eqnarray}
        \tilde{c}_A(x_0, x_1) =  \|Ax_0\|^2/2 - x_1^\top A x_0 + \|x_1\|^2/2 = \|Ax_0 - x_1\|^2/2  \geq 0,\notag 
    \end{eqnarray}
    where $\tilde{c}_A(x_0, x_1)$ is a lower bounded function. 
\end{proof}

\subsection{IPF step analysis}\label{sec:IPF analysis} 
We run IPMF with the desired volatility parameter $\epsilon_*$ between the desired distributions $p_0 = \mathcal{N}(\mu_0, \Sigma_0)$ and $p_1 = \mathcal{N}(\mu_1, \Sigma_1)$, starting with the process $\mathcal{N} \left(\begin{pmatrix}
    \mu_0 \\ \nu
\end{pmatrix}, \begin{pmatrix}
    \Sigma_0 & P \\
    P  & S
\end{pmatrix} \right)$ which has the correlation matrix $P$ . 

One IPMF step can be decomposed into the following consecutive steps: \begin{enumerate}
 \item IMF step: projections $\text{proj}_{\mathcal{M}}(\text{proj}_{\mathcal{R}})$, refining the current optimality matrix,
    \item IPF step: projection $\text{proj}_{1}$, changing final prior at time $t = 1$ to $p_1 = \mathcal{N}(\mu_1, \Sigma_1)$,
    \item IMF step: projections $\text{proj}_{\mathcal{M}}(\text{proj}_{\mathcal{R}})$, refining the current optimality matrix,
    \item IPF step: projection $\text{proj}_{0}$, changing starting prior at time $t = 0$ to $p_0 = \mathcal{N}(\mu_0, \Sigma_0)$.
\end{enumerate} 

We use the following notations for the covariance matrices changes during IPMF step:
\begin{eqnarray}
    \begin{pmatrix}
    \Sigma_0 & P\\
    P^\top   & S
\end{pmatrix} &\overset{IMF}{\Longrightarrow}& \begin{pmatrix}
    \Sigma_0 & \tilde{P}  \\
    \tilde{P}^\top  & S
\end{pmatrix} \overset{IPF}{\Longrightarrow} \begin{pmatrix}
    Q & P' \\
    (P')^\top   & \Sigma_1
\end{pmatrix}\notag \\
&\overset{IMF}{\Longrightarrow}& \begin{pmatrix}
    Q & \hat{P}\\
    \hat{P}^\top    & \Sigma_1
\end{pmatrix}  \overset{IPF}{\Longrightarrow} \begin{pmatrix}
    \Sigma_0 & P''  \\
    (P'')^\top   & S'
\end{pmatrix}, \notag
\end{eqnarray}
and for the means, the changes are:
$$\begin{pmatrix}
    \mu_0 \\ \nu
\end{pmatrix} \overset{IMF}{\Longrightarrow} \begin{pmatrix}
    \mu_0 \\ \nu
\end{pmatrix}  \overset{IPF}{\Longrightarrow} \begin{pmatrix}
    \eta \\ \mu_1
\end{pmatrix}  \overset{IMF}{\Longrightarrow} \begin{pmatrix}
    \eta \\ \mu_1
\end{pmatrix} \overset{IPF}{\Longrightarrow} \begin{pmatrix}
    \mu_0 \\ \nu'
\end{pmatrix}. $$

\begin{lemma}[Improvement after IPF steps]\label{lem: ipf improvement multi}
    Consider an initial $2D$-dimensional  Gaussian joint distribution 
    $\mathcal{N} \left(\begin{pmatrix}
    \mu_0 \\ \nu
\end{pmatrix}, \begin{pmatrix}
    \Sigma_0 & P  \\
    P^\top & S
\end{pmatrix} \right)\in\mathcal{P}_{2,ac}(\mathbb{R}^{D}\times\mathbb{R}^{D})$. We run IPMF step between distributions $\mathcal{N}(\mu_0, \Sigma_0)$ and $\mathcal{N}(\mu_1, \Sigma_1)$ and obtain new joint distribution $ \mathcal{N} \left(\begin{pmatrix}
    \mu_0 \\ \mu''
\end{pmatrix},\begin{pmatrix}
    \Sigma_0 & P'' \\
    (P'')^\top  & S'
\end{pmatrix} \right)$. Then, the distance between ground truth $\mu_1, \Sigma_1$ and the new joint distribution parameters decreases as: 
\begin{eqnarray}
    \|(S')^{-\frac12}\Sigma_1(S')^{-\frac12} - I_D\|_2 &\leq& \| \tilde{P}_n\|_2^2 \cdot \| P''_n\|^2_2 \cdot \|S^{-\frac12}\Sigma_1S^{-\frac12} - I_D\|_2, \label{eq: sigma bound ipf multi} \\
    \| \Sigma_1^{-\frac{1}{2}} ( \nu' - \mu_1) \|_2 &\leq& \|\hat{P}_n^\top\|_2 \cdot \| P'_n\|_2  \cdot \|\Sigma_1^{-\frac12}(\nu - \mu_1)\|_2, \label{eq: mu bound ipf multi}
\end{eqnarray}
where $\tilde{P}_n :=  \Sigma_0^{-1/2} \tilde{P} S^{-1/2}, P'_n :=  (Q)^{-\frac{1}{2}} P' \Sigma_1^{-\frac{1}{2}},\hat{P}_n := (Q)^{-1/2} \hat{P} \Sigma_1^{-1/2} $ and $ P''_n := \Sigma_0^{-1/2} P'' (S')^{-1/2} $ are normalized matrices whose spectral norms are not greater than $1$.  
\end{lemma}
\begin{proof} During IPF steps, we keep the conditional distribution and change the marginal. For the first IPF, we keep the inner part $x_0 | x_1 $ for all $x_1 \in \R^D$:
$$\mathcal{N} \left(x_0| \mu_0 + \tilde{P} S^{-1} (x_1 - \nu), \Sigma_0 - \tilde{P} S^{-1}\tilde{P}^\top \right)  = \mathcal{N} \left( x_0| \eta +  P' \Sigma_1^{-1} (x_1 - \mu_1), Q - P' \Sigma_1^{-1} (P')^\top  \right).$$
This is equivalent to the system of equations:
\begin{eqnarray}
    \Sigma_0 - \tilde{P} S^{-1}\tilde{P}^\top &=& Q - P' \Sigma_1^{-1} (P')^\top, \label{eq: sigma b 1 ipf multi} \\ 
    P' \Sigma_1^{-1} &=& \tilde{P} S^{-1}, \label{eq: sigma a 1 ipf multi} \\
    \mu_0 - \tilde{P} S^{-1}\nu &=& \eta - P' \Sigma_1^{-1} \mu_1 \label{eq: mu 1 ipf multi}.
\end{eqnarray}
Similarly, after the second IPF step, we have equations:
\begin{eqnarray}
    \Sigma_1 - \hat{P}^\top (Q)^{-1}\hat{P} &=& S' - (P'')^\top \Sigma_0^{-1} P'', \label{eq: sigma b 2 ipf multi}\\ 
    (P'')^\top \Sigma_0^{-1} &=& \hat{P}^\top (Q)^{-1}, \label{eq: sigma a 2 ipf multi} \\
    \mu_1 - \hat{P}^\top (Q)^{-1} \eta &=& \nu' - (P'')^\top \Sigma_0^{-1} \mu_0 \label{eq: mu 2 ipf multi}.
\end{eqnarray}
\textbf{Covariance matrices.} Combining equations \eqref{eq: sigma a 1 ipf multi}, \eqref{eq: sigma b 1 ipf multi} and \eqref{eq: sigma a 2 ipf multi}, \eqref{eq: sigma b 2 ipf multi} together, we obtain:
\begin{eqnarray}
   \Sigma_0 - Q &=& \tilde{P} S^{-1} (S - \Sigma_1) S^{-1} \tilde{P}^\top, \qquad  \qquad \qquad \label{eq: s after 1 IPF tep}//  \eqref{eq: sigma b 1 ipf multi}, \eqref{eq: sigma a 1 ipf multi}\label{eq: ipf proof 1} \\
     I_D - \Sigma_0 (Q)^{-1} &=& \tilde{P} S^{-1} (\Sigma_1  -S) S^{-1} \tilde{P}^\top (Q)^{-1}, \label{eq: ipf proof 2} \qquad\qquad // \eqref{eq: ipf proof 1} \cdot(Q)^{-1}\\
    \Sigma_1 - S'  &=& \hat{P}^\top(Q)^{-1} (I_D - \Sigma_0(Q)^{-1})\hat{P},  \qquad\qquad// \eqref{eq: sigma b 2 ipf multi}, \eqref{eq: sigma a 2 ipf multi}  \label{eq: ipf proof 3}  \\
    \Sigma_1 - S' &=& \hat{P}^\top (Q)^{-1}\tilde{P}S^{-1} (\Sigma_1 - S )S^{-1}\tilde{P}^\top (Q)^{-1}\hat{P},\notag \quad //\eqref{eq: ipf proof 2} \text{ insert to } \eqref{eq: ipf proof 3} \\
    \Sigma_1 - S' &=& (P'')^\top \Sigma_0^{-1} \tilde{P}S^{-1} (\Sigma_1 - S )S^{-1}\tilde{P}^\top  \Sigma_0^{-1} P'',\notag \quad//\text{change using }\eqref{eq: sigma a 2 ipf multi}\\ 
    (S')^{-\frac12}\Sigma_1(S')^{-\frac12} - I_D &=& (S')^{-\frac12}(P'')^\top \Sigma_0^{-\frac12} \cdot \Sigma_0^{-\frac12}  \tilde{P}S^{-\frac12} \notag \\
    &\cdot&  (S^{-\frac12}\Sigma_1S^{-\frac12} - I_D ) \cdot S^{-\frac12}\tilde{P}^\top  \Sigma_0^{-\frac12} \cdot \Sigma_0^{-\frac12}  P''(S')^{-\frac12}.\notag 
\end{eqnarray}
The matrices \eqref{eq: sigma b 1 ipf multi} and \eqref{eq: sigma b 2 ipf multi} must be SPD to be covariance matrices:  
\begin{eqnarray}
    \Sigma_0 - \tilde{P} S^{-1}\tilde{P}^\top \succeq 0 \quad &\Longrightarrow& \quad  I_D \succeq \Sigma_0^{-1/2} \tilde{P} S^{-1/2} \cdot S^{-1/2} \tilde{P}^\top \Sigma_0^{-1/2} \notag, \\
    S' - (P'')^\top \Sigma_0^{-1} P'' \succeq 0 \quad &\Longrightarrow& \quad I_D \succeq \Sigma_0^{-1/2} P'' (S')^{-1/2} \cdot (S')^{-1/2} (P'')^\top \Sigma_0^{-1/2}. \notag 
\end{eqnarray}
In other words, denoting  matrices $\tilde{P}_n :=  \Sigma_0^{-1/2} \tilde{P} S^{-1/2}$ and $P''_n := \Sigma_0^{-1/2} P'' (S')^{-1/2}$, we can bound their spectral norms as  $\| \tilde{P}_n\|_2 \leq 1$ and $\|P''_n\|_2 \leq 1$. We write down the final transaction for covariance matrices:
\begin{equation}
    (S')^{-\frac12}\Sigma_1(S')^{-\frac12} - I_D = (P''_n)^\top \cdot  \tilde{P}_n \cdot (S^{-\frac12}\Sigma_1S^{-\frac12} - I_D) \cdot \tilde{P}_n ^\top \cdot P''_n. \label{eq: ipf sigma update} 
\end{equation}
Hence, the spectral norm of the difference between ground truth $\Sigma_1$ and current $S'$ drops exponentially as: 
$$\|(S')^{-\frac12}\Sigma_1(S')^{-\frac12} - I_D\|_2 \leq \| \tilde{P}_n\|_2^2 \cdot \| P''_n\|^2_2 \cdot \|S^{-\frac12}\Sigma_1S^{-\frac12} - I_D\|_2.$$

\textbf{Means.} Combining equations \eqref{eq: mu 1 ipf multi}, \eqref{eq: sigma a 1 ipf multi} and \eqref{eq: mu 2 ipf multi}, \eqref{eq: sigma a 2 ipf multi} together, we obtain: 
\begin{eqnarray}
    \mu_0 - \eta &=& \tilde{P}S^{-1}\nu - P'\Sigma_1^{-1}\mu_1  = P'\Sigma_1^{-1}(\nu - \mu_1), \qquad\qquad\qquad //\eqref{eq: mu 1 ipf multi}, \eqref{eq: sigma a 1 ipf multi} \label{eq: ipf proof 4}\\
     \nu' - \mu_1 &=& (P'')^\top \Sigma_0^{-1} \mu_0 - \hat{P}^\top (Q)^{-1} \eta = \hat{P}^\top (Q)^{-1}(\mu_0 - \eta),\quad //\eqref{eq: mu 2 ipf multi}, \eqref{eq: sigma a 2 ipf multi} \label{eq: ipf proof 5} \\ 
    \nu' - \mu_1 &=&  \hat{P}^\top (Q)^{-1}P'\Sigma_1^{-1}(\nu - \mu_1), \qquad\qquad\qquad\qquad\qquad\quad // \text{insert \eqref{eq: ipf proof 4} to \eqref{eq: ipf proof 5}}\notag \\
    \Sigma_1^{-\frac{1}{2}} ( \nu' - \mu_1) &=& \Sigma_1^{-\frac{1}{2}} \hat{P}^\top (Q)^{-\frac{1}{2}} \cdot (Q)^{-\frac{1}{2}} P'\Sigma_1^{-\frac12} \cdot \Sigma_1^{-\frac12}(\nu - \mu_1). \notag
\end{eqnarray}
The matrices \eqref{eq: sigma b 1 ipf multi} and \eqref{eq: sigma b 2 ipf multi} must be SPD to be covariance matrices:  
\begin{eqnarray}
    Q \succeq P' \Sigma_1^{-1} (P')^\top \quad &\Longrightarrow& \quad  I_D \succeq  (Q)^{-\frac{1}{2}} P' \Sigma_1^{-\frac{1}{2}} \cdot \Sigma_1^{-\frac{1}{2}} (P')^\top (Q)^{-\frac{1}{2}} \notag, \\
    \Sigma_1  \succeq \hat{P}^\top (Q)^{-1}\hat{P} \quad &\Longrightarrow& \quad I_D \succeq \Sigma_1^{-1/2} \hat{P}^\top (Q)^{-1/2} \cdot (Q)^{-1/2} \hat{P} \Sigma_1^{-1/2}. \notag 
\end{eqnarray}
Denoting  matrices $P'_n :=  (Q)^{-\frac{1}{2}} P' \Sigma_1^{-\frac{1}{2}}$ and $\hat{P}_n := (Q)^{-1/2} \hat{P} \Sigma_1^{-1/2}$, we can bound their spectral norms as  $\| P'_n\|_2 \leq 1$ and $\|\hat{P}_n\|_2 \leq 1$. We use this to estimate  the $\ell_2$-norm of the difference between the ground truth $\mu_1$ and the current mean:  \begin{equation}
    \Sigma_1^{-\frac{1}{2}} ( \nu' - \mu_1) = \hat{P}_n^\top \cdot  P'_n  \cdot \Sigma_1^{-\frac12}(\nu - \mu_1),\label{eq: ipf update mu}
\end{equation} 
$$\| \Sigma_1^{-\frac{1}{2}} ( \nu' - \mu_1) \|_2 \leq \|\hat{P}_n^\top\|_2 \cdot \| P'_n\|_2  \cdot \|\Sigma_1^{-\frac12}(\nu - \mu_1)\|_2.$$

\end{proof}

\begin{lemma}[Marginals norm bound during IPMF procedure]\label{lem: ipf norm bound}
    Consider an initial $2D$-dimensional  Gaussian joint distribution 
    $\mathcal{N} \left(\begin{pmatrix}
    \mu_0 \\ \nu_0
\end{pmatrix}, \begin{pmatrix}
    \Sigma_0 & P_0  \\
    P_0^\top & S_0
\end{pmatrix} \right)\in\mathcal{P}_{2,ac}(\mathbb{R}^{D}\times\mathbb{R}^{D})$. We run $k$ IPMF step between distributions $\mathcal{N}(\mu_0, \Sigma_0)$ and $\mathcal{N}(\mu_1, \Sigma_1)$ and obtain new joint distribution $ \mathcal{N} \left(\begin{pmatrix}
    \mu_0 \\ \nu_k
\end{pmatrix},\begin{pmatrix}
    \Sigma_0 & P_k \\
    P_k^\top  & S_k
\end{pmatrix} \right)$. Then the norm $\|S_k\|_2$ can be bounded independently of $k$ by: 
\begin{eqnarray}
     \|S_k\|_2 \leq  \frac{\|\Sigma_1\|_2}{\min\{\lambda_{min}(S_0^{-\frac12}\Sigma_1 S_0^{-\frac12}), 1\}}, \quad \|S_{k}^{-1}\|_2  \leq \max\{\lambda_{max}(S_0^{-\frac12}\Sigma_1 S_0^{-\frac12}), 1\} \|\Sigma_1^{-1}\|_2.\label{eq: marginal norm}
\end{eqnarray}
This statement also implies the invertibility of all matrices $S_k$. 

For matrices $Q_k$, the results are analogous.
\end{lemma}
\begin{proof}
Consider the last IPMF step. We denote symmetric matrices $\Delta_k := S_k^{-\frac12}\Sigma_1S_k^{-\frac12} - I_D, \Delta_{k-1} := S_{k-1}^{-\frac12}\Sigma_1S_{k-1}^{-\frac12} - I_D$ and  $\hat{\lambda}_{min}(\Delta) := \min \{0, \lambda_{min}(\Delta)\}, \hat{\lambda}_{max}(\Delta) := \max \{0, \lambda_{max}(\Delta)\}$. Next, we estimate spectral norm of $S_{k-1}$ as follows:
\begin{eqnarray}
    \Delta_{k} = S_{k}^{-\frac12}\Sigma_1S_{k}^{-\frac12} - I_D &\succeq& \lambda_{min}(\Delta_{k}) I_D\succeq\hat{\lambda}_{min}(\Delta_{k})I_D, \notag \\
    S_k^{-\frac12}\Sigma_1S_{k}^{-\frac12} &\succeq& (\hat{\lambda}_{min}(\Delta_{k}) + 1)I_D, \notag \\
    \Sigma_1 &\succeq& (\hat{\lambda}_{min}(\Delta_{k}) + 1) S_{k} \notag
\end{eqnarray}
Note, that by design we have $\Delta_{k} \succeq -I_D \Rightarrow -1 \leq \hat{\lambda}_{min}(\Delta_{k}) \leq 0 \Rightarrow 0 \leq (\hat{\lambda}_{min}(\Delta_{k}) + 1) \leq 1$ and can obtain
\begin{eqnarray}
    \Sigma_1 &\succeq& (\hat{\lambda}_{min}(\Delta_{k}) + 1) S_{k}, \notag \\
    S_{k} &\preceq& \frac{1}{\hat{\lambda}_{min}(\Delta_{k}) + 1} \Sigma_1, \notag \\
    \|S_{k}\|_2 &\leq& \frac{\|\Sigma_1\|_2}{\hat{\lambda}_{min}(\Delta_{k}) + 1}. \label{eq: ipf s'' bound}
\end{eqnarray}
Similarly, we prove that
$$S_{k} \succeq \frac{1}{\hat{\lambda}_{max}(\Delta_{k}) + 1} \Sigma_1 \Rightarrow  \|S_{k}^{-1}\|_2 \leq (\hat{\lambda}_{max}(\Delta_{k}) + 1) \|\Sigma_1^{-1}\|_2.$$
Now, we prove that $\hat{\lambda}_{min}(\Delta_{k}) \geq \hat{\lambda}_{min}(\Delta_{k-1}) $. We denote by $P''_n$ and $\tilde{P}_n$ normalized matrices after the second IPF step and the first IMF step on the last iteration, respectively (see Lemma \ref{lem: ipf improvement multi}). For any $x \in \R^D, \|x\|_2 \leq 1$, we calculate the bilinear form:
\begin{eqnarray}
    x^\top \Delta_{k} x &\overset{\eqref{eq: ipf sigma update}}{=}& x^\top(P''_n)^\top \cdot  \tilde{P}_n \cdot (S_{k-1}^{-\frac12}\Sigma_1S_{k-1}^{-\frac12} - I_D) \cdot \tilde{P}_n ^\top \cdot P''_n x \notag \\
    &=& (\tilde{P}_n ^\top \cdot P''_n x)^\top \Delta_{k-1} (\tilde{P}_n ^\top \cdot P''_n x), \notag \\
    \hat{\lambda}_{min}(\Delta_k) &=& \min\left\{0, \min\limits_{\|x\|_2 = 1} x^\top \Delta_k x  \right\}  \notag \\
    &\geq& \min\left\{0, \min\limits_{\|x\|_2 = 1} (\tilde{P}_n ^\top \cdot P''_n x)^\top \Delta_{k-1} (\tilde{P}_n ^\top \cdot P''_n x)  \right\} \notag \\
    &\geq& \|\tilde{P}_n ^\top \cdot P''_n x\|^2_2 \cdot  \min\left\{0, \lambda_{min}(\Delta_{k-1}) \right\} \notag \\
    &\geq&  \|\tilde{P}_n\|^2  \|P''_n\|^2 \|x\|_2^2 \cdot \hat{\lambda}_{min}(\Delta_{k-1}) \geq \hat{\lambda}_{min}(\Delta_{k-1}). \notag
\end{eqnarray}
Hence, after each IPMF step $\hat{\lambda}_{min}(\Delta_k)$  increases and can be lower bounded by the initial value $\hat{\lambda}_{min}(\Delta_k) \geq \hat{\lambda}_{min}(\Delta_0)$ using math induction. It implies the invertibility of all matrices $S_k$ and boundness of norms
\begin{eqnarray}
    \|S_k\|_2 &\overset{\eqref{eq: ipf s'' bound}}{\leq}& \frac{\|\Sigma_1\|_2}{\hat{\lambda}_{min}(\Delta_k) + 1} \leq \frac{\|\Sigma_1\|_2}{\hat{\lambda}_{min}(\Delta_0) + 1}  = \frac{\|\Sigma_1\|_2}{\min\{\lambda_{min}(S_0^{-\frac12}\Sigma_1 S_0^{-\frac12}), 1\}}. \notag
\end{eqnarray}
Similarly, we prove that
\begin{eqnarray}
    \hat{\lambda}_{max}(\Delta_k) \leq \hat{\lambda}_{max}(\Delta_{k-1})  \Rightarrow  \|S_{k}^{-1}\|_2 &\leq& (\hat{\lambda}_{max}(\Delta_{0}) + 1) \|\Sigma_1^{-1}\|_2 \notag \\
    &\leq& \max\{\lambda_{max}(S_0^{-\frac12}\Sigma_1 S_0^{-\frac12}), 1\} \|\Sigma_1^{-1}\|_2. \notag  
\end{eqnarray}

\end{proof}

\begin{lemma}[IPF step does not change optimality matrix $A$] \label{lem: ipf does not change optimality}
    Consider an initial $2D$-dimensional  Gaussian joint distribution 
    $\mathcal{N} \left(\begin{pmatrix}
    \mu_0 \\ \nu
\end{pmatrix}, \begin{pmatrix}
    \Sigma_0 & \tilde{P} \\
    \tilde{P}^\top & S
\end{pmatrix} \right)\in\mathcal{P}_{2,ac}(\mathbb{R}^{D}\times\mathbb{R}^{D})$. We run IPF step between distributions $\mathcal{N}(\mu_0, \Sigma_0)$ and $\mathcal{N}(\mu_1, \Sigma_1)$ and obtain new joint distribution $ \mathcal{N} \left(\begin{pmatrix}
    \eta \\ \mu_1
\end{pmatrix},\begin{pmatrix}
    Q & P' \\
    (P')^\top  & \Sigma_1
\end{pmatrix} \right)$. Then, IPF step does not change optimality matrix $A$, i.e.,
$$A = \Xi(\tilde{P}, \Sigma_0, S) = \Xi(P', Q, \Sigma_1).$$
For the second IPF step, the results are analogous.
\end{lemma}
\begin{proof} The explicit formulas for $\Xi(\tilde{P}, \Sigma_0, S)$ and $\Xi(P', Q, \Sigma_1)$ are
\begin{eqnarray}
    \Xi(\tilde{P}, \Sigma_0, S) &=& S^{-1}\tilde{P}^\top \cdot (\Sigma_0 - \tilde{P} S^{-1}\tilde{P}^\top), \notag\\
    \Xi(P', Q, \Sigma_1) &=& \Sigma_1^{-1}(P')^\top \cdot  (Q - P'\Sigma_1^{-1}(P')^\top). \notag
\end{eqnarray}
The first terms are equal due to equation \eqref{eq: sigma a 1 ipf multi}, and the second terms are equal due to \eqref{eq: sigma b 1 ipf multi}. 

We can prove this lemma in more general way. We derive the formula \eqref{eq:eps func multi} for $A$ only from the shape of the conditional distribution $q(x_0|x_1)$ \eqref{eq: gauss cond A}. During IPF step, this distribution remains the same by design, while parameters $S, \tilde{P}$ change. Hence, IPF step has no effect on the optimality matrix. 

For the second IPF step, the proof is similar.
\end{proof}

\subsection{Discrete IMF step analysis: multidimensional case for large \texorpdfstring{$\eps$}{eps}} \label{sec: imf multi}
 Consider a $2D$-dimensional  Gaussian distribution 
    $\mathcal{N} \left(\begin{pmatrix}
    \eta \\ \nu
\end{pmatrix}, \begin{pmatrix}
    Q & P  \\
    P^\top & S
\end{pmatrix} \right)$. We run a discrete IMF step consisting of reciprocal and Markovian projections with $N$ intermediate timesteps $0 = t_0 < t_1 < \dots < t_N < t_{N+1} = 1$ and volatility parameter $\epsilon$. 

Following \citep{gushchin2024adversarial}, we have an explicit formula for the reciprocal step. For any $0 \leq i,j \leq N + 1$, we have marginal covariance $\Sigma_{t_i, t_i}$ at time moment $t_i$ and joint covariance $ \Sigma_{t_i, t_j}$ between time moments $t_i$ and $t_{j}$:
\begin{eqnarray}
\Sigma_{t_i, t_{j}} &=& (1 - t_i)(1 - t_{j}) Q + (1-t_i)t_{j} P + (1 - t_{j}) t_i P^\top + t_i t_{j} S + t_i(1 - t_{j}) \epsilon \notag \\
    &=& (1 - t_i)(1 - t_{j}) Q + (1-t_i)t_{j} Q^{1/2} P_n S^{1/2} + (1 - t_{j}) t_i S^{1/2} P_n^\top Q^{1/2}  \notag \\
    &+& t_i t_{j} S + t_i(1 - t_{j}) \epsilon, \notag \\
    \Sigma_{t_i, t_i} &=& (1 - t_i)^2 Q +  t_i (1 - t_i) (P + P^\top)+ t_i^2 S + t_i (1- t_i)\epsilon \notag\\
    &=& (1 - t_i)^2 Q +  t_i (1 - t_i) (Q^{1/2}  P_n S^{1/2} + S^{1/2} P_n^\top Q^{1/2} )+ t_i^2 S + t_i (1- t_i)\epsilon, \notag \\
    \Sigma_{0, t_1} &=& (1 - t_1)Q + t_1 P = (1 - t_1)Q + t_1 Q^{1/2}P_nS^{1/2}, \notag\\
    \Sigma_{t_N, 1} &=& t_N S + (1-t_N)P = t_N S + (1-t_N)Q^{1/2}P_n S^{1/2},\notag
\end{eqnarray} 
where $P_n:= Q^{-\frac12} P S^{-\frac12}$. Marginals $\Sigma_{0,0} = Q$ and $\Sigma_{t_{N+1},t_{N+1}} = S$ at time moments $0$ and $1$ and covariance $\Sigma_{t_0, t_{N+1}} = P$ do not change.

For the Markovian step, we write down an analytical formula for the new correlation $\tilde{P}$ in the resulting process $\mathcal{N} \left(\begin{pmatrix}
    \eta \\ \nu
\end{pmatrix}, \begin{pmatrix}
    Q & \tilde{P}  \\
    \tilde{P}^\top & S
\end{pmatrix} \right)$, namely:
\begin{eqnarray}
    \tilde{P}  &:=&  \Sigma_{0,0}\cdot \prod_{i=0}^N \left( \Sigma_{t_{i}, t_{i}}^{-1} \Sigma_{t_i, t_{i+1}} \right) = \Sigma_{0, t_1} \Sigma^{-1}_{t_1, t_1} \Sigma_{t_1, t_2} \dots \Sigma^{-1}_{t_N, t_N}\Sigma_{t_N, 1}. \notag
\end{eqnarray}
For the normalized correlation $\tilde{P}_n = Q^{-\frac12}\tilde{P} S^{-\frac12} = \Sigma_{0, 0}^{-\frac12}\tilde{P} \Sigma_{1, 1}^{-\frac12}$, we can simplify the formula:
\begin{eqnarray}
    \tilde{P}_n = f(P_n) &=:& \Sigma_{0, 0}^{-\frac12}\Sigma_{0, t_1} \Sigma^{-1/2}_{t_1, t_1} \cdot \Sigma^{-1/2}_{t_1, t_1} \Sigma_{t_1, t_2} \Sigma^{-1/2}_{t_2, t_2} \cdot\Sigma^{-1/2}_{t_1, t_1} \dots \Sigma^{-1/2}_{t_N, t_N} \cdot \Sigma^{-1/2}_{t_N, t_N}\Sigma_{t_N, 1}\Sigma_{1, 1}^{-\frac12} \notag \\
    &=& \prod_{i=0}^N \left( \Sigma_{t_{i}, t_{i}}^{-1/2} \Sigma_{t_i, t_{i+1}}  \Sigma_{t_{i+1}, t_{i+1}}^{-1/2}\right) = \prod_{i=0}^N \left( \Sigma_{n;t_{i}, t_{i+1}}\right), \label{eq: f function def}
\end{eqnarray}
where $\Sigma_{n;t_{i}, t_{i+1}} :=  \Sigma_{t_{i}, t_{i}}^{-1/2} \Sigma_{t_i, t_{i+1}}  \Sigma_{t_{i+1}, t_{i+1}}^{-1/2}$ denotes normalized correlation between marginals at time moments $t_i$ and $t_{i+1}$ and satisfies $\|\Sigma_{n;t_{i}, t_{i+1}}\|_2 \leq 1.$
\begin{lemma}[IMF step  correlation transition properties]\label{lem:IMF step multi}
    Let matrices $Q, S \succ 0$ be the marginals of 2D-dimensional Gaussian distribution $\mathcal{N} \left(\begin{pmatrix}
    \eta \\ \nu
\end{pmatrix}, \begin{pmatrix}
    Q & P  \\
    P^\top & S
\end{pmatrix} \right)$. The function $f$ from \eqref{eq: f function def} defined on the ball $\left\{P_n : \normo{P_n} \le 1\right\}$ transforms the normalized correlation $P_n:= Q^{-\frac12} P S^{-\frac12}$ to a new one after a discrete IMF step.  Then $f(P_n)$ is Lipschitz on the unit ball with constant 
    \begin{equation}
        \label{eq:bound_f1}
        \gamma( Q, S, \eps) := \frac{\|Q^{\frac12}\|_2\|S^{\frac12}\|_2}{\sqrt{\eps}}  \left( \sqrt{\frac{t_1\|Q^{-\frac12}\|^2_2}{(1-t_1)}} +  \sqrt{\frac{t_N\|S^{-\frac12}\|_2^2}{(1-t_N)}} + \sum_{i=1}^{N-1} \frac{(1-t_i)t_{i+1} + (1 - t_{i+1}) t_i}{\sqrt{\eps t_i{t_{i+1}}(1-t_i)(1 - t_{i+1})}}\right).
    \end{equation}
    Moreover, 
    \begin{equation}
    \label{eq:bound_f2}
        \normo{f(P_n)} \le 1 - \frac{t_1 t_N (1- t_1)(1- t_N)\epsilon}{(\|Q^{1/2}\|_2 + \|S^{1/2}\|_2 + \sqrt{\eps})^2 }.
    \end{equation}
\end{lemma}

\begin{proof}

We differentiate $f(P_n)$ w.r.t. $P_n$ and obtain
\begin{eqnarray}
    df &=& d \left(\Sigma_{0,0}^{1/2}\cdot \prod_{i=0}^N \left( \Sigma_{t_{i}, t_{i}}^{-1} \Sigma_{t_i, t_{i+1}} \right)  \cdot \Sigma_{1,1}^{-1/2} \right) \notag \\
    &=& \sum_{i=0}^N \left(\Sigma_{0,0}^{1/2} \cdot \prod_{l< i}  (\Sigma_{t_{l}, t_{l}}^{-1} \Sigma_{t_l, t_{l+1}}) \cdot(\Sigma_{t_{i}, t_{i}}^{-1} d\Sigma_{t_i, t_{i+1}} - \Sigma_{t_{i}, t_{i}}^{-1} d \Sigma_{t_{i}, t_{i}} \Sigma_{t_{i}, t_{i}}^{-1} \Sigma_{t_i, t_{i+1}})  \cdot    \prod_{j> i}  (\Sigma_{t_{j}, t_{j}}^{-1} \Sigma_{t_j, t_{j+1}})\cdot \Sigma_{1,1}^{-1/2}\right) \notag \\
    &=& \sum_{i=0}^N \left( \prod_{l< i}  (\Sigma_{n;t_{l}, t_{l+1}}) \cdot(\Sigma_{t_{i}, t_{i}}^{-1/2} d\Sigma_{t_i, t_{i+1}} \Sigma_{t_{i+1}, t_{i+1}}^{-1/2}) \cdot    \prod_{j> i}  (\Sigma_{n;t_{j}, t_{j+1}} )\right) \notag \\
    &-& \sum_{i=0}^N \left( \prod_{l< i}  (\Sigma_{n;t_{l}, t_{l+1}}) \cdot(\Sigma_{t_{i}, t_{i}}^{-1/2} d\Sigma_{t_i, t_{i}} \Sigma_{t_{i}, t_{i}}^{-1/2}) \cdot    \prod_{j\geq  i}  (\Sigma_{n;t_{j}, t_{j+1}} )\right). \notag
    \end{eqnarray}
Since all normalized correlations are bounded by $\|\Sigma_{n;t_{i}, t_{i+1}}\|_2 \leq 1$, we can also bound  $df$ by
\begin{eqnarray}
    \|df\|_2 &\leq& \sum_{i=0}^N (\|\Sigma_{t_{i}, t_{i}}^{-1/2} d\Sigma_{t_i, t_{i+1}} \Sigma_{t_{i+1}, t_{i+1}}^{-1/2}\|_2 + \|\Sigma_{t_{i}, t_{i}}^{-1/2} d\Sigma_{t_i, t_{i}} \Sigma_{t_{i}, t_{i}}^{-1/2}\|_2)\notag \\
    &\leq& \sum_{i=0}^N (\|\Sigma_{t_{i}, t_{i}}^{-1/2}\|_2 \|d\Sigma_{t_i, t_{i+1}} \|_2 \|\Sigma_{t_{i+1}, t_{i+1}}^{-1/2}\|_2 + \|\Sigma_{t_{i}, t_{i}}^{-1/2}\|_2 \|d\Sigma_{t_i, t_{i}} \|_2\|\Sigma_{t_{i}, t_{i}}^{-1/2}\|_2). \notag
\end{eqnarray}
Since $\Sigma_{t_{i}, t_{i}} \succcurlyeq t_i(1-t_i)\eps I, \forall i \in \overline{1,N}$, we get estimate $\|\Sigma_{t_{i}, t_{i}}^{-1/2}\|_2 \leq 1/\sqrt{t_i(1-t_i)\eps }$. For differential $d\Sigma_{t_i, t_{i+1}}$, we have explicit formula and bound
\begin{eqnarray}
    d\Sigma_{t_i, t_{i+1}} &=&  (1-t_i)t_{i+1} Q^{1/2}dPS^{1/2} + (1 - t_{i+1}) t_i S^{1/2}dP^\top Q^{1/2}, \notag \\
\|d\Sigma_{t_i, t_{i+1}} \|_2 &\leq& ((1-t_i)t_{i+1} + (1 - t_{i+1}) t_i)\|Q^{1/2}\|_2\|S^{1/2}\|_2\|dP\|_2. \notag
\end{eqnarray}
In total, we can bound $\|df\|_2 \leq \gamma( Q,S, \eps) \|dP_n\|_2$ where $\gamma( Q, S, \eps) = $
\begin{eqnarray}
     \frac{\|Q^{1/2}\|_2\|S^{1/2}\|_2}{\sqrt{\eps}}  \left( \sqrt{\frac{t_1}{(1-t_1)}}\|Q^{-1/2}\|_2 +  \sqrt{\frac{t_N}{(1-t_N)}}\|S^{-1/2}\|_2 + \sum_{i=1}^{N-1} \frac{(1-t_i)t_{i+1} + (1 - t_{i+1}) t_i}{\sqrt{\eps t_i{t_{i+1}}(1-t_i)(1 - t_{i+1})}}\right). \notag
\end{eqnarray}

    Now we prove \eqref{eq:bound_f2}. We bound the norm of $f$ using formula \eqref{eq: f function def}
    \begin{eqnarray}
        \|f(P_n)\|_2 &=& \|\prod_{i=0}^N \left( \Sigma_{n;t_{i}, t_{i+1}}\right)\|_2 \leq \prod_{i=0}^N \left( \|\Sigma_{n;t_{i}, t_{i+1}}\|_2\right) \leq \|\Sigma_{n;t_{0}, t_{1}}\|_2 \cdot \|\Sigma_{n;t_{N}, t_{N+1}}\|_2 \notag \\
        &\leq& \|Q^{-1/2}\Sigma_{0, t_1} \Sigma^{-1/2}_{t_1, t_1} \|_2 \cdot \|\Sigma^{-1/2}_{t_N, t_N} \Sigma_{t_N, 1} S^{-1/2} \|_2 \notag\\
        &=& \|((1-t_1)Q^{1/2} + t_1P_nS^{1/2}) \Sigma^{-1/2}_{t_1, t_1} \|_2 \cdot \|\Sigma^{-1/2}_{t_N, t_N} (t_NS^{1/2}  + (1-t_N)Q^{1/2}P)\|_2. \notag
    \end{eqnarray}
    We note that $I - P_n^\top P_n  \succcurlyeq 0, I - P_n P_n^\top  \succcurlyeq 0$ and 
    \begin{eqnarray*}
         ( Q^{-1/2}\Sigma_{0, t_1})^\top (Q^{-1/2}\Sigma_{0, t_1}) &=& ((1-t_1)Q^{1/2} + t_1S^{1/2}P^\top_n)((1-t_1)Q^{1/2} + t_1P_nS^{1/2}) \notag \\
         &=& (1-t_1)^2 Q  + t_1^2 S^{1/2}P^\top_nP_nS^{1/2} \notag \\
         &+& t_1(1-t_1)[S^{1/2}P^\top_nQ^{1/2} + Q^{1/2}P_nS^{1/2} ] \notag \\
         &=& \Sigma_{t_1, t_1} - t_1^2 S^{1/2}(I - P^\top_nP_n)S^{1/2} - t_1(1-t_1)\eps I \notag \\
         &\preccurlyeq& \Sigma_{t_1, t_1} - t_1(1-t_1)\eps I.
    \end{eqnarray*}
    Similarly, we have
\begin{eqnarray}
    (\Sigma_{t_N, 1} S^{-1/2}) (\Sigma_{t_N, 1} S^{-1/2} )^\top &=& \Sigma_{t_N, t_N} - (1-t_N)^2 Q^{1/2}(I - P_nP^\top_n)Q^{1/2} - t_N(1-t_N)\eps I \notag \\
    &\preccurlyeq& \Sigma_{t_N, t_N} - t_N(1-t_N)\eps I. \notag 
\end{eqnarray}
Next, we consider 
    \begin{eqnarray}
        \|((1-t_1)Q^{1/2} + t_1P_nS^{1/2}) \Sigma^{-1/2}_{t_1, t_1} \|^2_2 &=& \|\Sigma^{-1/2}_{t_1, t_1} ((1-t_1)Q^{1/2} + t_1P_nS^{1/2})^\top ((1-t_1)Q^{1/2} + t_1P_nS^{1/2}) \Sigma^{-1/2}_{t_1, t_1} \|_2 \notag \\
        &=& \|\Sigma^{-1/2}_{t_1, t_1} (\Sigma_{t_1, t_1} - t_1^2 S^{1/2}(I - P^\top_nP_n)S^{1/2} - t_1(1-t_1)\eps ) \Sigma^{-1/2}_{t_1, t_1} \|_2 \notag \\
        &\leq & \|\Sigma^{-1/2}_{t_1, t_1} (\Sigma_{t_1, t_1}  - t_1(1-t_1)\eps ) \Sigma^{-1/2}_{t_1, t_1} \|_2 \notag \\
        &=& \|I  - t_1(1-t_1)\eps \Sigma_{t_1, t_1}^{-1}  \|_2 \leq 1 - t_1(1-t_1)\eps \lambda_{min} (\Sigma^{-1}_{t_1, t_1}) \notag \\
        &\leq& 1 - t_1(1-t_1)\eps /\lambda_{max} (\Sigma_{t_1, t_1})  = 1 - t_1(1-t_1)\eps/\|\Sigma_{t_1, t_1}\|_2.
    \end{eqnarray}
    We also can see that 
    \begin{eqnarray}
        \|\Sigma_{t_i, t_i}\|_2 &\leq& (1 - t_i)^2 \|Q\|_2 +  2 t_i (1 - t_i) \|Q^{1/2}\|_2 \|S^{1/2}\|_2 + t_i^2 \|S\|_2 + t_i (1- t_i)\epsilon \notag \\
        &\leq& ((1-t_i)\|Q^{1/2}\|_2 + t_i\|S^{1/2}\|_2)^2 + t_i (1- t_i)\epsilon. \notag 
    \end{eqnarray} 
    Thus, we conclude that
    \begin{eqnarray*}
       \|((1-t_1)Q^{1/2} + t_1P_nS^{1/2}) \Sigma^{-1/2}_{t_1, t_1} \|^2_2&\leq& 1 - \frac{t_1 (1- t_1)\epsilon}{((1-t_1)\|Q^{1/2}\|_2 + t_1\|S^{1/2}\|_2)^2 + t_1 (1- t_1)\epsilon} \notag \\
       &\leq& 1 - \frac{t_1 (1- t_1)\epsilon}{(\|Q^{1/2}\|_2 + \|S^{1/2}\|_2 + \sqrt{\eps})^2 }
    \end{eqnarray*}
    Similarly, we have
    $$\|\Sigma^{-1/2}_{t_N, t_N} (t_NS^{1/2}  + (1-t_N)Q^{1/2}P)\|^2_2 \leq 1 - \frac{t_N (1- t_N)\epsilon}{(\|Q^{1/2}\|_2 + \|S^{1/2}\|_2 + \sqrt{\eps})^2 }.$$
    The final result follows:
    \begin{eqnarray}
        \|f(P_n)\|_2 &\leq& \|((1-t_1)Q^{1/2} + t_1P_nS^{1/2}) \Sigma^{-1/2}_{t_1, t_1} \|_2 \cdot \|\Sigma^{-1/2}_{t_N, t_N} (t_NS^{1/2}  + (1-t_N)Q^{1/2}P)\|_2 \notag \\
        &\leq&  1 - \frac{t_1 t_N (1- t_1)(1- t_N)\epsilon}{(\|Q^{1/2}\|_2 + \|S^{1/2}\|_2 + \sqrt{\eps})^2 }. \notag 
    \end{eqnarray}
\end{proof}

Next, we switch from tracking the changes of normalized correlation matrices to tracking the changes of the optimality matrices. Recall that, for a 2D-dimensional Gaussian process $\mathcal{N} \left(\begin{pmatrix}
    \eta \\ \nu
\end{pmatrix}, \begin{pmatrix}
    Q & P  \\
    P^\top & S
\end{pmatrix} \right)$, the optimality matrix $A$ from the definition \eqref{eq: optimality matrix formula} is calculated as $$A(P) = \Xi(P, Q, S) = (S)^{-1}  P^\top ( Q - P (S)^{-1}P^\top )^{-1}.$$ 
Functions $\Xi(P_n, Q, S)$ and $A(P_n)$ can take the normalized correlation $P_n:= Q^{-\frac12} P S^{-\frac12}$ as the first argument. In this case, the formulas and notations are 
\begin{eqnarray}
    \Xi_n(P_n, Q, S) :=  S^{-1/2}P_n^\top \left(I - P_n P_n^\top\right)^{-1} Q^{-1/2}, \quad A(P_n) = \Xi_n(P_n, Q, S).  \label{eq: norm Xi formula}
\end{eqnarray}
\begin{lemma}[Optimality matrix map properties] \label{lem: optimality map properties}
    Let matrices $Q, S \succ 0$ be the marginals of a 2D-dimensional Gaussian distribution $\mathcal{N} \left(\begin{pmatrix}
    \eta \\ \nu
\end{pmatrix}, \begin{pmatrix}
    Q & P  \\
    P^\top & S
\end{pmatrix} \right)$ with the normalized correlation $P_n:= Q^{-\frac12} P S^{-\frac12}$. 
    Then the map from normalized correlations to optimality matrices $A(P_n) = S^{-1/2} P_n^\top \left(I - P_n P_n^\top\right)^{-1} Q^{-1/2}$ is bi-Lipschitz on the set $\{P_n \in \R^{D \times D} : \normo{P_n} \le \sqrt{1 - \omega}\}$ for any $0 < \omega < 1$. Specifically, for any $P_n$ and $\tilde{P_n}$ from this set, the following inequalities hold
    \[
    L \normo{P_n - \tilde{P_n}} \le \normo{A(P_n) - A(\tilde{P_n})} \le M_{\omega} \normo{P_n - \tilde{P_n}}, 
    \]
    where
    \[
    L = \frac{1}{\sqrt{2 D} \normo{S}^{1/2} \cdot \normo{Q}^{1/2}}, \quad
    M_{\omega} = \normo{S^{-1}}^{1/2} \cdot \normo{Q^{-1}}^{1/2} \left(\frac{1}{\omega} + \frac{2}{\omega^2}\right).
    \]
\end{lemma}

Before proving the lemma, we introduce some notations. Let $h$ be a scalar function. 
For any diagonal matrix $\Lambda = \mathrm{diag}(\lambda_1, \dots, \lambda_D)$, we define
\[
h(\Lambda) = \mathrm{diag}\bigl(h(\lambda_1), \dots, h(\lambda_D)\bigr).
\]
Next, given a symmetric matrix $B \in \R^{D \times D}$ with spectral decomposition $B = Z \Lambda Z^\top$, we set
\[
h(B) = Z h(\Lambda) Z^\top.
\]

\begin{proof}
To estimate $M_{\omega}$, we differentiate $A(P_n)$ w.r.t. $P_n$ that
    \begin{eqnarray}
        d A &=&  S^{-1/2} P^\top_n \left(I - P_n P^\top_n\right)^{-1} (d P_n P_n^\top + P_n d P^\top_n) \left(I - P_n P_n^\top\right)^{-1} Q^{-1/2} \notag \\
        &+& S^{-1/2} d P^\top_n \left(I - P_n P^\top_n\right)^{-1} Q^{-1/2}.
    \end{eqnarray}    
    By the conditions of the lemma, $0 \preccurlyeq P_n P_n^\top \preccurlyeq (1 - \omega) I$, hence $\normo{\left(I - P_n P^\top_n\right)^{-1}} \le \frac{1}{\omega}$ and $\normo{P_n} \le 1$. Thus,
    \[
    \normo{d A} \le \normo{S^{-1/2}} \normo{Q^{-1/2}} \left(\frac{1}{\omega} + \frac{2}{\omega^2}\right) \normo{d P_n} .
    \]
    Since the ball $\{P_n : \normo{P_n} \le \sqrt{1 - \omega}\}$ is convex, this yields the bound $M_\omega$ on the Lipschitz constant.
    
    To estimate $L$, we define $B = S^{1/2} A Q^{1/2} = P^\top_n \left(I - P_n P_n^\top\right)^{-1}$ and note that
    \[
    B^\top B = \left(I - P_n P_n^\top\right)^{-1} P_n P_n^\top \left(I - P_n P_n^\top\right)^{-1}
    = \left(I - P_n P_n^\top\right)^{-2} - \left(I - P_n P_n^\top\right)^{-1}.
    \]
    Next, we define
    $h(x) = \frac{2}{1 + \sqrt{1 + 4 x}}$, $x \ge 0$, so that $h^{-1}(y) = y^{-2} - y^{-1}$, $0 < y \le 1$.
    Therefore, we have
    \begin{align}
        &I - P_n P_n^\top = h(B^\top B), \label{eq: I - P^2}\\
        &P_n^\top = B \left(I - P_n P_n^\top\right) = B h(B^\top B). \notag
    \end{align}
    For now, consider $B$ such that its singular values are positive and distinct (note that the set of such matrices is dense in $\R^{D \times D}$). 
    Then the SVD map $B \mapsto (U, \Lambda, V)$ such that $B = U \Lambda V^*$ is differentiable at $B$ \citep[see][Section 3.8.8]{magnus2019matrix}, thus so is the polar decomposition map $B \mapsto (Q, S)$ such that $B = KC$, where $K$ is orthogonal and $C$ is PSD matrices.
    As 
    \[
    P_n^\top = B h(B^\top B) = K C h(C^2) = U \Lambda h(\Lambda^2) V^*
    \]
    and $x h(x^2)$ is differentiable, we obtain that
    \[
    d P_n^\top = d K C h(S^2) + K d (C h(S^2)).
    \]
    Furthermore, $0 < h(x) \le 1$ and $(x h(x^2))' = \frac{2}{(1 + \sqrt{1 + 4 x^2}) \sqrt{1 + 4 x^2}} \in (0, 1]$, hence $0 \prec h(S^2) \preccurlyeq I$ and $C h(C^2)$ is $1$-Lipschitz w.r.t.\ the Frobenius norm \citep[Thm. 1.1]{wihler2009holder}. 
    Note that $\|K d C\|_F^2 = \Tr [(K d C)^\top d K C] = \Tr [(C d C) (K^\top d K)] = 0$ since $K^\top d K$ is skew-symmetric. It can be shown from the orthogonality of $K$:
    $$I = K^\top K \quad \Rightarrow \quad 0 = dI = dK^\top \cdot K + K^\top dK  \quad \Rightarrow \quad dK^\top \cdot K = - K^\top dK .$$
    Thus, we have 
    \[
    \normf{d B}^2 = \normf{d K C + K d C}^2 = \normf{d K C}^2 + \normf{K d C}^2 = \normf{d K C}^2 + \normf{d C}^2.
    \]
    Therefore,
    \begin{eqnarray}
        \normf{d P_n} &=& \normf{d K C h(C^2) + K d (C h(C^2))} \le \normf{d K C} \normo{h(C^2)} + \normf{d (C h(C^2))} \notag \\
    &\le& \normf{d K C} + \normf{d C} \le \sqrt{2} \normf{d B} .
    \end{eqnarray}
    In particular, 
    \[
    \normo{d P_n} \le \normf{d P_n} \le \sqrt{2} \normf{d B} \le \sqrt{2 D} \normo{d B} 
    \le \sqrt{2 D} \normo{S^{1/2}} \normo{Q^{1/2}} \normo{d A}.
    \]
    By continuity of the SVD and thus of the map $B h(B^\top B)$, this yields that
    \[
    L^{-1} = \sqrt{2 D} \normo{S^{1/2}} \normo{Q^{1/2}} .
    \]
\end{proof}
Now we can show that the function, changing the optimality matrix during IMF step, is Lipschitz. This function is constructed as follows: first, it transforms optimality matrix into the normalized correlation via $\Xi_n^{-1}$, then it makes an IMF step to obtain new normalized correlation via $f$ from \eqref{eq: f function def}, finally it transforms new correlation back to new optimality matrix via $\Xi_n$. 
\begin{corollary}[IMF step optimality matrix transition properties] \label{col: optimality transaction lip}
    Let matrices $Q, S \succ 0$ be the marginals of a 2D-dimensional Gaussian distribution $\mathcal{N} \left(\begin{pmatrix}
    \eta \\ \nu
\end{pmatrix}, \begin{pmatrix}
    Q & P  \\
    P^\top & S
\end{pmatrix} \right)$ with the optimality matrix $A$ defined in \eqref{eq: norm Xi formula}.  
    Set the function $g(A) := \Xi_n(f(\Xi_n^{-1}(A, Q, S)), Q, S)$, where $\Xi_n$ and $f$ defined in \eqref{eq: norm Xi formula} and \eqref{eq: f function def}, and $\Xi_n^{-1}(\cdot; Q, S)$ denotes the inverse map of $\Xi_n$ w.r.t.\ the first argument.

    Then $g$ is Lipschitz continuous with constant $\frac{M_\omega}{L} \gamma $ on  the set $\{A |\quad  \normo{\Xi_n^{-1}(A)} \le \sqrt{1 - \omega}\}$ for any $0 < \omega < 1$.
\end{corollary}
\begin{proof}
Lipschitz constant of the functions composition is the product of Lipschitz constants of the combined functions. From Lemma \ref{lem: optimality map properties}, we know that the constant for $\Xi_n$ is $M_\omega$, for $\Xi_n^{-1}$ is $1/L$ as inverse of $\Xi_n.$ For transition function $f$, the constant $\gamma$ comes from Lemma \ref{lem:IMF step multi}. \end{proof}

We also prove the upper bound for the normalized correlation for further proofs.
\begin{corollary}[Bound for $\|P_n\|^2_2$]\label{lem: P bound from A} 
Let matrices $Q, S \succ 0$ be the marginals of a 2D-dimensional Gaussian distribution $\mathcal{N} \left(\begin{pmatrix}
    \eta \\ \nu
\end{pmatrix}, \begin{pmatrix}
    Q & P  \\
    P^\top & S
\end{pmatrix} \right)$ with optimality matrix  $A = \Xi(P, Q, S)$  and normalized correlation $P_n = Q^{-\frac{1}{2}}P S^{-\frac{1}{2}}$. 
 Then the following bound holds true:
 \begin{equation}\label{eq: P bound from A}
     \|P_n\|_2^2 \leq 1 - \frac{2}{ 1 + \sqrt{1 + 4\|Q\|_2\|S\|_2\|A\|_2^2}}.
 \end{equation}
\end{corollary}
\begin{proof}
    We recall the explicit formula \eqref{eq: I - P^2} connecting $P_n$ and $A$:
    \begin{eqnarray}
        I - P_n P_n^\top = h(B^\top B),
    \end{eqnarray}
    where matrix $B := S^{1/2}A Q^{1/2}$ and scalar function $h(x) := \frac{2}{1 + \sqrt{1 + 4x}}, x \geq 0$. Given a $D \times D$ symmetric  positive definite matrix $C$ with spectral decomposition $C = U \Lambda U^*$, we set $h(C) = U h({\Lambda}) U^*$. We start with estimate 
    \begin{equation}
        \lambda_{min}(h(B^\top B)) = \lambda_{min}(I - P_nP_n^\top) = 1 - \lambda_{max}(P_nP_n^\top) = 1 - \normo{P_n}^2.
    \end{equation}
    Since function $h$ is monotonously decreasing on $[0, + \infty)$ and matrix $B^\top B$ has non-negative eigenvalues, we have $\lambda_{min}(h(B^\top B)) = h(\lambda_{max}(B^\top B))$ and continue with:
    \begin{eqnarray}
        \lambda_{min}(h(B^\top B)) &=& h(\lambda_{max}(B^\top B)) = h(\normo{B^\top B}) \notag \\
        &\geq& h(\normo{A}^2 \normo{\Sigma} \normo{\tilde{\Sigma}}) \notag \\
        &=& \frac{2}{ 1 + \sqrt{1 + 4 \normo{Q} \normo{S} \normo{A}^2}}. \notag
    \end{eqnarray}
    Combining bounds together, we conclude:
    \[
    \normo{P_n}^2 \leq 1 - \frac{2}{1 + \sqrt{1 + 4 \normo{Q} \normo{S} \normo{A}^2}}.
    \]
\end{proof}

Finally, we are ready to demonstrate convergence of the optimality matrix to the desired solution $A^* = \eps^{-1} I_d $ after an IMF step.
\begin{lemma} {IMF step convergence}\label{cor: A transaction}
Let matrices $Q, S \succ 0$ be the marginals of a 2D-dimensional Gaussian distribution $\mathcal{N} \left(\begin{pmatrix}
    \eta \\ \nu
\end{pmatrix}, \begin{pmatrix}
    Q & P  \\
    P^\top & S
\end{pmatrix} \right)$ with the optimality matrix $A$.
Then after IMF step, we obtain a new optimality matrix $\tilde{A} = g(A)$ (see Corollary \ref{col: optimality transaction lip}), satisfying the inequality
\[
\normo{\tilde{A} - \eps^{-1} I_d} \le \frac{M_\omega}{L} \gamma( Q,S, \eps) \normo{A - \eps^{-1} I_d},
\]
where
\begin{eqnarray}
    \gamma &=& \frac{\|Q^{\frac12}\|_2\|S^{\frac12}\|_2}{\sqrt{\eps}}  \left( \sqrt{\frac{t_1\|Q^{-\frac12}\|^2_2}{(1-t_1)}} +  \sqrt{\frac{t_N\|S^{-\frac12}\|_2^2}{(1-t_N)}} + \sum_{i=1}^{N-1} \frac{(1-t_i)t_{i+1} + (1 - t_{i+1}) t_i}{\sqrt{\eps t_i{t_{i+1}}(1-t_i)(1 - t_{i+1})}}\right), \notag\\
        \omega &=& \min\left\{1 - \normo{P_n}^2, 1 - \frac{t_1 t_N (1- t_1)(1- t_N)\epsilon}{(\|Q^{1/2}\|_2 + \|S^{1/2}\|_2 + \sqrt{\eps})^2 } \right\}, \notag \\
        L^{-1} &=& \sqrt{2 D} \normo{S^{1/2}} \normo{Q^{1/2}}, \notag \\
        M_{\omega} &=& \normo{S^{-1}}^{1/2} \cdot \normo{Q^{-1}}^{1/2} \left(\frac{1}{\omega} + \frac{2}{\omega^2}\right). \notag
\end{eqnarray}

\end{lemma}

\begin{proof}
    The IMF method with volatility parameter $\eps$ can be viewed as an iterative application of the transition function $g$. Since IMF converges to $A^* = \eps^{-1}I_d$, it follows that $A^*$ is a stationary point of $g$, i.e., $g(A^*) = A^*$. Hence, we apply Corollary \ref{col: optimality transaction lip} to get
    $$\|\tilde{A} - A^*\|_2 = \|g(A) - g(A^*)\|_2 \leq \frac{M_\omega}{L} \gamma\|A - A^*\|_2,$$
    where explicit values for $\gamma$ and $\omega, M_\omega, L$ are taken from Lemmas \ref{lem:IMF step multi} and \ref{lem: optimality map properties}, respectively. We only need to satisfy condition on $\omega$ from Corollary \ref{col: optimality transaction lip} for matrices $A, \tilde{A}, A^*$: 
    $$ \normo{\Xi_n^{-1}(A)} \le \sqrt{1 - \omega}.$$
    In terms of normalized correlations $P_n, \tilde{P}_n  = f(P_n)$ from \eqref{eq: f function def} and $P^*_n = \Xi^{-1}_n(A^*)$ from \eqref{eq: norm Xi formula}, the conditions are
\begin{eqnarray}
    1 - \|P_n\|^2_2 \geq \omega, \quad 1 - \|\tilde{P}_n\|^2_2 \geq \omega. \label{eq: 3 condition on alpha}
\end{eqnarray}
For the second inequality in \eqref{eq: 3 condition on alpha}, we use bound \eqref{eq:bound_f2} from Lemma \eqref{lem:IMF step multi} for the result of applying $f$:
\begin{eqnarray}
    1 - \|\tilde{P}_n\|^2_2 = 1 - \|f({P}_n)\|^2_2 \geq  \frac{t_1 t_N (1- t_1)(1- t_N)\epsilon}{(\|Q^{1/2}\|_2 + \|S^{1/2}\|_2 + \sqrt{\eps})^2 }. \notag 
\end{eqnarray}
Finally, we combine all the bounds under the single minimum.
\end{proof}

\subsection{Proof of D-IPMF Convergence Theorem \ref{thm-main multi}, \texorpdfstring{$D > 1$}{D>1}}
\label{sec:multidim_theory}
\begin{proof}
We denote by $Q_0$ marginal matrix at $t=0$ after the first IPF step. First, we note that all marginal matrices $Q$ at $t=0$ and $S$ at $t=1$ emerging during IPMF procedure are bounded by the initial ones (Lemma \ref{lem: ipf norm bound}):
\begin{eqnarray}
     \|S\|_2 &\leq&  \frac{\|\Sigma_1\|_2}{\min\{\lambda_{min}(S_0^{-\frac12}\Sigma_1 S_0^{-\frac12}), 1\}} =: u_S,  \|S^{-1}\|_2  \leq \max\{\lambda_{max}(S_0^{-\frac12}\Sigma_1 S_0^{-\frac12}), 1\}  \|\Sigma_1^{-1}\|_2 =: r_S,\label{eq: proof S bound} \\
 \|Q\|_2 &\leq&  \frac{\|\Sigma_0\|_2}{\min\{\lambda_{min}(Q_0^{-\frac12}\Sigma_0 Q_0^{-\frac12}), 1\}} =: u_Q,  \|Q^{-1}\|_2  \leq \max\{\lambda_{max}(Q_0^{-\frac12}\Sigma_0 Q_0^{-\frac12}), 1\} \|\Sigma_0^{-1}\|_2  =: r_Q. \label{eq: proof Q bound} 
\end{eqnarray}
\textbf{Optimality convergence and \underline{condition on $\epsilon$}.} Consider any IMF step during IPMF procedure which we denote by 
$$\begin{pmatrix}
    Q & P  \\
    P^\top  & S
\end{pmatrix} \overset{IMF}{\Longrightarrow} \begin{pmatrix}
    Q & \tilde{P} \\
    \tilde{P}^\top   & S
\end{pmatrix}, \quad P_n := Q^{-\frac12} P S^{-\frac12}.$$

We want to find such $\epsilon$ that new optimality matrix $\tilde{A} = \Xi(P, Q, S)$ becomes close to solution $A^* = \epsilon^{-1} I_D$ than starting $A = \Xi(P, Q, S)$. This transition from $A$ to $\tilde{A}$ satisfies (Lemma \ref{cor: A transaction}):
\begin{eqnarray}
\|\tilde{A} - A^*\|_2 &\le& 
 \left( \sqrt{2D} \left( \frac{1}{\omega} + \frac{2}{\omega^2}\right)\cdot \kappa(Q^\frac12)\kappa(S^\frac12)\right) \gamma( Q,S, \eps) \|A - A^*\|_2, \label{eq: A contractive}  \\
 \gamma &&\text{ is defined in \eqref{eq:bound_f1},} \notag\\
        \omega &=& \min\left\{1 - \normo{P_n}^2, 1 - \frac{t_1 t_N (1- t_1)(1- t_N)\epsilon}{(\|Q^{1/2}\|_2 + \|S^{1/2}\|_2 + \sqrt{\eps})^2 } \right\}, \label{eq: omega in final proof}
\end{eqnarray}
where $\kappa(\cdot)$ is condition number of a matrix. 

\textbf{Estimate $\omega$.} The second term of $\omega$ in \eqref{eq: omega in final proof} can be lower bounded by 
\begin{eqnarray}
    1 - \frac{t_1 t_N (1- t_1)(1- t_N)\epsilon}{(\|Q^{1/2}\|_2 + \|S^{1/2}\|_2 + \sqrt{\eps})^2} \geq 1- t_1 t_N (1- t_1)(1- t_N).
\end{eqnarray}
To estimate $1 - \|P_n\|_2^2$ in the second term, we use lower bound (Corollary \ref{lem: P bound from A}):
\begin{eqnarray}
1 - \|P_n\|_2^2 &\geq& \frac{2}{1 + \sqrt{1 + 4\|Q\|_2\|S\|_2\|A\|_2^2 }} \geq \frac{1}{\sqrt{1 + 4\|Q\|_2\|S\|_2\|A\|_2^2 }}. \notag 
\end{eqnarray}
Hence, we have lower bound for $\omega$:
$$\omega \geq \min\left\{\frac{1}{\sqrt{1 + 4\|Q\|_2\|S\|_2\|A\|_2^2 }}, 1- t_1 t_N (1- t_1)(1- t_N)\right\} \geq \frac{(1- t_1 t_N (1- t_1)(1- t_N))}{\sqrt{1 + 4\|Q\|_2\|S\|_2\|A\|_2^2 }}.$$
The change of difference norm after one IMF step is
\begin{eqnarray}
\|A' - A_*\|_2 \le 
 \underset{:= l(Q, S, \|A\|_2, \epsilon)}{\underbrace{6 \sqrt{D} \cdot \kappa(Q^\frac12) \kappa(S^\frac12) \cdot \frac{(1 + 4\|Q\|_2\|S\|_2\|A\|_2^2)}{(1- t_1 t_N (1- t_1)(1- t_N))^2}  \cdot \gamma(Q,S, \epsilon)}} \cdot \|A - A_*\|_2.   
\end{eqnarray}
Now we need to make this map contractive, i.e.,  bound the coefficient  $l(Q,S, \|A\|_2, \epsilon) < 1$ for all matrices $Q, S, A $ appearing during IPMF procedure.

 Universal bounds \eqref{eq: proof Q bound} and \eqref{eq: proof S bound} state that matrices $Q$ and $S$ lie on matrix compacts $B_Q := \{Q \succ 0|  \|Q\|_2 \leq u_Q, \|Q^{-1}\|_2 \leq r_Q\}$ and $B_S :=  \{S \succ 0| \|S\|_2 \leq u_S, \|S^{-1}\|_2 \leq r_S\}$, respectively.  Moreover, the function $l(Q, S,\|A\|_2 , \epsilon)$ is continuous w.r.t. all its parameters on these compacts. Hence, we can get rid of $Q,S$ dependency, since the following maximum is attained
$$l(\|A\|_2, \epsilon) = \max_{Q \in B_Q, S \in B_S} l(Q,S, \|A\|_2, \epsilon). $$

\textbf{$\|A\|_2$ dependency}. IPF steps do not change optimality matrices (Lemma \ref{lem: ipf does not change optimality}), hence, we consider only IMF steps here. We prove by induction that if at the first IMF step with initial optimality matrix $A_0$ coefficient $l(\|A_0\|_2 + 2\eps^{-1}, \epsilon) < 1$ is less than $1$, then all optimality matrices $\{A_i\}$ during IPMF procedure will be bounded by 
$$\|A_i\|_2 \leq  \|A_0\|_2 + 2\epsilon^{-1}, \quad \|A_i - A^*\|_2 \leq \|A_0 - A^*\|_2.$$

First, we note that the coefficient $l(\|A\|_2, \epsilon)$ is increasing  w.r.t. $\|A\|_2$. As the base, we show that after the first IMF step new matrix $A_1$ is bounded:
\begin{eqnarray}
    \|A_1\|_2 &\leq& \|A_1 - A^*\|_2 + \|A_*\|_2 \leq l(\|A_0\|_2, \epsilon)\|A_0 - A^*\|_2 + \|A^*\|_2 \notag \\
    &\leq& l(\|A_0\|_2 + 2\eps^{-1}, \epsilon)\|A_0 - A^*\|_2 + \|A^*\|_2 \leq \|A_0 - A^*\|_2 + \|A^*\|_2 \notag \\
    &\leq& \|A_0\|_2 + 2\|A^*\|_2 \leq  \|A_0\|_2 + 2\epsilon^{-1}. \notag
\end{eqnarray}
Moreover, we have
$$\|A_1 - A^*\|_2 \leq  \|A_0 - A^*\|_2.$$
Assume that the bounds $\|A_i\|_2 \leq \|A_0\|_2 + 2\epsilon^{-1}$ and $\|A_i - A^*\|_2 \leq \|A_0 - A^*\|_2$ hold for the $i$-th matrix, then, for the next matrix $A_{i+1}$, we prove:
\begin{eqnarray}
    \|A_{i+1} - A^*\|_2 &\leq& l(\|A_i\|_2, \epsilon)\|A_{i} - A^*\|_2  \leq l(\|A_0\|_2 + 2\eps^{-1}, \epsilon)\|A_i - A^*\|_2   \notag \\
    &\leq& \|A_{i} - A^*\|_2 \leq \|A_{0} - A^*\|_2, \notag \\
    \|A_{i+1}\|_2&\leq&  \|A_{i+1} - A^*\|_2 + \|A^*\|_2 \leq \|A_0 -A^*\|_2 + \|A^*\|_2 \leq \|A_0\|_2 +2\eps^{-1}. \notag
\end{eqnarray}
Thus, we take maximal possible norm among all matrices $\|A_i\|_2 \leq \|A_0\|_2 +2\eps^{-1}$ to upper bound the coefficient $l(\|A_i\|_2, \epsilon) \leq l(\|A_0\|_2 + 2\eps^{-1}, \epsilon) < 1.$

\textbf{The final condition \eqref{eq: final eps cond} on $\epsilon$ is}
$$\beta(Q_0, S_0, P_0, \epsilon) := \max_{Q \in B_Q, S \in B_S} \left[ 6 \sqrt{D} \cdot \kappa(Q^\frac12) \kappa(S^\frac12) \cdot \frac{(1 + 4\|Q\|_2\|S\|_2(\|A_0\|_2 + 2\epsilon^{-1})^2)}{(1- t_1 t_N (1- t_1)(1- t_N))^2}  \cdot \gamma(Q, S, \eps) \right] < 1.$$
We can see from the definition
$$\gamma( Q, S, \eps) := \frac{\|Q^{\frac12}\|_2\|S^{\frac12}\|_2}{\sqrt{\eps}}  \left( \sqrt{\frac{t_1\|Q^{-\frac12}\|^2_2}{(1-t_1)}} +  \sqrt{\frac{t_N\|S^{-\frac12}\|_2^2}{(1-t_N)}} + \sum_{i=1}^{N-1} \frac{(1-t_i)t_{i+1} + (1 - t_{i+1}) t_i}{\sqrt{\eps t_i{t_{i+1}}(1-t_i)(1 - t_{i+1})}}\right)$$
that the largest value of $\gamma$ is achieved when $\|Q^{1/2}\|_2, \|S^{1/2}\|_2, \|Q^{-1/2}\|_2, \|S^{-1/2}\|_2$ are the largest. Since these values are bounded by $\|Q^{1/2}\|_2 \leq \sqrt{u_Q}, \|S^{1/2}\|_2 \leq \sqrt{u_S}$ and $ \|Q^{-1/2}\|_2 \leq \sqrt{r_Q}, \|S^{-1/2}\|_2 \leq \sqrt{r_Q}$, we can estimate the maximum and get lower bound for $\eps$:
\begin{eqnarray}
    \beta(Q_0, S_0, P_0, \epsilon) &\leq& 6 \sqrt{D\cdot u_Qr_Qu_Sr_S}  \cdot \frac{(1 + 4u_Q u_S(\|A_0\|_2 + 2\epsilon^{-1})^2)}{(1- t_1 t_N (1- t_1)(1- t_N))^2}  \notag \\
    &\cdot& \frac{\sqrt{u_Qu_S}}{\sqrt{\eps}}  \left( \sqrt{\frac{t_1r_Q}{(1-t_1)}} +  \sqrt{\frac{t_Nr_S}{(1-t_N)}} + \sum_{i=1}^{N-1} \frac{(1-t_i)t_{i+1} + (1 - t_{i+1}) t_i}{\sqrt{\eps t_i{t_{i+1}}(1-t_i)(1 - t_{i+1})}}\right) \leq 1, \notag \\ 
   \eps &=& O \left(D \cdot r^2_Qr^2_S \cdot u_Q^4u_S^4 \cdot \|A_0\|_2^4\right). \label{eq: final eps cond} 
\end{eqnarray}
If the above $\epsilon$-condition \eqref{eq: final eps cond} holds true, then $A_k$ exponentially converges to $A^*$ (square appears since IPMF step includes two IMF steps):
$$\|A_{k} - A^*\|_2 \le \beta(Q_0, S_0, P_0, \epsilon)^{2k} \|A_0 - A^*\|_2.$$

\textbf{Marginals convergence.} Furthermore, we prove that marginals converge to ground truth $\Sigma_1$ as well.  We note that, during any condition  $\begin{pmatrix}
    Q & P  \\
    P^\top  & S
\end{pmatrix}$  of IPMF procedure, the norm of the normalized matrix $P_n = Q^{-\frac12} P S^{-\frac12}$ is bounded: 
$$\|P_n\|_2^2 \leq 1 - \frac{2}{ 1 + \sqrt{1 + 4\|Q\|_2\|S\|_2\|A\|_2^2}}.$$
Since $\|Q\|_2 \leq u_Q$ holds from \eqref{eq: proof Q bound},  $\|S\|_2 \leq u_S$ holds from \eqref{eq: proof S bound} and $\|A\|_2 \leq \|A_0\|_2 + 2\epsilon^{-1}$ (due to contractivity of $A$), we can upper bound the normalized correlation
$$\|P_n\|_2^2 \leq 1 - \frac{2}{ 1 + \sqrt{1 + 4u_Qu_S(\|A_0\|_2 + 2\epsilon^{-1})^2}} =: \alpha(Q_0, S_0, P_0, \epsilon)^2 < 1.$$
 Finally, we apply bounds from IPF steps Lemma \ref{lem: ipf improvement multi} at $k$-th step and put maximal norm value $\alpha(Q_0, S_0, P_0, \epsilon)^2$:
\begin{eqnarray}
    \|S_k^{-\frac12}\Sigma_1 S_k^{-\frac12} - I_D\|_2 &\leq& \alpha(Q_0, S_0, P_0, \epsilon)^2 \cdot \|S_{0}^{-\frac12}\Sigma_1S_{0}^{-\frac12} - I_D\|_2,\notag   \\
    \| \Sigma_1^{-\frac{1}{2}} ( \nu_k - \mu_1) \|_2 &\leq& \alpha(Q_0, S_0, P_0, \epsilon)^2 \cdot \|\Sigma_1^{-\frac12}(\nu_{0} - \mu_1)\|_2. \notag 
\end{eqnarray}
\end{proof}

\subsection{IMF step analysis in \texorpdfstring{$1D$}{1D}}\label{sec:IMF analysis}

\textbf{Preliminaries.} In case $D=1$, we change notation from matrices to scalars:
$$\begin{pmatrix}
    q & \rho  \\
    \rho  & s
\end{pmatrix} \overset{IMF}{\Longrightarrow} \begin{pmatrix}
    q & \tilde{\rho} \\
    \tilde{\rho}   & s
\end{pmatrix}, \quad \rho_n := \rho/\sqrt{sq}, \tilde{\rho}_n := \tilde{\rho}/\sqrt{sq}.$$
Using these notations, formula \eqref{eq: norm Xi formula} for optimality coefficient $\chi\in\mathbb{R}$ (instead of matrix $A$) can be expressed as 
\begin{equation} 
\Xi_n(\rho_n, q, s) = \frac{\rho_n}{\sqrt{sq} (1 - \rho_n^2)}  =  \chi \in (-\infty, +\infty). \label{eq:eps func}
\end{equation}
The function $\Xi_n$ is monotonously increasing w.r.t. $\rho_n \in (-1,1)$ and, thus, invertible, i.e., there exists a function $\Xi_n^{-1}: (-\infty, +\infty) \times \R_+ \times \R_+ \to (-1,1)$ such that 
\begin{equation}\label{eq:P func}
     \Xi^{-1}_n(\chi, s, q) =  \frac{\sqrt{\chi^2sq + 1/4} - 1/2}{\chi \sqrt{sq}}.
\end{equation}
The inverse function is calculated via solving quadratic equation w.r.t. $\rho_n.$

In our paper, we consider both discrete and continuous IMF. By construction, IMF step does change marginals of the process it works with. Moreover, for both continuous and discrete IMF, the new correlation converges to the correlation of the $\epsilon$-EOT between marginals.  
\begin{lemma}[Correlation improvement after (D)IMF step] \label{lem:IMF step}
    Consider a 2-dimensional Gaussian distribution with marginals $\mathcal{N}(\eta, q)$ and $ \mathcal{N}(\nu, s)$ and normalized correlation  $\rho_n \in (-1,1)$ between its components. After continuous IMF or DIMF with single time point $t$, we obtain normalized correlation $\tilde{\rho}_{n}$. The distance between $\tilde{\rho}_{n}$ and EOT correlation $\rho^*_n = \Xi_n^{-1}(\nicefrac{1}{\epsilon}, q, s)$ decreases as:
 \begin{eqnarray}
     |\tilde{\rho}_{n} - \rho^*_n| &\leq&  \gamma \cdot |\rho_n - \rho^*_n|, \notag
\end{eqnarray}
where factor $\gamma$ for continuous and discrete IMF (with $N=1)$ is, respectively, 
\begin{eqnarray}
     \gamma_{c} (q,s) &=&  \left|\frac{2\eps^2 qs \cdot f(0)}{(\eps^2 - 4q^2s^2)^\frac32}  \left(  \tanh^{-1}\left( \frac{\eps - 2q^2}{\sqrt{\eps - 4q^2s^2}}\right) + \tanh^{-1}\left( \frac{\eps - 2s^2}{\sqrt{\eps - 4q^2s^2}}\right) - \frac{\sqrt{\eps^2 + 4q^2s^2}}{\eps}\right)\right|, \label{eq:gamma cont} \\
     \gamma_d(q, s, t)  &=&  \frac{1}{ 1 + \frac{t^2(1-t)^2qs +  t(1-t)(t^2s + (1-t)^2q)\epsilon   + t^2(1-t)^2\epsilon^2 }{(1-t)^2((1-t)q + t\sqrt{qs})^2 + t^2(ts + (1-t)\sqrt{qs})^2   + t(1-t)((1-t)\sqrt{q}  + t\sqrt{s})^2\epsilon}} .\label{eq:gamam disc}
 \end{eqnarray}
\end{lemma}

\begin{proof} \textbf{Continuous case.}
    Following \citep[Eq. 42]{peluchetti2023diffusion}, we have the formula for $\tilde{\rho}_n$:
    \begin{eqnarray}
       \tilde{\rho}_n = f(\rho_n) &=& \exp\left\{-\epsilon\frac{\tanh^{-1}\left(\frac{c_1}{c_3}\right) + \tanh^{-1}\left(\frac{c_2}{c_3}\right)}{c_3}\right\} > 0,  \label{eq:rho_new cont} \\
       c_1 &=& \epsilon + 2s(\rho_n q - s), c_3 = \sqrt{(\epsilon + 2(\rho_n + 1)qs)(\epsilon + 2(\rho_n - 1)qs)}, \notag \\
       c_2 &=& \epsilon + 2q(\rho_n s - q). \notag
    \end{eqnarray}
    Note that the function $f(\rho_n)$ is positive and concave on $(-1,1)$, i.e., its derivative is decreasing on $(-1,1)$.
   Hence, in negative segment $(-1,0]$, the  distance until the fixed point $\rho_n^* > 0$ is decreasing faster, than in positive segment, and we need to deal only with the positive segment $[0,1)$. We will show that the function $f$ has a derivative norm bounded by $1$ on $[0,1)$, and, hence, it is contractive.  Due to concavity, its derivative is decreasing on $[0,1)$, and we can check the bound only for derivative at the point $\rho_n =0.$ Direct calculation gives us:
\begin{eqnarray}
    f'(0) &=&   \frac{f(0) \cdot 2\eps qs}{(\eps^2 - 4q^2s^2)^2}   \left( \eps\sqrt{\eps - 4q^2s^2} \left[\tanh^{-1}\left( \frac{\eps - 2q^2}{\sqrt{\eps - 4q^2s^2}}\right) + \tanh^{-1}\left( \frac{\eps - 2s^2}{\sqrt{\eps - 4q^2s^2}}\right)\right] - \eps^2 + 4q^2s^2\right), \notag \\
    \gamma_{c} (q,s) &=:& |f'(0)| < 1. \notag
\end{eqnarray}
    Thus, we can bound $|f'(\rho_n)| \leq \gamma_{c} (q,s), \forall \rho_n \in [0,1)$ and get on the whole interval $(-1,1)$
    \begin{eqnarray}
    |\tilde{\rho}_n - \rho^*_n| = |f(\rho_n) - f(\rho^*_n)| \leq \gamma_c(q,s) |\rho_n - \rho^*_n|. \notag 
\end{eqnarray}

\textbf{Discrete case ($N=1$).} We use explicit formula \eqref{eq: f function def} for a new correlation $\tilde{\rho}_n =  f(\rho_n)$ after D-IMF step  from \citep{gushchin2024adversarial} provided in the beginning of Section \ref{sec: imf multi}.

In the case of single point $t=t_1$ ($N=1$), we prove that the function $f(\rho_n)$ is a contraction map. The sufficient condition for the map to be contraction is to have derivative's norm  bounded by $\gamma_d < 1$. First, we can write down the simplified formula $f(\rho_n)$:
\begin{eqnarray}
    f(\rho_n) = \frac{((1-t)\sqrt{q} + t \rho_n \sqrt{s})(t\sqrt{s} + (1 - t) \rho_n \sqrt{q})}{(1-t)q + 2t(1-t)\rho_n \sqrt{sq} + t^2s + t(1-t)\epsilon}.\label{eq: rho_new disc}
\end{eqnarray}
Next, we simplify derivative  $f'(\rho_n)$:
\begin{eqnarray}
    \sigma_{0,t} &=& (1-t) \cdot q
 + t\cdot \rho, \notag\\
 \sigma_{t,1} &=& t\cdot s + (1-t) \cdot \rho, \notag \\
 \sigma_{t,t} &=& (1-t)^2 \cdot q + 2(1-t)t\cdot \rho + t^2\cdot s + t(1-t)\eps =  (1-t)\cdot \sigma_{0,t} + t\cdot \sigma_{t,1} + t(1-t)\epsilon, \notag \\
    f'(\rho_n) &=& \frac{(1-t)\sigma_{0,t}}{\sigma_{t,t}} + \frac{t\sigma_{t,1}} {\sigma_{t,t}} - 2 \cdot  \frac{t\sigma_{t,1}\cdot (1-t)\sigma_{0,t}}{\sigma_{t,t} \cdot \sigma_{t,t}} \label{eq: sigma N + 1 diff}.\notag 
\end{eqnarray}
We define new variables $\hat{\sigma}_{0,t}  \defeq (1-t)\sigma_{0,t}, \hat{\Sigma}_{t,1} \defeq t\sigma_{t,1}, \hat{\epsilon} = t(1-t)\epsilon$ and restate $f'$ as:
\begin{eqnarray}
    f' &=& \frac{\hat{\sigma}_{0,t}}{\hat{\sigma}_{0,t} + \hat{\sigma}_{1,t} + \hat{\epsilon}} + \frac{\hat{\sigma}_{1,t}}{\hat{\sigma}_{0,t} + \hat{\sigma}_{1,t} + \hat{\epsilon}} - \frac{2\hat{\sigma}_{0,t}\hat{\sigma}_{1,t}}{(\hat{\sigma}_{0,t} + \hat{\sigma}_{1,t} + \hat{\epsilon})^2} \\
    &=& \frac{(\hat{\sigma}_{0,t} + \hat{\sigma}_{1,t})(\hat{\sigma}_{0,t} + \hat{\sigma}_{1,t} + \hat{\epsilon}) - 2\hat{\sigma}_{0,t}\hat{\sigma}_{1,t}}{(\hat{\sigma}_{0,t} + \hat{\sigma}_{1,t} + \hat{\epsilon})^2} \notag \\
    &=& \frac{\hat{\sigma}_{0,t}^2 + \hat{\sigma}_{1,t}^2 + (\hat{\sigma}_{0,t}+\hat{\sigma}_{1,t})\hat{\epsilon}}{(\hat{\sigma}_{0,t} + \hat{\sigma}_{1,t} + \hat{\epsilon})^2}  \label{eq:1 point f'} \\
    &=& \frac{\hat{\sigma}_{0,t}^2 + \hat{\sigma}_{1,t}^2 + (\hat{\sigma}_{0,t}+\hat{\sigma}_{1,t})\hat{\epsilon}}{\hat{\sigma}_{0,t}^2 + 2\hat{\sigma}_{0,t}\hat{\sigma}_{1,t} + \hat{\sigma}_{1,t}^2 + 2(\hat{\sigma}_{0,t}+\hat{\sigma}_{1,t})\hat{\epsilon} +  \hat{\epsilon}^2} \\
    &=& \frac{1}{ 1 + \frac{2\hat{\sigma}_{0,t}\hat{\sigma}_{1,t} +  (\hat{\sigma}_{0,t}+\hat{\sigma}_{1,t})\hat{\epsilon} +  \hat{\epsilon}^2}{\hat{\sigma}_{0,t}^2 + \hat{\sigma}_{1,t}^2 + (\hat{\sigma}_{0,t}+\hat{\sigma}_{1,t})\hat{\epsilon}}}. \label{eq:disc f dir final}
\end{eqnarray}
We note that all terms in \eqref{eq:1 point f'} are greater than $0$ and, thus, $f$ is monotone:
\begin{equation}
        0 < f'(\rho_n), \quad \rho_n \in (-1, 1). \label{eq: disc imf rho_new der bound}
    \end{equation}
In the negative segment $\rho_n \in(-1, 0]$, the derivative norm $|f'|$ is greater than in the positive segment $[0,1)$, and value of the function is always larger than its argument. Thus, in negative segment, the  distance until the fixed point $\rho_n^* > 0$ is decreasing faster, than in positive segment.

For $\rho_n \in [0, 1)$, we can bound the fraction in denominator of \eqref{eq:disc f dir final} by taking its numerator's minimum at $\rho_n = 0$ and its denominator's maximum at $\rho_n = 1$, i.e,
    \begin{equation}
        0 < f' \leq \gamma_d(q, s, t) < 1, \notag
    \end{equation}
    \begin{equation}
        \gamma_d(q, s, t)  = \frac{1}{ 1 + \frac{t^2(1-t)^2qs +  t(1-t)(t^2s + (1-t)^2q)\epsilon   + t^2(1-t)^2\epsilon^2 }{(1-t)^2((1-t)q + t\sqrt{qs})^2 + t^2(ts + (1-t)\sqrt{qs})^2   + t(1-t)((1-t)\sqrt{q}  + t\sqrt{s})^2\epsilon}}. \notag
    \end{equation}
 We note that $\gamma_d(q, s, t)$ is increasing function w.r.t. $q, s.$ 
 
If we put into the function $f$ argument $\rho^*_n$ corresponding to the $\epsilon$-EOT correlation, DIMF does not change it. Hence, $\rho^*_n$ is the fixed point of $f(\rho_n)$, and we have 
\begin{eqnarray}
    |\tilde{\rho}_n - \rho^*_n| = |f(\rho_n) - f(\rho^*_n)| \leq \gamma_d(q,s, t) |\rho_n - \rho^*_n|. \notag 
\end{eqnarray}
\end{proof}

\begin{lemma}[$\chi$ improvement after (D)IMF step] \label{lem: imf improvement} Consider a 2-dimensional Gaussian distribution with marginals $\mathcal{N}(\eta, q)$ and $\mathcal{N}(\nu, s)$ and normalized correlation $\rho_n \in (-1,1)$ between its components. After continuous IMF or DIMF with a single time point $t$, we obtain new correlation $\tilde{\rho}_n$, such that $|\tilde{\rho}_n - \rho_n^*| \leq \gamma |\rho_n - \rho^*_n|$ where $\rho_n^* = \Xi_n^{-1}(\nicefrac{1}{\epsilon}, q, s)$ and $\gamma < 1$ is from \eqref{eq:gamma cont} for IMF and from \eqref{eq:gamam disc} for DIMF. We have bound in terms of $\chi = \Xi_n(\rho_n, q, s)$ and $ \tilde{\chi} = \Xi_n(\tilde{\rho}_{n}, q, s)$:
\begin{eqnarray}
 |\tilde{\chi} - \nicefrac{1}{\epsilon}| &\leq&  l(\rho_n, \rho^*_n, \gamma) \cdot |\chi - \nicefrac{1}{\epsilon}|,  \label{eq: eps bound} \\
     l(\rho_n, \rho_n^*, \gamma) &=& \left[1 -   (1 - \gamma) \frac{(1 - \max\{\rho_n^*, |\rho_n| \}^2)^2} {1 + \max\{\rho^*_n, |\rho_n| \}^2}\right]  < 1.  \notag
\end{eqnarray}
\end{lemma}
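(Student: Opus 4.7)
My approach is to transfer the contraction estimate on $\rho$ given by Lemma~\ref{lem:IMF step} into one on $\chi$ through the explicit relation $\chi = g(\rho) \defeq \frac{\rho}{\sigma\sigma'(1-\rho^2)}$. Note that $g$ is smooth and strictly increasing on $(-1,1)$ with $g'(\rho) = \frac{1+\rho^2}{\sigma\sigma'(1-\rho^2)^2}>0$, and by construction $g(\rho_*) = 1/\varepsilon_*$. Consequently $\chi - 1/\varepsilon_* = g(\rho) - g(\rho_*)$ and $\chi_{new} - 1/\varepsilon_* = g(\rho_{new}) - g(\rho_*)$ can each be written as integrals of $g'$, and the claimed bound reduces to comparing these two integrals.

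\textbf{Key step.} The crucial geometric fact I would establish first is that $\rho_{new}$ lies on the segment between $\rho$ and $\rho_*$. This is not automatic from the contraction $|\rho_{new}-\rho_*|\le \gamma|\rho-\rho_*|$ alone, but it follows once I observe that the IMF map $\rho\mapsto\rho_{new}(\rho)$ is monotonically increasing: in the discrete case the bound $f'(\Sigma)>0$ in \eqref{eq: disc imf rho_new der bound} gives this immediately, and in the continuous case the same property can be read off from \eqref{eq:rho_new cont}. A monotone contraction with fixed point $\rho_*$ cannot overshoot, so $\rho_{new}$ must lie between $\rho$ and $\rho_*$; in particular every point of the segment connecting $\rho$ and $\rho_*$ has absolute value at most $M \defeq \max\{\rho_*,|\rho|\}$. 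Assuming WLOG $\rho > \rho_*$ (the other case is symmetric), both $\chi_{new}-1/\varepsilon_*$ and $\chi-1/\varepsilon_*$ are then nonnegative and
\begin{equation*}
\chi - \tfrac{1}{\varepsilon_*} \;=\; \int_{\rho_*}^{\rho} g'(\tau)\,d\tau, \qquad \chi - \chi_{new} \;=\; \int_{\rho_{new}}^{\rho} g'(\tau)\,d\tau.
\end{equation*}
On $[-M,M]$, elementary estimates yield $\frac{1}{\sigma\sigma'}\le g'(\tau)\le \frac{1+M^2}{\sigma\sigma'(1-M^2)^2}$ (the lower bound using $(1-\tau^2)^2\le 1\le 1+\tau^2$), so
\begin{equation*}
\chi - \chi_{new} \;\ge\; \tfrac{\rho-\rho_{new}}{\sigma\sigma'}, \qquad \chi - \tfrac{1}{\varepsilon_*} \;\le\; \tfrac{(1+M^2)(\rho-\rho_*)}{\sigma\sigma'(1-M^2)^2}.
\end{equation*}
Combined with $\rho-\rho_{new}\ge (1-\gamma)(\rho-\rho_*)$ (a direct consequence of the contraction and the betweenness), dividing the first bound by the second gives $\frac{\chi-\chi_{new}}{\chi-1/\varepsilon_*}\ge (1-\gamma)\frac{(1-M^2)^2}{1+M^2}$, which rearranges into the claimed estimate with $l = 1-(1-\gamma)\frac{(1-M^2)^2}{1+M^2}$. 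Finally $l<1$ since $\gamma<1$ and $M<1$.

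\textbf{Expected obstacle.} The only delicate piece is verifying that the IMF map is monotone in the continuous case, where formula \eqref{eq:rho_new cont} mixes square roots and $\tanh^{-1}$'s; the discrete case with $N=1$ is immediate from the sign of $f'(\Sigma)$. Once monotonicity (and hence betweenness of $\rho_{new}$) is in hand, the remaining steps are routine single-variable integral estimates.
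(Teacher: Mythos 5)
Your proposal is correct and follows essentially the same route as the paper's proof: both arguments hinge on (i) monotonicity of the IMF map $\rho\mapsto\rho_{new}(\rho)$, which forces $\rho_{new}$ to lie between $\rho$ and $\rho_*$ so that $\chi_{new}$ stays on the same side of $\nicefrac{1}{\varepsilon_*}$ as $\chi$, and (ii) a lower bound on $\chi-\chi_{new}$ versus an upper bound on $\chi-\nicefrac{1}{\varepsilon_*}$ whose ratio is $\min|\Xi'|/\max|\Xi'|$ over the relevant interval, evaluated to $(1-M^2)^2/(1+M^2)$ with $M=\max\{\rho_*,|\rho|\}$. Where you differ is in how the lower bound on $\chi-\chi_{new}$ is obtained: the paper establishes that $\Xi$ is concave on $(-1,0]$ and convex on $[0,1)$ and then runs a case analysis (one case for $\chi>\chi_*$ and three subcases for $\chi<\chi_*$ depending on where $0$ falls relative to $\rho$ and $\rho_{new}$), whereas you simply write both differences as integrals of $\Xi'$ and use the pointwise bounds $\frac{1}{\sigma\sigma'}\le \Xi'(\tau)\le \frac{1+M^2}{\sigma\sigma'(1-M^2)^2}$ on $[-M,M]$. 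Your version is a genuine streamlining: it renders the convexity computation and the sign-of-$\rho$ case split unnecessary while landing on the identical constant $l$. The one point at which you and the paper are equally exposed is the monotonicity of the continuous-time map \eqref{eq:rho_new cont}; the paper's justification there is itself a brief qualitative argument about the relative growth of $c_1,c_2,c_3$, so your flagging of this as the delicate step is apt, and completing it carefully would strengthen both proofs.
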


\begin{proof}
     \textbf{Monotone.} The function $f(\rho_n)$ from \eqref{eq:rho_new cont} for continuous IMF and from \eqref{eq: rho_new disc} for DIMF is monotonously increasing on $(-1,1)$.  
     The monotone means that the value  $\tilde{\rho}_{n}$ always remains from the same side from $\rho^*_n$: \begin{equation}\label{eq: eps trans imf}
    \begin{cases}
        \rho_n > \rho^*_n &\Longrightarrow f(\rho_n) > \rho^*_n, \\
        \rho_n \leq \rho^*_n &\Longrightarrow f(\rho_n) \leq \rho^*_n, 
    \end{cases}
\end{equation} 
The same inequalities hold true for $\chi = \Xi_n(\rho_n, q, s), \tilde{\chi} = \Xi(\tilde{\rho}_n, q,s)$ and $\chi_* = \nicefrac{1}{\epsilon}$ as well: if $\chi < \chi_*$, then $\tilde{\chi} < \chi_*$ and vice versa, since $\Xi_n(\rho_n, q, s)$ is monotonously increasing w.r.t. $\rho_n$. 

\textbf{$\Xi_n$ Properties.} In this proof, we omit arguments $q, s$ of $\Xi_n^{-1}(\chi, q, s)$ and $\Xi_n(\rho_n, q, s)$, because they do not change during IMF step. The second derivative of the  function $\Xi_n(\rho_n)$ is
$$\frac{d^2\Xi_n}{d\rho_n^2}(\rho_n) =  \frac{2 \rho_n (3 + \rho_n^2)}{\sqrt{sq}(1 - \rho_n^2)^3}.$$
Hence, we have $\frac{d^2\Xi}{d\rho_n^2}(\rho_n) \leq 0$ for $\rho_n \in (-1,0]$ and  $\frac{d^2\Xi}{d\rho_n^2}(\rho_n) \geq 0$ for $\rho_n \in [0,1)$. It means that  the function $\Xi_n(\rho_n)$ is concave on $(-1,0]$ and convex on $[0,1)$.

The function $\Xi_n(\rho_n)$ is monotonously increasing w.r.t. $\rho_n$, thus, decreasing of the radius $h \defeq |\rho_n - \rho_n^*|$ around $\rho_n^*$ causes the decreasing of $|\chi - \chi_*|$ around $\chi_*$. We consider two cases: $\chi > \chi_*$ and $\chi < \chi_*$. 

\textbf{Case $\chi > \chi_*$}. We have $\rho_n = \rho_n^* + h, \chi = \Xi_n(\rho_n^* + h) = \Xi_n(\rho_n)$ and $ \Xi_n( \rho_n^* + \gamma h) \geq \tilde{\chi}$. We compare the difference using convexity on $[0,1)$:
    \begin{eqnarray}
        \chi - \tilde{\chi} &\geq& \Xi_n( \rho^*_n + h) - \Xi_n( \rho^*_n + \gamma h) \geq (\rho^*_n + h - (\rho^*_n + h \gamma))  \cdot \frac{d\Xi_n}{d\rho_n}(\rho^*_n + \gamma h) \notag \\
        &=& (1 - \gamma) h  \cdot \frac{d\Xi_n}{d\rho_n}(\rho^*_n + \gamma h). \notag
    \end{eqnarray}
    Since the derivative of $\Xi_n$ is always positive, we continue the bound:
    \begin{eqnarray}
        \Xi_n( \rho^*_n + h) - \Xi_n( \rho^*_n + \gamma h) \geq \min_{\rho_n' \in [\rho^*_n, \rho^*_n + h ]} \left|\frac{d\Xi_n}{d\rho_n}(\rho_n')\right| (1 - \gamma)|\rho_n - \rho^*_n|. \notag
    \end{eqnarray}
    Next, we use Lipschitz property of $\Xi_n$, i.e.,
    $$|\chi - \chi_*| = |\Xi_n(\rho_n) - \Xi_n(\rho^*_n)|  \leq \max_{\rho_n' \in [\rho^*_n, \rho^*_n + h ]}  \left|\frac{d\Xi_n}{d\rho_n}(\rho_n)\right| |\rho_n - \rho^*_n|,$$
    and combine it with the previous bound
    \begin{eqnarray}
        \chi - \tilde{\chi}  \geq \Xi_n( \rho^*_n + h) - \Xi_n( \rho^*_n + \gamma h) 
        &\geq& \frac{\min\limits_{\rho_n' \in [\rho^*_n, \rho_n ]} \left|\frac{d\Xi_n}{d\rho_n}(\rho_n')\right| }{\max\limits_{\rho_n' \in [\rho^*_n, \rho_n]} |\frac{d\Xi_n}{d\rho_n}(\rho_n')|}  (1 - \gamma)|\chi  - \chi_*|.\notag
    \end{eqnarray}
    \textbf{Case $\chi < \chi_*$}. We have $\rho_n = \rho^*_n - h, \chi = \Xi_n(\rho^*_n - h) = \Xi_n(\rho_n)$ and $ \Xi_n( \rho^*_n - \gamma h) \leq \tilde{\chi}$. There are three subcases for $\chi, \tilde{\chi}$ positions around $0$: 
\begin{enumerate}
    \item For positions $\chi_* > \tilde{\chi}\geq  \Xi_n( \rho^*_n - \gamma h) > \chi \geq 0$, we use \textit{convexity} of $\Xi_n$ on $[0,1)$ and obtain 
    \begin{eqnarray}
        \tilde{\chi} - \chi &\geq&  \Xi_n( \rho^*_n - \gamma h ) - \Xi_n( \rho^*_n - h) \geq (1 - \gamma)h \cdot \frac{d\Xi_n}{d\rho_n}(\rho^*_n - h) \notag \\
        &\geq&  \min_{\rho_n' \in [\rho^*_n - h, \rho^*_n]} \left|\frac{d\Xi_n}{d\rho_n}(\rho_n')\right| (1 - \gamma)|\rho_n - \rho^*_n|. \notag
    \end{eqnarray}

    \item For positions $\chi_* > 0 \geq \tilde{\chi}\ge \Xi_n( \rho^*_n - \gamma h) > \chi$ and $\chi_* \ge \tilde{\chi} \ge 0 \ge \Xi_n( \rho^*_n - \gamma h) > \chi$, we use \textit{concavity} of $\Xi_n$ on $(-1,0]$ and obtain
    \begin{eqnarray}
        \tilde{\chi} - \chi &\geq&  \Xi_n( \rho^*_n - \gamma h) - \Xi_n( \rho^*_n - h) \geq (1 - \gamma)h \cdot \frac{d\Xi_n}{d\rho_n}(\rho^*_n - \gamma h)  \notag \\
        &\geq&  \min_{\rho_n' \in [\rho^*_n - h, \rho^*_n]} \left|\frac{d\Xi_n}{d\rho_n}(\rho_n')\right| (1 - \gamma)|\rho_n - \rho^*_n|. \notag
    \end{eqnarray}

    \item For positions $\chi_* > \tilde{\chi} \ge \Xi_n( \rho^*_n - \gamma h)> 0 >\chi$, we use \textit{concavity} of $\Xi_n$ on $(-1,0]$  and \textit{convexity} of $\Xi_n$ on $[0,1)$ and obtain
    \begin{eqnarray}
        \tilde{\chi} - \chi  &\geq&  \Xi_n( \rho^*_n - \gamma h) - \Xi_n( \rho^*_n - h) =  [\Xi_n( \rho^*_n - \gamma h) - \Xi_n(0)] + [\Xi_n(0)  - \Xi_n( \rho^*_n - h)] \notag \\
        &\geq& (\rho^*_n - \gamma h) \cdot \frac{d\Xi_n}{d\rho_n}(0) + (h - \rho^*_n) \cdot \frac{d\Xi_n}{d\rho_n}(0)  = (1 - \gamma) h \cdot \frac{d\Xi_n}{d\rho_n}(0)  \notag \\
        &\geq&  \min_{\rho_n' \in [\rho^*_n - h, \rho^*_n]} \left|\frac{d\Xi_n}{d\rho_n}(\rho_n')\right| (1 - \gamma)|\rho_n - \rho^*_n|. \notag
    \end{eqnarray}
\end{enumerate}
Overall, we make the bound
\begin{eqnarray}
       \tilde{\chi} - \chi &\geq&  \min_{\rho_n' \in [\rho^*_n - h, \rho^*_n]} \left|\frac{d\Xi_n}{d\rho_n}(\rho_n')\right| (1 - \gamma)|\rho_n - \rho^*_n| \notag \\
        &\geq&  \frac{\min\limits_{\rho_n' \in [\rho_n, \rho^*_n ]} |\frac{d\Xi_n}{d\rho_n}(\rho_n')| }{\max\limits_{\rho_n' \in [\rho_n, \rho^*_n ]} |\frac{d\Xi_n}{d\rho_n}(\rho_n')|}  (1 - \gamma)|\chi - \chi_*|.\notag
    \end{eqnarray}
For the function $\Xi_n(\rho_n) =   \frac{\rho_n}{\sqrt{sq}(1 - \rho_n^2)}$, the centrally symmetrical derivative is
\begin{eqnarray}
    \frac{d\Xi_n}{d\rho_n}(\rho_n) &=&    \frac{1 + \rho_n^2}{\sqrt{sq} (1 - \rho_n^2)^2}.\notag 
\end{eqnarray}

The derivative $\frac{d\Xi_n}{d\rho_n}$ has its global minimum at $\rho_n = 0$. It grows as $\rho_n \to \pm 1$, hence, the maximum value is achieved at points which are farthest from $0$:
\begin{eqnarray}
\max\limits_{\rho_n' \in [\rho^*_n, \rho_n]} \left|\frac{d\Xi_n}{d\rho_n}(\rho_n')\right| &\leq& \frac{d\Xi_n}{d\rho_n}(\rho_n), \notag \\
\max\limits_{\rho_n' \in [\rho_n, \rho^*_n]} \left|\frac{d\Xi_n}{d\rho_n}(\rho_n')\right| &\leq& \max\left\{\frac{d\Xi_n}{d\rho_n}(\rho^*_n), \frac{d\Xi_n}{d\rho_n}(|\rho_n|) \right\}, \notag \\
\min\limits_{\rho_n' \in [-1, +1]} \left|\frac{d\Xi_n}{d\rho_n}(\rho_n')\right| &\geq& \frac{1}{\sqrt{sq}} . \notag
\end{eqnarray}

Thus, we prove the bound
$$|\chi - \chi_*| - |\tilde{\chi} - \chi_*| = |\tilde{\chi} - \chi| \geq  \frac{(1 - \max\{\rho^*_n, |\rho_n| \}^2)^2} {1 + \max\{\rho^*_n, |\rho_n| \}^2} (1 - \gamma)|\chi - \chi_*|. $$
$$|\tilde{\chi} - \chi_*| \leq\left[1 -   (1 - \gamma) \frac{(1 - \max\{\rho^*_n, |\rho_n| \}^2)^2} {1 + \max\{\rho^*_n, |\rho_n| \}^2}\right] | \chi - \chi_*|.$$
\end{proof}

\subsection{Proof of IPMF Convergence Theorem \ref{thm-main multi}, \texorpdfstring{$D = 1$}{D=1}}\label{sec:final proof}

\begin{proof}
\textbf{Notations.}
We introduce the notations for a $k$-th IPMF step in terms of scalars
\begin{eqnarray}
    \begin{pmatrix}
    \sigma_0 & \rho_{k}\\
     \rho_{k}   & s_k
\end{pmatrix} &\overset{IMF}{\Longrightarrow}& \begin{pmatrix}
    \sigma_0 & \tilde{\rho}_{k}  \\
    \tilde{\rho}_{k}  & s_k
\end{pmatrix} \overset{IPF}{\Longrightarrow} \begin{pmatrix}
    q_k & \rho_{k}' \\
    \rho_{k}'   & \sigma_1
\end{pmatrix}\notag \\
&\overset{IMF}{\Longrightarrow}& \begin{pmatrix}
    q_k & \hat{\rho}_{k}\\
    \hat{\rho}_{k}    & \sigma_1
\end{pmatrix}  \overset{IPF}{\Longrightarrow} \begin{pmatrix}
    \sigma_0 & \rho_{k+1}  \\
    \rho_{k+1}   & s_{k+1}
\end{pmatrix}, \notag
\end{eqnarray}
and means
$$\begin{pmatrix}
    \mu_0 \\ \nu_k
\end{pmatrix} \overset{IMF}{\Longrightarrow} \begin{pmatrix}
    \mu_0 \\ \nu_k
\end{pmatrix}  \overset{IPF}{\Longrightarrow} \begin{pmatrix}
    \eta_k \\ \mu_1
\end{pmatrix}  \overset{IMF}{\Longrightarrow} \begin{pmatrix}
    \eta_k \\ \mu_1
\end{pmatrix} \overset{IPF}{\Longrightarrow} \begin{pmatrix}
    \mu_0 \\ \nu_{k+1}
\end{pmatrix}. $$

We denote the variance of the $0$-th marginal after the $k$-th IPMF step as $q_k$. For the first one, we have formula \eqref{eq: s after 1 IPF tep} $q_0 = \sigma_0 - \sigma_0\tilde{\rho}_{n,0}^2\left( 1  - \frac{\sigma_1 }{s_0} \right)$, where $\tilde{\rho}_{n,0} = \rho_0/\sqrt{\sigma_0 s_0}$ is the normalized correlation after the first IMF step. More explicitly, $\tilde{\rho}_{n,0} \defeq f(\rho_{n,0})$, where $\tilde{\rho}_{n,0}$ is taken from \eqref{eq:rho_new cont} for continuous IMF and from \eqref{eq: rho_new disc} for DIMF. We denote optimality coefficients  $\chi_k \defeq \Xi_n(\frac{\rho_{k}}{\sqrt{\sigma_0 s_k}}, \sigma_0, s_k)$ and $\chi_* = \nicefrac{1}{\epsilon}$.

\textbf{Ranges.} We note that IMF step keeps $q_k, s_k, \eta_k, \nu_k$, while IPF keeps $\chi_k$.  Due to contractive update equations for $\chi_k$ \eqref{eq: eps trans imf} and for $s_k$ \eqref{eq: ipf sigma update}, the parameters $s_k, \chi_k$ remain on the same side from $\sigma_1, \frac{1}{\epsilon}$, respectively. Namely, we have ranges for the variances $s_k \in  [\min\{\sigma_1, s_0\},  \max\{\sigma_1, s_0\}] \defeq [\sigma_1^{min}, \sigma_1^{max}], q_k \in [\min\{\sigma_0, q_0\},  \max\{\sigma_0, q_0\}] \defeq [\sigma_0^{min}, \sigma_0^{max}] $ and parameters $\chi_k \in [\min\{\chi_*, |\chi_{0}|\}, \max\{\chi_*, |\chi_{0}|\}] \defeq [\chi^{min}, \chi^{max}] .$

 \textbf{Update bounds.} We use update bounds for $\chi$ \eqref{eq: eps bound} twice, for $s$ \eqref{eq: ipf sigma update} and for $\nu$ \eqref{eq: ipf update mu}, however, we need to limit above the coefficients $|\Xi_n^{-1}(\chi, q, s)|$ and $l(\Xi_n^{-1}(\chi, q, s), \Xi_n^{-1}(\chi_*, q, s), \gamma(q, s))$ over the considered ranges of the parameters $ q \in [\sigma_0^{min}, \sigma_0^{max}], s \in [\sigma_1^{min}, \sigma_1^{max}]$ and  $\chi \in [\chi^{min}, \chi^{max}]$. The functions $\Xi_n^{-1}, l, \gamma$ are defined in \eqref{eq:P func}, \eqref{eq: eps bound}, \eqref{eq:gamma cont} (or \eqref{eq:gamam disc} with fixed $t$), respectively. 
 
 Since the function  $|\Xi_n^{-1}(\chi, q, s)|$ is increasing w.r.t. $q, s'$ and $\chi$ (growing symmetrically around $0$ for $\chi$),  we take maximal values $\sigma_0^{max}, \sigma_1^{max}$  and $\chi^{max}$. Similarly, the function $l(\Xi_n^{-1}(\chi, q, s),\Xi_n^{-1}(\chi_*, q, s), \gamma(q, s))$ is increasing w.r.t. all arguments symmetrically around $0$. Hence, we maximize the function $|\Xi_n^{-1}|$ and the function $\gamma$, which is also increasing w.r.t. $q$ and $s$.

\textbf{Final bounds.} The final bound after $k$ step of IPMF are:
    \begin{eqnarray}
    |s_k^2 - \sigma_1^2| &\leq& \alpha^{2k}|s_0^2 - \sigma_1^2|, \notag \\
    |\nu_k - \mu_1| &\leq&  \alpha^k |\nu_{0} - \mu_1|, \notag \\
    |\chi_k - \nicefrac{1}{\epsilon}| &\leq& \beta^{2k}  |\chi_0 - \nicefrac{1}{\epsilon}|, \notag
\end{eqnarray}
where $\beta \defeq l(\Xi_n^{-1}(\chi^{max}, \sigma_0^{max}, \sigma_1^{max}), \Xi_n^{-1}(\chi_*, \sigma_0^{max}, \sigma_1^{max}), \gamma(\sigma_0^{max}, \sigma_1^{max}))$ and  $\alpha \defeq \Xi_n^{-1}\left(\chi^{max}, \sigma_0^{max} , \sigma_1^{max}\right) $ taking $l$ from \eqref{eq: eps bound}, $\gamma$ from \eqref{eq:gamma cont} for continuous IMF and from \eqref{eq:gamam disc} with fixed $t$ for discrete IMF.\end{proof}

\subsection{Proof of IPMF General Convergence Theorem~\ref{thm:ipmf_convergence}}
\label{proof:ipmf_convergence}
\begin{proof}
We split the proof into two parts. First, consider the discrete case.
\paragraph{Discrete case.} 
Let $k \ge 1$. Note that the transition probabilities $q^{4k+1}(x_{t_1}|x_0)$ can be bounded from below with $\alpha \mu(x_{t_1})$, where $\alpha \in (0, 1)$ and $\mu$ depend only on $t_1$, $\eps$ and supports of $p_0$ and $p_1$. Thus, we can bound $q^{4k+1}(x_1|x_0)\ge \alpha \mu'(x_1)$, with $\mu'(x_1)$ depending on $q^{4k+1}_{0, 1}$,
\begin{align}
proj_{\mathcal{M}}[q^{4k+1}] (x_1|x_0) & = \int proj_{\mathcal{M}}[q^{4k+1}] (x_{1}|x_{t_1}) q(x_{t_1}|x_0) dx_{t_1} \nonumber \\
& \ge \alpha \int proj_{\mathcal{M}}[q] (x_1|x_{t_1}) d\mu(x_{t_1}) =: \alpha \mu'(x_{1}). \label{eq:small_set}
\end{align}
Similar statement holds for $q^{4k+3}_{0, 1}$. Thus, the IPMF step is contracting. Specifically,
\begin{gather*}
\|q^{4k+2}_0 - p_0 \|_{TV} \leq (1 - \alpha) \| q^{4k}_1 - p_1 ||_{TV},\\
\|q^{4k+4}_1 - p_1\|_{TV} \leq (1-\alpha) \|q^{4k+2}_0 - p_0 \|_{TV}.,
\end{gather*}
where TV denotes Total Variation distance. Thus,
\begin{equation}
\label{eq:for_arg1_reference}
q^{4k}_0 \stackrel{TV}{\rightarrow} p_0,
\quad
q^{4k+2}_1 \stackrel{TV}{\rightarrow} p_1.
\end{equation}

Since $p_0$ and $p_1$ have compact supports, Prokhorov's theorem ensures the existence of a weakly converging subsequence $q^{4k_j}_{0, 1} \xrightarrow[]{w} \tilde{q}_{0, 1}$. Moreover, \eqref{eq:for_arg1_reference} ensures that $\tilde{q}_{0, 1} \in \Pi(p_0, p_1)$.

Let $\text{IMF}[q]$ be the result of the IMF-step applied to $q$, and let $\text{IPMF}[q]$ be the result of IPMF-step applied to $q$. 
Note that the IMF step is continuous w.r.t. weak convergence, since all intermediate steps have smooth transition (i.e., conditional) densities.
Combining the above results, we get that
\begin{equation}
\label{eq:for_arg4_reference}
    \text{IPMF}[q^{4k_j}_{0, 1}] \xrightarrow[]{w}_j \text{IPMF}[\tilde{q}_{0, 1}] = \text{IMF}[\text{IMF}[\tilde{q}_{0,1}]]
\end{equation}
 The equality holds due to $\tilde{q}_{0, 1} \in \Pi(p_0, p_1)$. Note that we also use the fact that convergence in TV is stronger than weak convergence. 

Recall that $q^{4k_j +4}_{0, 1} = \text{IPMF}[q^{4k_j}_{0, 1}]$. \eqref{eq:for_arg4_reference} ensures that for any fixed $n>0$ it holds $q^{4k_j +4n}_{0, 1} \xrightarrow[]{w}_j \text{IMF}^{2n}[\tilde{q}_{0,1}] $. Moreover, by Theorem 3.6 in [ASBM], it holds that $IMF^{2n}[\tilde{q}_{0,1}] \xrightarrow[]{w}_{n} q^*_{0, 1}$.

Thus, there exists a weakly converging subsequence 
\begin{equation}
\label{eq:for_arg6_reference}
q^{4l_i}_{0, 1} \xrightarrow[]{w} q^*_{0, 1}.
\end{equation}

Finally, we argue by contradiction: if $q^{4k}_{0, 1} \stackrel{w}{\nrightarrow} q^*_{0, 1}$, we can select a weakly converging subsequence $q^{4l_i}_{0, 1} \xrightarrow[]{w}q'_{0, 1} \neq q^*_{0, 1}$. But by \eqref{eq:for_arg6_reference} $q'_{0, 1} = q^*_{0, 1}$. This finishes the proof.

\paragraph{Continuous case.}
We now explain how to extend the above argument to the continuous-time setting.
The key point is to verify a Doeblin minorization condition for the Markovian
process obtained after the projection step (see, e.g., Section 2 in \cite{stroock2005introduction}).

Fix some $\delta \in (0,1/2)$. For each $k\in\mathbb{N}$, let
$(X_t^{4k+2})_{t\in[0,1]}$ denote the Markov diffusion corresponding to the
law $q^{4k+2}$, and let
\[
P_k(x,A)
:= \mathbb{P}\bigl(X^{4k+2}_1 \in A \,\big|\, X^{4k+2}_0 = x\bigr),
\qquad x\in\mathbb{R}^D,\ A\in\mathcal{B}(\mathbb{R}^D),
\]
be its transition kernel from time $0$ to time $1$. This admits a decomposition of the evolution on $[0,1]$ into three subintervals
$[0,\delta]$, $[\delta,1-\delta]$ and $[1-\delta,1]$. 

{
\color{black}
\textbf{Disspaptivity.} Next, we show that, by construction of the Markovian projection, the drift on $[\delta,1-\delta]$ is Lipschitz and dissipative. Set
\[
b^+(t,x) := \mathbb{E}\!\left[\frac{X_1-x}{1-t}\,\middle|\,X_t=x\right] = \frac{m_t(x)-x}{1-t}, \qquad m_t(x) := \mathbb{E}[X_1\mid X_t=x],
\]
and recall that since the supports of $p_0(x)$ and $p_1(x)$ are bounded, 
$\norm{X_1} \le M_1$ and $\|m_t(x)\| \le M_1$ for any $x$. Then
\[
\langle b^+(t,x),x \rangle = \frac{\langle m_t(x),x \rangle - \|x\|^2}{1-t} \le \frac{M_1\|x\| - \|x\|^2}{1-t} \le -\frac{1}{2(1-t)}\|x\|^2 + \frac{M_1^2}{2(1-t)}.
\]
This yields uniform coercivity (Lyapunov-type dissipativity) for any $t\in [\delta, 1-\delta]$.

\textbf{The drift is Lipshitz.} Consider
\[
\nabla_x b^{+}(t, x) = \frac{\nabla_x m_t(x)-I}{1-t}, 
\]
and recall that the transition density is
\[
\pi_x(x_1) := q_{t,1}(x_1 \mid x) \propto g(x_1) \exp \left(-\frac{\|x_1 - x\|^2}{2\varepsilon^2(1-t)}\right).
\]
Consequently, 
\begin{align*}
\nabla_x m_t(x) & = \nabla_x \mathbb{E}[X_1|X_t = x] = \nabla_x \int x_1 \pi_{x}(x_1) d\, x_1 = \int x_1 \nabla_x \pi_{x}(x_1)\, d x_1 \\
& = \int (x_1 \times \nabla_x  \log \pi_x(x_1)) \pi_x(x_1) \,  d x_1 = \frac{1}{2\varepsilon^2 (1-t)} \mathrm{Cov}_{\pi_x}[X_1]. 
\end{align*}
Thus, 
\[
\norm{\nabla_x m_t(x)} \le C \frac{\mathrm{diam}^2(\mathrm{supp} \, p_1)}{\varepsilon^2 (1-t)}.
\]
This ensures
\[
\|\nabla_x b^+(t,x)\| \le \frac{1}{1-t} + C \frac{\mathrm{diam}(\mathrm{supp}\, p_1)^2}{\varepsilon^2(1-t)^2}.
\]
A similar result holds for the backward drift
\[
b^-(t,x) := \mathbb{E}\!\left[\frac{x-X_0}{t}\,\middle|\,X_t=x\right] = \frac{x - m'_t(x)}{t}, \qquad m'_t(x) := \mathbb{E}[X_0\mid X_t=x].
\]

\textbf{Small-set condition on $[\delta, 1-\delta]$.} Fix $R>0$ and consider the transition from time $\delta$ to $1-\delta$. Denote by $p_k(x,y)$ the transition density of $X^{4k+2}_{1-\delta}$ given $X^{4k+2}_\delta=x$. Since the diffusion coefficient is constant non-degenerate ($\varepsilon^2 I_D$) and the drift is locally Lipschitz (with a uniform bound on $\|\nabla b^{+}(\cdot)\|$ on $[\delta,1-\delta]$), heat-kernel lower bounds for uniformly elliptic diffusions yield that $p_k$ is continuous and strictly positive, and admits a lower bound on compact sets with constants independent of $k$; 
\textcolor{black}{see, e.g., Theorem 7 in \cite{aronson1968non}.}

In particular, for any fixed $r>0$ there exists $m_{R,r}>0$ such that
\[
\inf_{k\in\mathbb N}\ \inf_{x\in B_R,\ y\in B_r} p_k(x,y)\ \ge\ m_{R,r}\ >0.
\]
Let $\nu_r$ be the uniform probability measure on $B_r$ and set $\beta_R:=m_{R,r}\,\mathrm{Leb}(B_r)$. Then for all $k\in\mathbb N$, all $x\in B_R$, and all measurable $A$,
\begin{equation}
\label{eq:small-set-mid}
    \mathbb{P}\bigl(X^{4k+2}_{1-\delta}\in A \,\big|\, X^{4k+2}_\delta=x\bigr) \ \ge\ \beta_R\,\nu_r(A)
\end{equation}.

{
\color{black}
By the Markov property, for any measurable $A$,
\[
\mathbb P\bigl(X^{4k+2}_{1}\in A \,\big|\, X^{4k+2}_\delta=x\bigr)
=\int \mathbb P\bigl(X^{4k+2}_{1}\in A \,\big|\, X^{4k+2}_{1-\delta}=z\bigr)\,
\mathbb P\bigl(X^{4k+2}_{1-\delta}\in dz \,\big|\, X^{4k+2}_\delta=x\bigr).
\]

Combining this with \eqref{eq:small-set-mid}, we obtain
\[
\mathbb P\bigl(X^{4k+2}_{1}\in A \,\big|\, X^{4k+2}_\delta=x\bigr)
\ge \beta_R \mu_{r,k}(A),
\qquad x\in B_R,
\]
where $\mu_{r,k}$ is the probability measure defined by
\begin{equation}
\label{eq:small-set-delta-to-1}
\textcolor{black}{\mu_{r,k}(A):=\int \mathbb P\bigl(X^{4k+2}_{1}\in A \,\big|\, X^{4k+2}_{1-\delta}=z\bigr)\,\nu_r(dz).}
\end{equation}
}

}

Next, we control the tails of distribution of $X^{4k+2}_\delta$ uniformly in $k$.
By the definition of the reciprocal projection, the segment $[0,1]$ between
$X_0$ and $X_1$ is (conditionally on $(X_0,X_1)$) distributed as a Brownian
bridge with variance parameter $\sigma^2 = \varepsilon^2\delta(1-\delta)$.
Hence, the marginal at time $\delta$ is a mixture of Gaussian laws with
covariance matrix $\sigma^2 I_D$ and mean
\[
m_\delta(x_0,x_1) = (1-\delta)x_0 + \delta x_1,
\]
where $(x_0,x_1)$ ranges over the support of the endpoint coupling.
Since the supports of $p_0$ and $p_1$ are bounded, there exists $R_0>0$ such
that $\|m_\delta(x_0,x_1)\|\le R_0$ for all $(x_0,x_1)$ in this support.
Standard Gaussian tail bounds then imply that, for any $\eta\in(0,1)$, we can
choose $R>0$ large enough so that
\[
\sup_{k\in\mathbb{N}}\ \sup_{x\in\mathrm{supp}(p_0)}
\mathbb{P}\bigl(X^{4k+2}_\delta \notin B_R \,\big|\, X^{4k+2}_0 = x\bigr)
\;\le\; \eta.
\]
Equivalently,
\begin{equation}
\label{eq:hit-ball}
\mathbb{P}\bigl(X^{4k+2}_\delta \in B_R \,\big|\, X^{4k+2}_0 = x\bigr)
\;\ge\; 1-\eta,
\qquad x\in\mathrm{supp}(p_0),\ k\in\mathbb{N}.
\end{equation}

Combining \eqref{eq:small-set-delta-to-1} and \eqref{eq:hit-ball}, we obtain, for
$x\in\mathrm{supp}(p_0)$ and any measurable $A\subset\mathbb{R}^D$,
\[
\begin{aligned}
P_k(x,A)
&= \mathbb{E}\Bigl[
    \mathbb{P}\bigl(X^{4k+2}_1\in A \,\big|\, X^{4k+2}_\delta\bigr)
    \,\Big|\, X^{4k+2}_0 = x
  \Bigr] \\
&\ge \mathbb{E}\Bigl[
    \mathbb{P}\bigl(X^{4k+2}_1\in A \,\big|\, X^{4k+2}_\delta\bigr)
    \mathbf{1}_{\{X^{4k+2}_\delta\in B_R\}}
    \,\Big|\, X^{4k+2}_0 = x
  \Bigr] \\
&\ge \beta_R \nu_R(A)
    \,\mathbb{P}\bigl(X^{4k+2}_\delta\in B_R \,\big|\, X^{4k+2}_0 = x\bigr) \\
&\ge \beta_R(1-\eta)\,\mu_{r,k}(A).
\end{aligned}
\]
Thus, for all $x\in\mathrm{supp}(p_0)$ and all $k\in\mathbb{N}$,
\[
P_k(x,\cdot) \;\ge\; \alpha\,\mu_{r,k}(\cdot)
\quad\text{with}\quad
\alpha := \beta_R(1-\eta)\in(0,1),\ \ \mu := \nu_R.
\]
That is, the family of kernels $(P_k)_k$ satisfies a uniform Doeblin
minorization on $\mathrm{supp}(p_0)$.

It is well known that such a minorization implies total-variation contraction:
for any probability measures $\lambda,\lambda'$ on $\mathrm{supp}(p_0)$,
\[
\|\lambda P_k - \lambda' P_k\|_{\mathrm{TV}}
\;\le\; (1-\alpha)\,\|\lambda - \lambda'\|_{\mathrm{TV}},
\qquad k\in\mathbb{N}.
\]
Applying this with $\lambda = q^{4k+1}_0$ and $\lambda' = p_0$ yields
\[
\|q^{4k+1}_1 - p_1\|_{\mathrm{TV}}
\;=\; \|q^{4k+1}_0 P_k - p_0 P_k\|_{\mathrm{TV}}
\;\le\; (1-\alpha)\,\|q^{4k+1}_0 - p_0\|_{\mathrm{TV}}.
\]
The convergence $q^{4k+1}_0 \to_{\mathrm{TV}} p_0$ is shown by the same argument
applied backward in time (interchanging the roles of $p_0$ and $p_1$), and we
conclude that
\[
q^{4k+1}_1 \xrightarrow[k\to\infty]{\mathrm{TV}} p_1.
\]
In particular, the continuous-time analogue of \eqref{eq:for_arg1_reference}
holds.

{
\color{black}
Having established the convergence of the marginals via the Doeblin minorization, it remains to ensure the well-posedness and stability of the iterative procedure. Next, we recall that the drift $b^{\pm}(t)$ is locally Lipschitz and dissipative. 

Next, by the symmetry of the IMF procedure, it suffices to consider the transitions to the midpoint, $(0, 1/2)$ and $(1, 1/2)$. \textcolor{black}{By standard diffusion theory,} the regularity of the drift ensures that the corresponding Markovian transition kernels are smooth. Consequently, because the updated joint distribution $q^{i+1}(x_0, x_{1/2}, x_1)$ is constructed by integrating these transition kernels against the previous distribution $q^{i}$, the mapping $q^i \mapsto q^{i+1}$ preserves weak convergence. Therefore, the IMF iteration is continuous with respect to the weak topology.
}
The rest of the proof is similar to the discrete case.
\end{proof}

\section{Experimental supplementary}
\label{appx:exp_details}\label{appx:add_exp}

\begin{table}[h]
    \centering
    \small
    \caption{Datasets and code used in our work along with their licenses.}
    \label{tab:license}
    \begin{tabular}{@{}l l l l@{}}
        \toprule
        Name & URL & Citation & License \\
        \midrule
        Colored MNIST & \href{https://github.com/ngushchin/EntropicNeuralOptimalTransport}{GitHub Link} & \cite{gushchin2023building} & MIT \\
        CelebA & \href{http://mmlab.ie.cuhk.edu.hk/projects/CelebA.html}{Dataset Link} & \cite{liu2015deep} & Non-commercial research only \\
        SB Benchmark & \href{https://github.com/ngushchin/schrodinger-bridge-benchmark}{GitHub Link} & \cite{gushchin2023building} & MIT \\
        ASBM Code & \href{https://github.com/Daniil-Selikhanovych/ASBM}{GitHub Link} & \cite{gushchin2024adversarial} & MIT \\
        DSBM Code & \href{https://github.com/yuyang-shi/dsbm-pytorch}{GitHub Link} & \cite{shi2023diffusion} & MIT \\
        \bottomrule
    \end{tabular}
\end{table}

\subsection{Illustrative \texorpdfstring{$2D$}{2D} example visualization.}

We provide the visualization of the starting processes and corresponding learned processes for \textit{Gaussian} $\!\rightarrow\!$ \textit{Swiss roll} translation in Fig.~\ref{fig:swiss_roll}. One can visually observe that all the particle trajectories or relatively straight and therefore close to the Schr\"{o}dinger Bridge problem solution.

\subsection{SB benchmark $\text{B}\mathbb{W}_{2}^{2}\text{-UVP}$}

We additionally study how well implementations of IPMF procedure starting from different starting processes map initial distribution $p_0$ into $p_1$ by measuring the metric $\text{B}\mathbb{W}_{2}^{2}\text{-UVP}$ also proposed by the authors of the benchmark \citep{gushchin2023building}. We present the results in Table~\ref{tab:bw_uvp_bench}. One can observe that DSBM initialized from different starting processes has quite close results and so is the case for ASBM  experiments with $\epsilon \in \{1, 10\}$, but with  $\epsilon =  0.1$ one can notice that ASBM starting from IPF and \textit{Identity} experience a decline in  $\text{B}\mathbb{W}_{2}^{2}\text{-UVP}$ metric.

\begin{table*}[!h]
\resizebox{\linewidth}{!}{%
\begin{tabular}{rrcccccccccccc}
\toprule
& & \multicolumn{4}{c}{$\epsilon=0.1$} & \multicolumn{4}{c}{$\epsilon=1$} & \multicolumn{4}{c}{$\epsilon=10$}\\
\cmidrule(lr){3-6} \cmidrule(lr){7-10} \cmidrule(l){11-14}
& Algorithm Type & {$D\!=\!2$} & {$D\!=\!16$} & {$D\!=\!64$} & {$D\!=\!128$} & {$D\!=\!2$} & {$D\!=\!16$} & {$D\!=\!64$} & {$D\!=\!128$} & {$D\!=\!2$} & {$D\!=\!16$} & {$D\!=\!64$} & {$D\!=\!128$} \\
\midrule  
Best algorithm on benchmark$^\dagger$ & Varies & $0.016$ & $0.05$ & $0.25$ & $0.22$ & $0.005$ & $0.09$  & $0.56$ & $0.12$ & $0.01$ & $0.02$ & $0.15$ & $0.23$ \\ 
\midrule
DSBM-IMF & \multirow{6}{*}{IPMF} & $0.1$ & $0.14$ & $0.44$ & $3.2$ & $0.13$ & $\mathbf{0.1}$ & $\mathbf{0.91}$ & $6.67$ & $0.1$ & $5.17$ & $66.7$ & $356$ \\   

DSBM-IPF &  & $0.35$ & $0.6$ & $0.6$ & $1.62$ & $\mathbf{0.01}$ & $0.18$ & $\mathbf{0.91}$ & $6.64$ & $0.2$ & $3.78$ & $81$ & $206$ \\

DSBM-\textit{Identity} &  & $0.13$ & $0.64$ & $2.67$ & $7.12$ & $0.1$ & $0.12$ & $2$ & $6.67$ & $0.02$ & $3.8$ & $86.4$ & $343$ \\

ASBM-IMF$^\dagger$ & & $\mathbf{0.016}$ & $\mathbf{0.1}$ & $0.85$ & $11.05$ & $0.02$ & $0.34$ & $1.57$ & $\mathbf{3.8}$ & $0.013$ & $0.25$ & $1.7$ & $4.7$ \\

ASBM-IPF &  & $0.05$& $0.73$& $32.05$& $10.67$& $0.02$& $0.53$& $4.19$ & $10.11$ & $\mathbf{0.002}$ & $\mathbf{0.18}$ & $2.2$ & $5.08$ \\   

ASBM-\textit{Identity}&  & $0.12$ & $2.65$ & $4.59$ & $40.3$ & $0.04$ & $0.45$ & $2.02$ & $4.76$ & $0.03$ & $0.2$ & $\mathbf{1.43}$ & $\mathbf{2.71}$ \\  
\midrule

$\text{SF}^2$M-Sink$^\dagger$ & Bridge Matching &$0.04$ & $0.18$ & $\mathbf{0.39}$ & $\mathbf{1.1}$ & $0.07$ & $0.3$ & $4.5$ & $17.7$ & $0.17$ & $4.7$ & $316$ & $812$ \\

\bottomrule
\end{tabular}}
\vspace{-2mm}
\captionsetup{justification=centering, font=scriptsize}
\caption{Comparisons of $\text{B}\mathbb{W}_{2}^{2}\text{-UVP}\downarrow$ (\%) between the ground truth static SB solution $p^T(x_0,x_1)$ and the learned solution on the SB benchmark. The best metric over is \textbf{bolded}. Results marked with $\dagger$ are taken from \citep{gushchin2024adversarial} or \citep{gushchin2023building}.}
\label{tab:bw_uvp_bench}
\vspace{-2mm}
\end{table*}

\begin{figure*}[t]
\begin{subfigure}[b]{0.245\linewidth}
\centering
\includegraphics[width=0.95\linewidth]{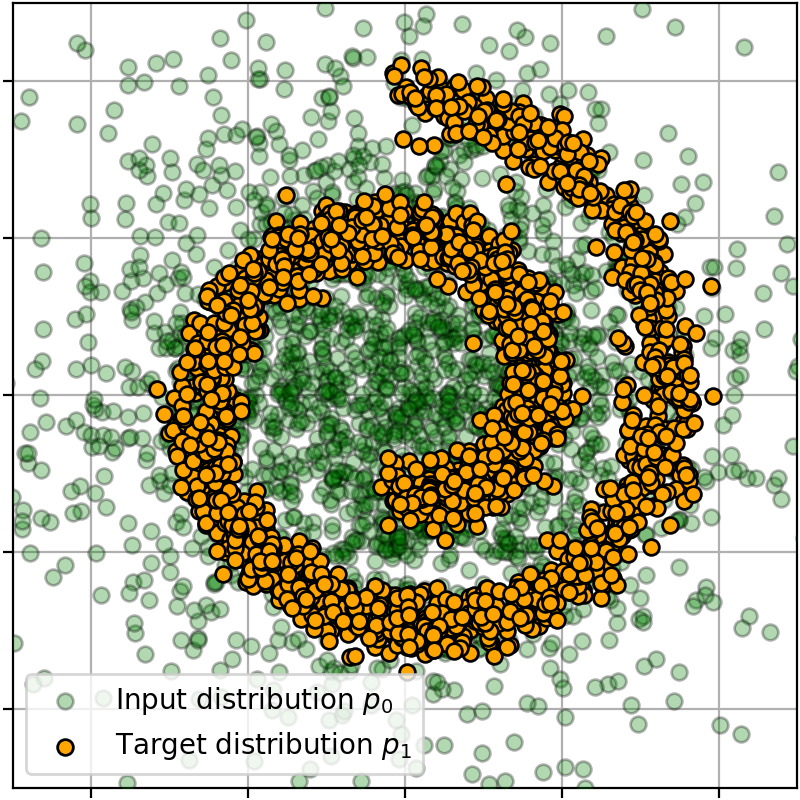}
\caption{\centering ${x_0\sim p_0
}$, ${x_1 \sim p_1}$.}
\vspace{-1mm}
\end{subfigure}
\vspace{-1mm}\hfill\begin{subfigure}[b]{0.245\linewidth}
\centering
\includegraphics[width=0.95\linewidth]{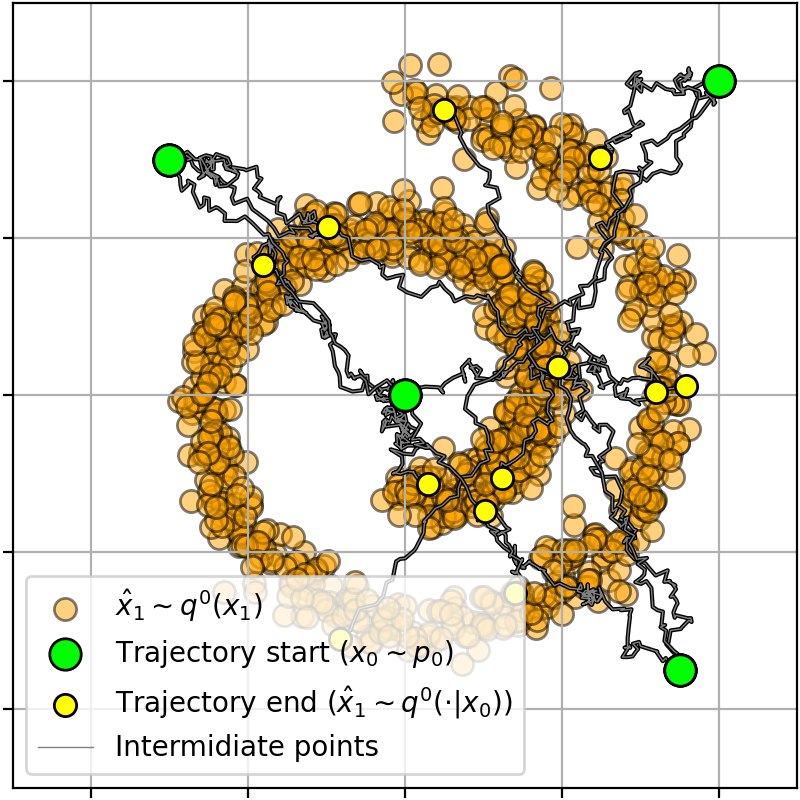}
\caption{\centering IMF starting process}
\vspace{-1mm}
\end{subfigure}
\hfill\begin{subfigure}[b]{0.245\linewidth}
\centering
\includegraphics[width=0.95\linewidth]{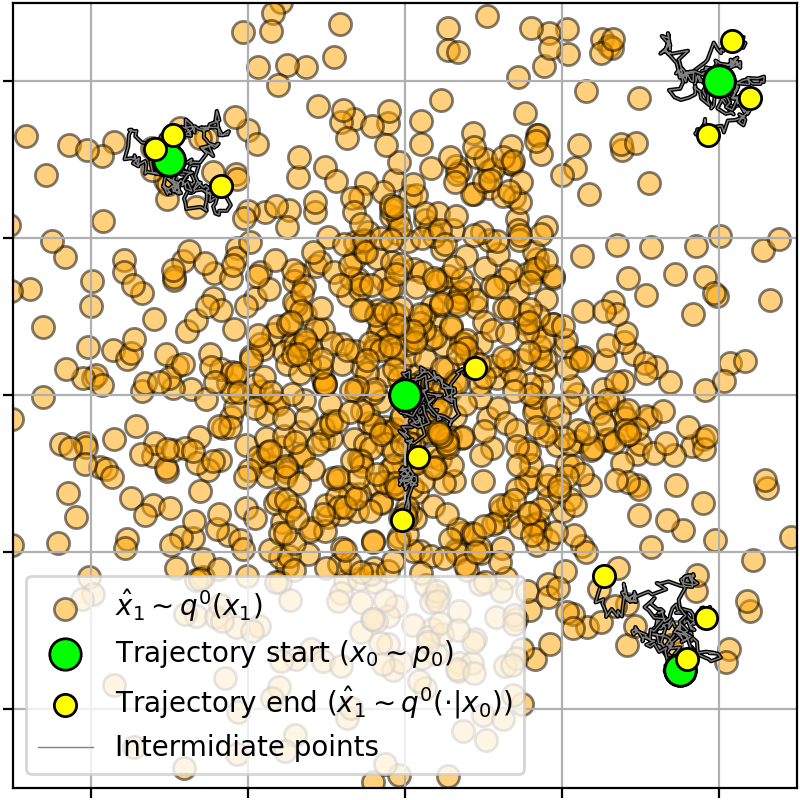}
\caption{\centering IPF starting process}
\vspace{-1mm}
\end{subfigure}
\hfill\begin{subfigure}[b]{0.245\linewidth}
\centering
\includegraphics[width=0.95\linewidth]{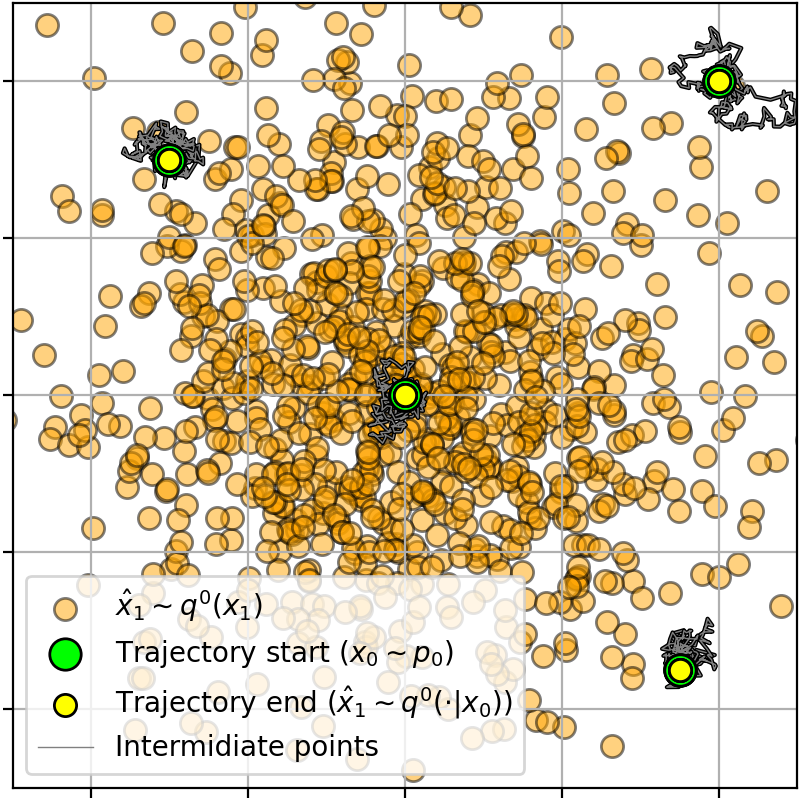}
\caption{\centering \textit{Identity}}
\label{fig:ind_p0_p0_2d}
\vspace{-1mm}
\end{subfigure}
\begin{subfigure}[b]{0.245\linewidth}
\vspace{3mm}
\centering
\includegraphics[width=0.95\linewidth]{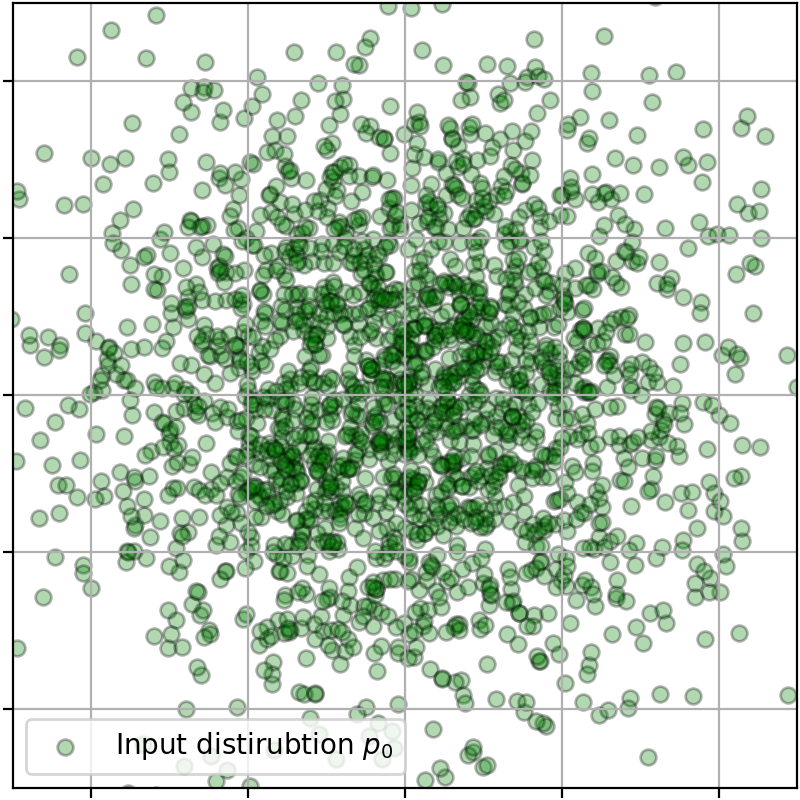}
\caption{\centering ${x_0\sim p_0
}$.}
\vspace{-1mm}
\end{subfigure}
\vspace{-1mm}\hfill\begin{subfigure}[b]{0.245\linewidth}
\centering
\includegraphics[width=0.95\linewidth]{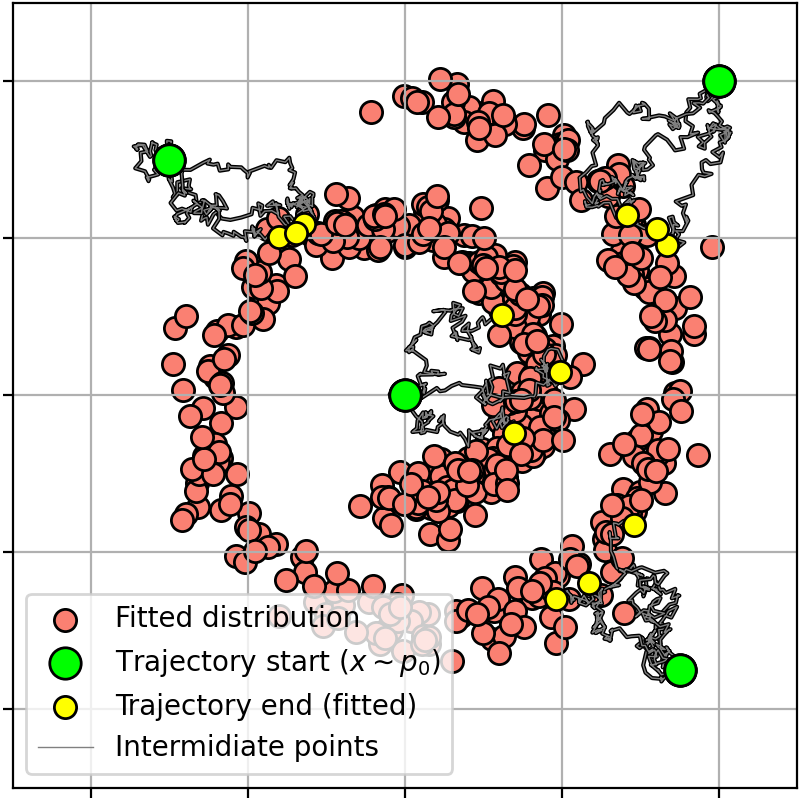}
\caption{\centering DSBM-IMF}
\vspace{-1mm}
\end{subfigure}
\hfill\begin{subfigure}[b]{0.245\linewidth}
\centering
\includegraphics[width=0.95\linewidth]{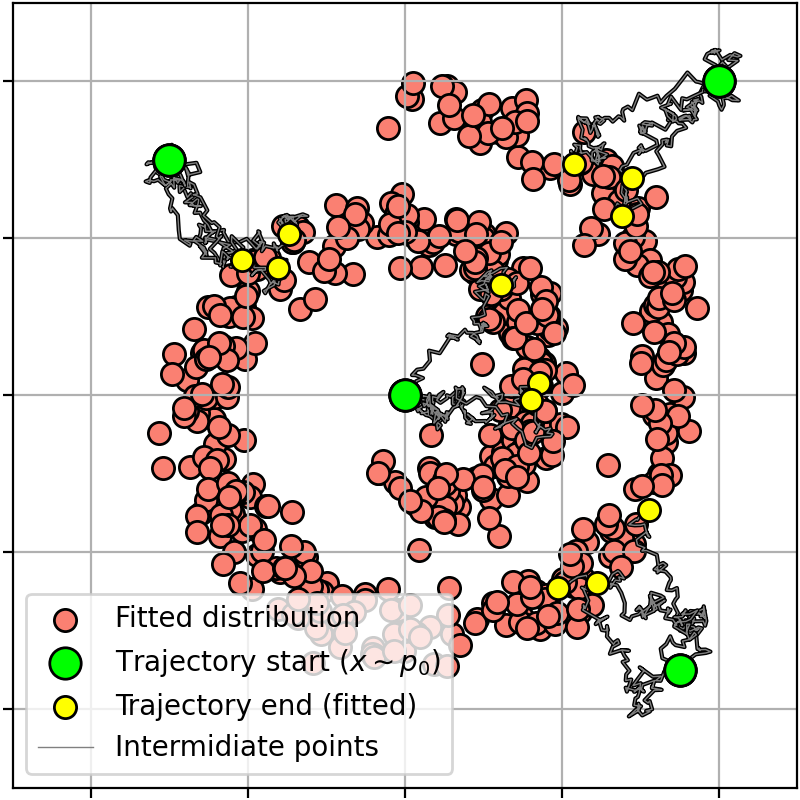}
\caption{\centering DSBM-IPF}
\vspace{-1mm}
\end{subfigure}
\hfill\begin{subfigure}[b]{0.245\linewidth}
\centering
\includegraphics[width=0.95\linewidth]{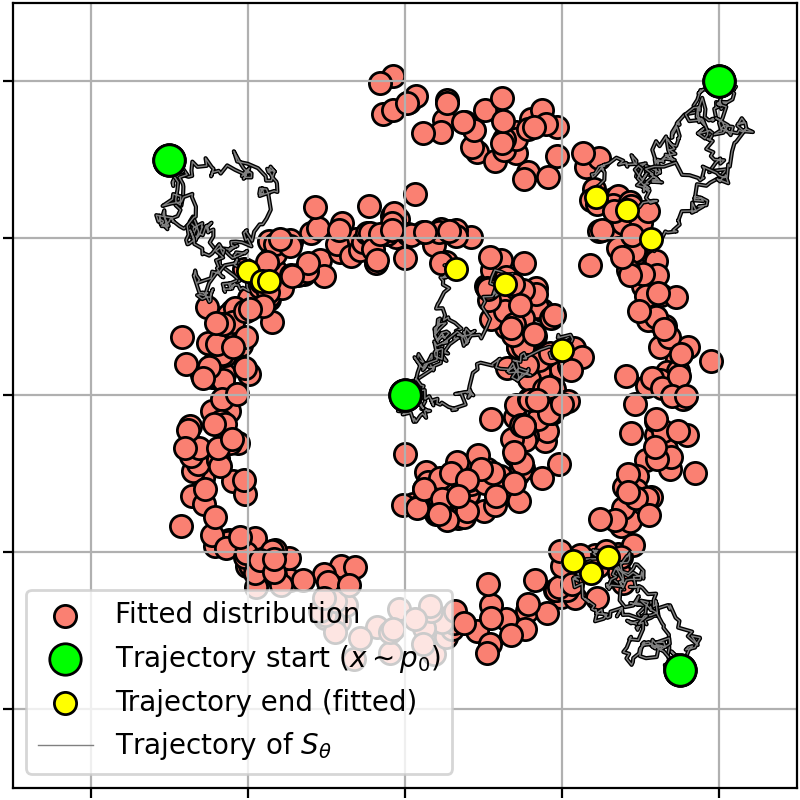}
\caption{\centering DSBM-\textit{Identity}}
\vspace{-1mm}
\end{subfigure}

\vspace{5mm}
\begin{subfigure}[b]{0.245\linewidth}
\centering
\includegraphics[width=0.95\linewidth]{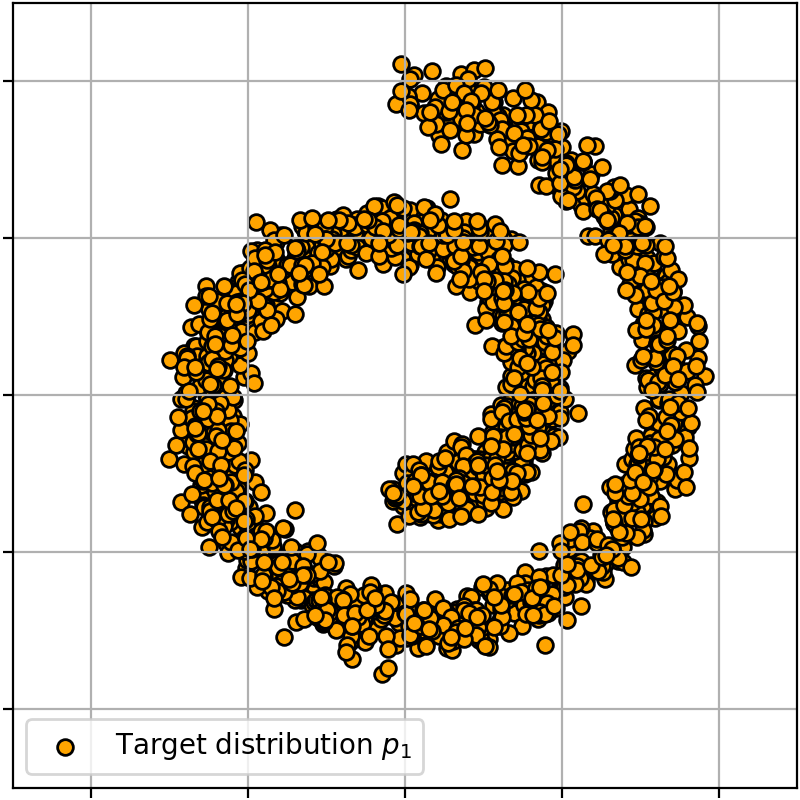}
\caption{\centering ${x_1 \sim p_1}.$}
\vspace{-1mm}
\end{subfigure}
\vspace{-1mm}\hfill\begin{subfigure}[b]{0.245\linewidth}
\centering
\includegraphics[width=0.95\linewidth]{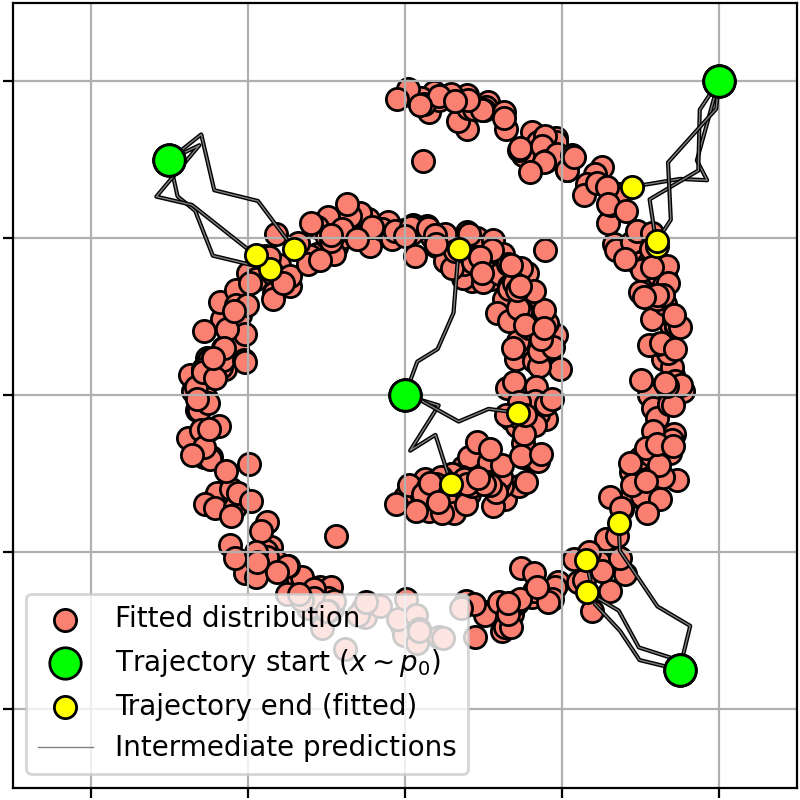}
\caption{\centering ASBM-IMF}
\vspace{-1.2mm}
\end{subfigure}
\hfill\begin{subfigure}[b]{0.245\linewidth}
\centering
\includegraphics[width=0.95\linewidth]{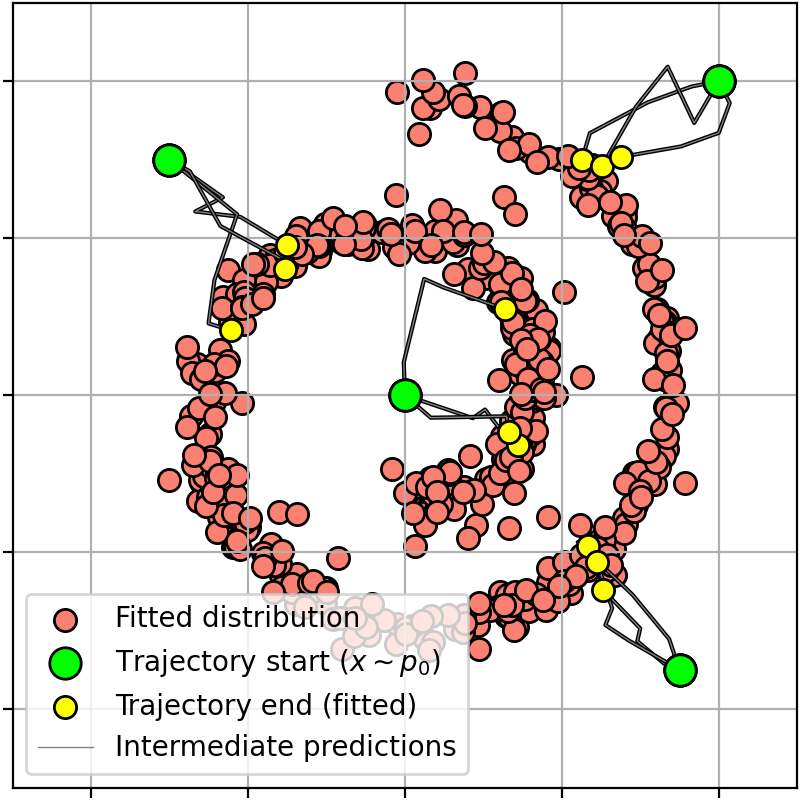}
\caption{\centering ASBM-IPF}
\vspace{-1.2mm}
\end{subfigure}
\hfill\begin{subfigure}[b]{0.245\linewidth}
\centering
\includegraphics[width=0.95\linewidth]{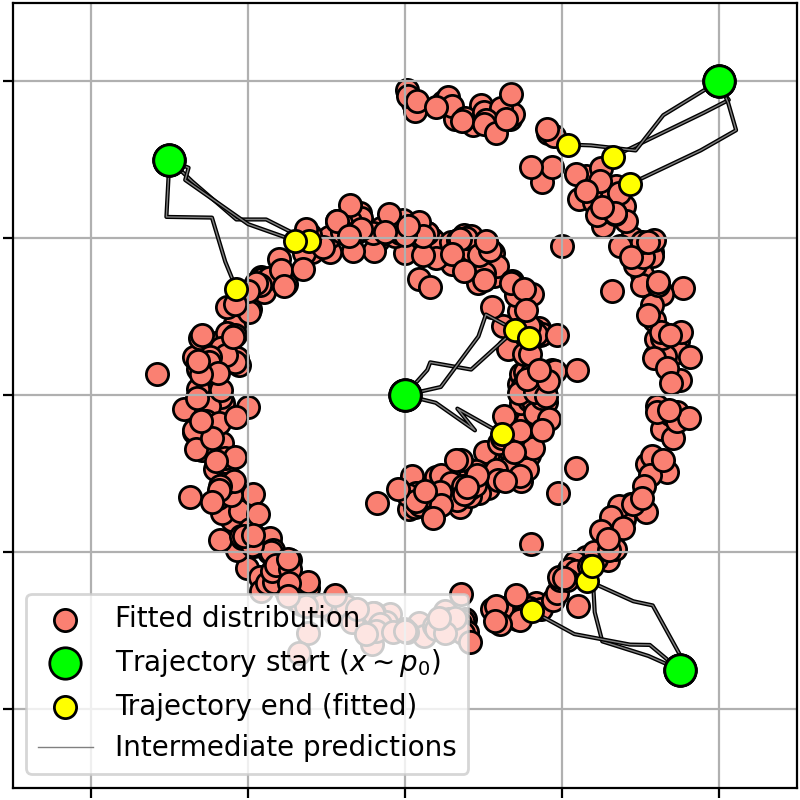}
\caption{\centering ASBM-\textit{Identity}}
\vspace{-1.2mm}
\end{subfigure}
\vspace{-1mm} \caption{\centering
\small Visualization of learned processes with DSBM and ASBM solvers for \textit{Gaussian} $\!\rightarrow\!$ \textit{Swiss roll} translation using IMF, IPF, \textit{Identity} starting processes for $\epsilon=0.1$.
}
\label{fig:swiss_roll}
\end{figure*}

\subsection{CelebA SDEdit starting processes description}\label{appx:celeba_add_exp}

The IPMF framework does not require the starting process to have $p_0, p_1$ marginals or to be a Schr\"{o}dinger bridge. One can then try other starting processes that would improve the practical performance of the IPMF algorithm. Properties of the starting process that would be desirable are (1) $q(x_0) = p_0(x_0)$ and marginal $q(x_1)$ to be close to $p_1(x_1)$ and (2) $q(x_0, x_1)$ to be close to SB. In the IMF or IPF, we had to choose one of these properties because we can not easily satisfy them both. 

We propose to take a basic image-to-image translation method and use it as a coupling to induce a starting process for the IPMF procedure. Such a coupling could provide the two properties mentioned above. We use SDEdit \citep{meng2021sdedit} which requires an already trained diffusion model (SDE prior). Given an input image $x$, SDEdit first adds noise to the input and then denoises the resulting image by the SDE prior to make it closer to the target distribution of the SDE prior. Various models can be used as an SDE prior. We explore two options: trainable and train-free. As the first option, we train the DDPM \citep{ho2020denoising} model on the CelebA 64$\times$64 size female only part. As the second option we take an already trained Stable Diffusion (SD) V1.5 model \citep{rombach2022high} with text prompts conditioned on which model generates 512$\times$512 images similar to the CelebA female part. We then apply SDEdit with the CelebA male images as input to produce similar female images using trainable DDPM and train-free SDv1.5 approaches, we call the starting processes generated by these SDEdit induced couplings \underline{DDPM-SDEdit} and \underline{SD-SDEdit}. Hyperparameters of SDEdit, DDPM and SDv1.5 are provided in Appendix~\ref{appx:celeba_exp_details}.


The visualization of the DSBM and ASBM implementations of the IPMF procedure starting from \textit{DDPM-SDEdit} and \underline{SD-SDEdit} processes is in Figure~\ref{fig:celeba_64_samples_ASBM_DSBM}. 




\subsection{CelebA experiment additional quantitative study}\label{app:additional_celeba_results}

In Table~\ref{tab:ipmf-results-app}, we report the final CMMD \citep{jayasumana2024rethinking} values for IPMF, while Figure~\ref{fig:cmmd_plots_ipmf} illustrates how this metric evolves over IPMF iterations. Both evaluations are performed on the same test set as in \wasyparagraph\ref{sec:exp_celeba}. Notably, the resulting CMMD curve closely mirrors the behavior observed for FID in Figure~\ref{fig:fid_l2_plots_ipmf}. Additionally, Figure~\ref{fig:celeba_exps_dsbm_eps_10} and Table~\ref{tab:ipmf-results-eps-10} present results obtained using DSBM and ASBM with the \textit{Identity} initialization process on the CelebA dataset, with $\epsilon = 10$.



\begin{figure*}[!t]
\captionsetup[subfigure]{font=scriptsize}
\centering
\begin{subfigure}[b]{0.095\linewidth}
\centering
\includegraphics[width=0.7475\linewidth]{pics/celeba/main_text_3x3/input_3.png}
\caption{\centering ${x\sim p_0}$}
\vspace{-3.5mm}
\end{subfigure}
\hspace{-1mm}
\begin{subfigure}[b]{0.22\linewidth}
\centering
\includegraphics[width=0.95\linewidth]{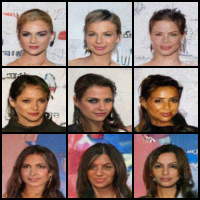}
\caption{\centering DSBM-\textit{Identity}}
\vspace{-3.5mm}
\end{subfigure}
\hspace{-1mm}
\begin{subfigure}[b]{0.22\linewidth}
\centering
\includegraphics[width=0.95\linewidth]{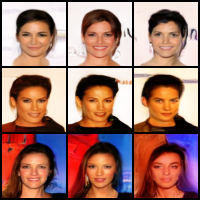}
\caption{\centering ASBM-\textit{Identity}}
\vspace{-3.5mm}
\end{subfigure}

\vspace{2mm}
\caption{ \centering \small Results on the CelebA dataset for the \textit{male} $\rightarrow$ \textit{female} translation task, where ${x_0 \sim p_0}$ represents samples from the source distribution. DSBM-\textit{Identity} and ASBM-\textit{Identity} refers to the outputs generated using trained DSBM/ASBM with the \textit{Identity} initialization. The model was trained with $\epsilon = 10$.} \label{fig:celeba_exps_dsbm_eps_10}
\vspace{-1mm}
\end{figure*}

\begin{table}[!t]
    \vspace{-2mm}
    \centering
    \scriptsize
    \setlength{\tabcolsep}{3pt}
    \resizebox{\linewidth}{!}{%
    \begin{tblr}{
        colspec = {r|cccc|ccccc|ccccc},
        colsep = 2pt,
        stretch = 0.9
    }
        \toprule
        & \SetCell[c=4]{c,m}{Initialisation (coupling)}
        & & & &
        \SetCell[c=5]{c,m}{DSBM}
        & & & & &
        \SetCell[c=5]{c,m}{ASBM} \\
        \cmidrule{2-5} \cmidrule{6-10} \cmidrule{11-15}
        &
        \shortstack{IMF\\{}} &
        \shortstack{DDPM\\SDEdit} &
        \shortstack{SD\\SDEdit} &
        \shortstack{Identity\\{}} 
        &
        \shortstack{IMF\\{}} &
        \shortstack{DDPM\\SDEdit} &
        \shortstack{SD\\SDEdit} &
        \shortstack{Identity\\{}} &
        \shortstack{Identity\\$\epsilon{=}10$}
        &
        \shortstack{IMF\\{}} &
        \shortstack{DDPM\\SDEdit} &
        \shortstack{SD\\SDEdit} &
        \shortstack{Identity\\{}} &
        \shortstack{Identity\\$\epsilon{=}10$} \\
        \midrule
        FID$\downarrow$ 
        & 0.0 & 35.23 & 28.77 & 61.56
        & \textbf{13.65} & \underline{14.84} & 22.65 & 33.11 & 65.50
        & \textbf{19.32} & 21.84 & 20.64 & \underline{19.58} & 27.47 \\
        MSE$(x_0, \widehat{x}_1)$$\downarrow$ 
        & 0.16 & 0.02 & 0.02 & 0.0
        & 0.16 & 0.09 & \underline{0.04} & \textbf{0.03} & 0.16
        & 0.17 & \textbf{0.07} & 0.08 & \underline{0.07} & 0.11 \\
        \bottomrule
    \end{tblr}}
    \caption{\small \centering 
        Extended for $\epsilon=10$ qualitative results on CelebA ($64\times64$) for \textit{male}$\rightarrow$\textit{female} translation with ASBM and DSBM across different starting processes. Generative quality (FID$\downarrow$) and similarity (MSE$(x_0,\widehat{x}_1)$$\downarrow$) are reported on the test set. Best and second-best values for solvers are marked in \textbf{bold} and \underline{underline}, respectively.
    }
    \label{tab:ipmf-results-eps-10}
    \vspace{-1mm}
\end{table}

\begin{table}[!t]
    \vspace{-2mm}
    \centering
    \scriptsize
    \setlength{\tabcolsep}{3pt}
    \resizebox{\linewidth}{!}{%
    \begin{tblr}{
        colspec = {r|cccc|cccc|cccc},
        colsep = 2pt,
        stretch = 0.9
    }
        \toprule
        & \SetCell[c=4]{c,m}{Initialisation (coupling)}
        & & & &
        \SetCell[c=4]{c,m}{DSBM}
        & & & &
        \SetCell[c=4]{c,m}{ASBM} \\
        \cmidrule{2-5} \cmidrule{6-9} \cmidrule{10-13}
        &
        \shortstack{IMF\\{}} &
        \shortstack{DDPM\\SDEdit} &
        \shortstack{SD\\SDEdit} &
        \shortstack{Identity\\{}} &
        \shortstack{IMF\\{}} &
        \shortstack{DDPM\\SDEdit} &
        \shortstack{SD\\SDEdit} &
        \shortstack{Identity\\{}} &
        \shortstack{IMF\\{}} &
        \shortstack{DDPM\\SDEdit} &
        \shortstack{SD\\SDEdit} &
        \shortstack{Identity\\{}} \\
        \midrule
        CMMD$\downarrow$ 
        & 0.0 & 0.31 & 0.69 & 0.84
        & \textbf{0.32} & 0.46 & 0.34 & \underline{0.33} 
        & \textbf{0.28} & 0.42 & \underline{0.32} & 0.51 \\
        \bottomrule
    \end{tblr}}
    \caption{\small \centering 
        Qualitative results on CelebA ($64\times64$) for \textit{male}$\rightarrow$\textit{female} translation with ASBM and DSBM across different starting processes. Generative quality (CMMD$\downarrow$) is reported on the test set. Best and second-best values for solvers are marked in \textbf{bold} and \underline{underline}, respectively. \color{black}
    }
    \label{tab:ipmf-results-app}
\end{table}

\begin{figure}[!t]
\centering
\begin{subfigure}[b]{0.48\linewidth}
\centering
\includegraphics[width=0.95\linewidth]{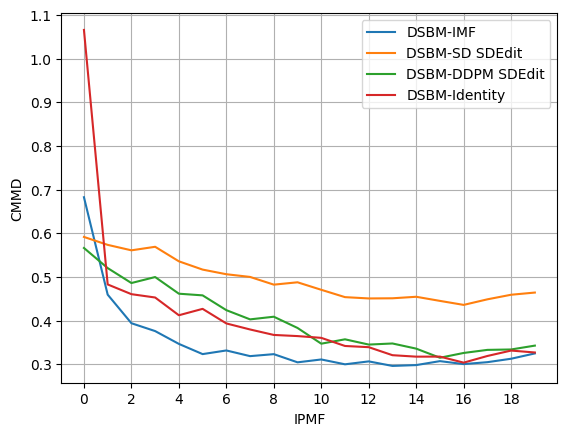}
\caption{\centering CMMD for DSBM with various couplings. }
\end{subfigure}
\begin{subfigure}[b]{0.48\linewidth}
\centering
\includegraphics[width=0.95\linewidth]{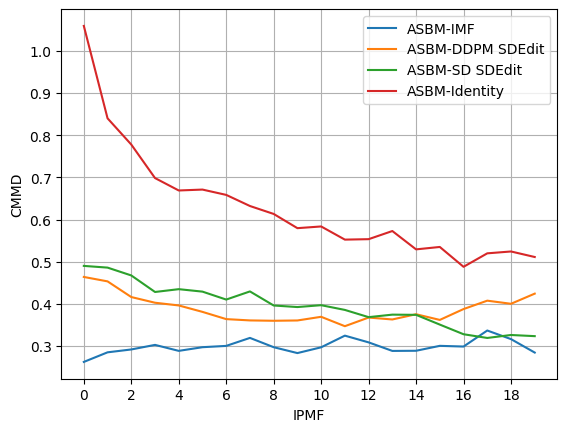}
\caption{ \centering CMMD for ASBM with various couplings.}
\end{subfigure}
\caption{\centering CMMD metric in CelebA \textit{male}$\rightarrow$\textit{female} (64 $\times$ 64) as a function of IPMF iteration for various starting couplings. }
\vspace{-5mm}
\label{fig:cmmd_plots_ipmf}
\end{figure}

\subsection{General experimental details}

Authors of ASBM \citep{gushchin2024adversarial} kindly provided us the code for all the experiments. All the hyperparameters including neural networks architectures were chosen as close as possible to the ones used by the authors of ASBM in their experimental section. Particularly, as it is described in \citep[Appendix D]{gushchin2024adversarial}, authors used DD-GAN \citep{xiaotackling} with  Brownian Bridge posterior sampling instead of DDPM's one and implementation from:

\begin{center}
    \url{https://github.com/NVlabs/denoising-diffusion-gan}
\end{center}

DSBM \citep{shi2023diffusion} implementation is taken from the official code repository:

\vspace{-2mm}
\begin{center}
    \url{https://github.com/yuyang-shi/dsbm-pytorch}
\end{center}

\vspace{-2mm}
Sampling on the inference stage is done by Euler Maryama SDE numerical solver \citep{kloeden1992} with indicated in Table~\ref{tab:hyperparam_ipmf} NFE.


The Exponential Moving Average (EMA) has been used to enhance generator's training stability of both ASBM and DSBM. The parameters of the EMA are provided in Table \ref{tab:hyperparam_ipmf}, in case the EMA decay is set to ``N/A'' no averaging has been applied.

\begin{table}[h]
    \centering
    \small
    \begin{tabular}{@{}l l l c r r@{}}
        \toprule
        Model & Dataset & Start process & IPMF iters & IPMF-0 Grad Updates & IPMF-k Grad Updates \\
        \midrule
        ASBM  & CelebA     & All & 20 & 200{,}000 & 20{,}000 \\
        DSBM  & CelebA     & All & 20 & 100{,}000 & 20{,}000 \\
        ASBM  & Swiss Roll & All & 20 & 400{,}000 & 40{,}000 \\
        DSBM  & Swiss Roll & All & 20 & 20{,}000  & 20{,}000 \\
        ASBM  & cMNIST     & All & 20 & 75{,}000  & 38{,}000 \\
        DSBM  & cMNIST     & All & 20 & 100{,}000 & 20{,}000 \\
        ASBM  & SB Bench   & All & 20 & 133{,}000 & 67{,}000 \\
        DSBM  & SB Bench   & All & 20 & 20{,}000  & 20{,}000 \\
        \bottomrule
    \end{tabular}
    
    \vspace{3mm}
    
    \begin{tabular}{@{}l l l c c c c c c@{}}
        \toprule
        Model & Dataset & Start process & NFE & EMA decay & Batch size & D/G opt ratio & Lr G & Lr D \\
        \midrule
        ASBM  & CelebA     & All & 4   & 0.999 & 32  & 1:1 & 1.6e-4 & 1.25e-4 \\
        DSBM  & CelebA     & All & 100 & 0.999 & 64  & N/A & 1e-4   & N/A     \\
        ASBM  & Swiss Roll & All & 4   & 0.999 & 512 & 1:1 & 1e-4   & 1e-4    \\
        DSBM  & Swiss Roll & All & 100 & N/A   & 128 & N/A & 1e-4   & N/A     \\
        ASBM  & cMNIST     & All & 4   & 0.999 & 64  & 2:1 & 1.6e-4 & 1.25e-4 \\
        DSBM  & cMNIST     & All & 30  & 0.999 & 128 & N/A & 1e-4   & N/A     \\
        ASBM  & SB Bench   & All & 32  & 0.999 & 128 & 3:1 & 1e-4   & 1e-4    \\
        DSBM  & SB Bench   & All & 100 & N/A   & 128 & N/A & 1e-4   & N/A     \\
        \bottomrule
    \end{tabular}
    \vspace{1mm}
    \caption{\centering\small Hyperparameters of models from CelebA (\wasyparagraph\ref{sec:exp_celeba}), SwissRoll (\wasyparagraph\ref{sec:2d_exp}), cMNIST (\wasyparagraph\ref{sec:cmnist_exp_main}) and Benchmark (\wasyparagraph\ref{sec:bench}) experiments. In ``Start process'', the column ``All'' states for all the used options. ``N/A'' corresponds to either not used or not applicable, the corresponding option.}
    \label{tab:hyperparam_ipmf}
\end{table}

\subsection{Illustrative 2D examples details}

\textbf{ASBM}. \label{sec:toy_asbm} For toy experiments the MLP with hidden layers $[256, 256, 256]$ has been chosen for both discriminator and generator. The generator takes vector of $(dim + 1 + 2)$ length with data, latent variable and embedding (a simple lookup table \texttt{torch.nn.Embedding}) dimensions, respectively. The networks have \texttt{torch.nn.LeakyReLU} as activation layer with $0.2$ angle of negative slope. The optimization has been conducted using \texttt{torch.optim.Adam} with running averages coefficients $0.5$ and $0.9$. Additionally, the \texttt{CosineAnnealingLR} scheduler has been used only at pretraining iteration with minimal learning rate set to 1e-5 and no restarting. To stabilize GAN training R1 regularizer with coefficient $0.01$ \citep{mescheder2018training} has been used.

\textbf{DSBM}. MLP with $[\text{dim} + 12, 128, 128, 128, 128, 128, \text{dim}]$ number of hidden neurons, \texttt{torch.nn.SiLU} activation functions, residual connections between 2nd/4th and 4th/6th layers and Sinusoidal Positional Embedding has been used.

\vspace{-1mm}
\subsection{SB benchmark details}
\vspace{-1mm}

Scr\"{o}dinger Bridges/Entropic Optimal Transport Benchmark \citep{gushchin2023building} and $\text{cB}\mathbb{W}_{2}^{2}\text{-UVP}$, $\text{B}\mathbb{W}_{2}^{2}\text{-UVP}$ metric implementation was taken from the official code repository:

\begin{center}
    \url{https://github.com/ngushchin/EntropicOTBenchmark}
\end{center}

Conditional plan metric $\text{cB}\mathbb{W}_{2}^{2}\text{-UVP}$ , see Table~\ref{table-cbwuvp-benchmark},  was calculated over predefined test set and conditional expectation per each test set sample estimated via Monte Carlo integration with 1000 samples. Target distribution fitting metric, $\text{B}\mathbb{W}_{2}^{2}\text{-UVP}$, see Table~\ref{tab:bw_uvp_bench}, was estimated using Monte Carlo method and 10000 samples.

\textbf{ASBM}.
\label{sec:bench_asbm}
The same architecture and optimizer have been used as in toy experiments \ref{sec:toy_asbm}, but without the scheduler.

\textbf{DSBM}. MLP with $[\text{dim} + 12, 128, 128, 128, 128, 128, \text{dim}]$ number of hidden neurons, \texttt{torch.nn.SiLU} activation functions, residual connections between 2nd/4th and 4th/6th layers and Sinusoidal Positional Embedding has been used.

\subsection{CMNIST details}

Working with the MNIST dataset, we use a regular train/test split with 60000 images and 10000 images respectively. We RGB color train and test digits of classes ``2'' and ``3''. Each sample is resized to $32\times 32$ and normalized by $0.5$ mean and $0.5$ std. 
\textbf{ASBM}.
\label{sec:cmnist_asbm_appendix}
The cMNIST setup mainly differs by the architecture used. The generator model is built upon the NCSN++ architecture \citep{song2021sde}, following the approach in \citep{xiaotackling} and \citep{gushchin2024adversarial}. We use 2 residual and attention blocks, 128 base channels, and $(1, 2, 2, 2)$ feature multiplications per corresponding resolution level. The dimension of the latent vector has been set to 100. Following the best practices of time-dependent neural networks sinusoidal embeddings are employed to condition on the integer time steps, with a dimensionality equal to $2\times$ the number of initial channel, resulting in a 256-dimensional embedding. The discriminator adopts ResNet-like architecture with 4 resolution levels. The same optimizer with the same parameters as in toy \ref{sec:toy_asbm} and SB benchmark \ref{sec:bench_asbm} experiments have been used except ones that are presented in Table \ref{tab:hyperparam_ipmf}. No scheduler has been applied. Additionally, R1 regularization is applied to the discriminator with a coefficient of 0.02, in line with \citep{xiaotackling} and \citep{gushchin2024adversarial}.

\begin{wrapfigure}{r}{.40\textwidth}
    \vspace{-8mm}
    \centering
    \includegraphics[width=0.9\linewidth]{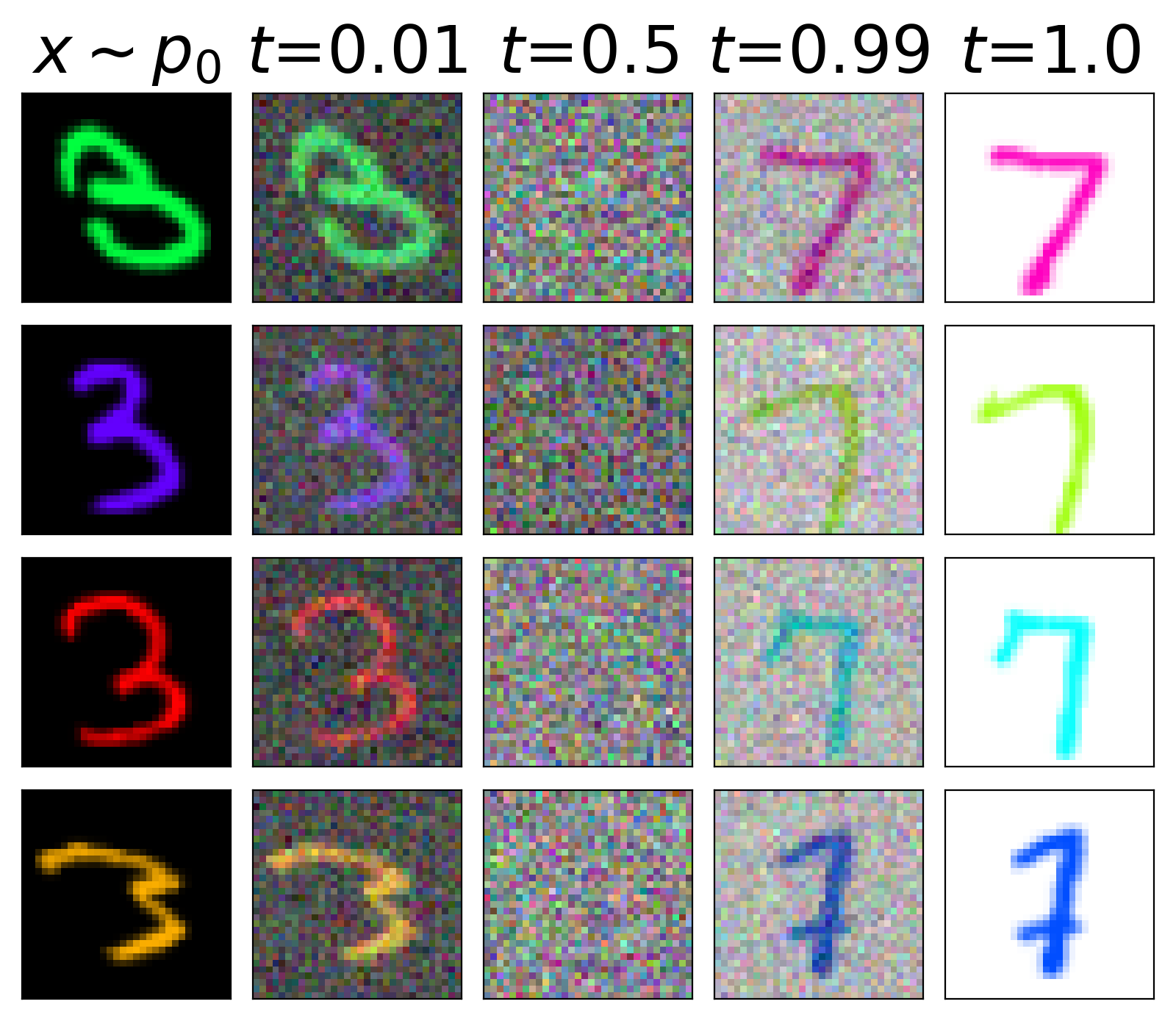}
    \vspace{-2mm}
    \caption{\centering \small \textit{Inverted 7} starting process, i.e., process in the reciprocal class with marginals $p_0$ and $p^{\text{inv} 7}$, visualization.}
    \vspace{-5mm}
    \label{fig:inv_seven_recip}
\end{wrapfigure}

\textbf{DSBM}. 
The model is based on the U-Net architecture \citep{ronneberger2015u} with attention blocks, 2 residual blocks per level, 4 attention heads, 128 base channels, $(1, 2, 2, 2)$ feature multiplications per resolution level. Training was held by Adam \citep{kingma2014adam} optimizer.

\subsection{CelebA details}\label{appx:celeba_exp_details}

Test FID, see Figure~\ref{fig:fid_l2_plots_ipmf} is calculated using \href{https://github.com/mseitzer/pytorch-fid}{pytorch-fid package}, test CMMD is calculated using \href{https://github.com/sayakpaul/cmmd-pytorch}{unofficial implementation} in PyTorch. Working with CelabA dataset \citep{liu2015faceattributes}, we use all 84434 male and 118165 female samples ($90\%$ train, $10\%$ test of each class). Each sample is resized to $64\times 64$ and normalized by $0.5$ mean and $0.5$ std.

\textbf{ASBM.} As in cMNIST experiments \ref{sec:cmnist_asbm_appendix} the generator model is built upon the NCSN++ architecture \citep{song2021sde} but with small parameter changes. The number of initial channels has been lowered to 64, but the number of resolution levels  has been increased with the following changes in feature multiplication, which were set to $(1, 1, 2, 2, 4)$. The discriminator also has been upgraded by growing the number of resolution levels up to 6. No other changes were proposed.

\textbf{DSBM}. Following Colored MNIST translation experiment exactly the same neural network and optimizer was used.

\textbf{SDEdit coupling}. DDPM \citep{ho2020denoising} was trained on CelebA female train part processed in the same way as for other CelebA experiments. Number of diffusion steps is equal to 1000 with linear $\beta_t$ noise schedule, number of training steps is equal to 1M, UNet \citep{ronneberger2015u} was used as neural network with 78M parameters, EMA was used during training with rate 0.9999. The DDPM code was taken from the official DDIM \citep{song2020denoising} github repository:  

\begin{center}
    \url{https://github.com/ermongroup/ddim}
\end{center}

The SDEdit method \citep{meng2021sdedit} for DDPM model was used with 400 steps of noising and 400 steps of denoising. The code for SDEdit method was taken from the official github repository:

\begin{center}
    \url{https://github.com/ermongroup/SDEdit}
\end{center}

The Stable Diffusion V1.5 \citep{rombach2022high} model was taken from the Huggingface \citep{wolf2020transformers} model hub with the tag \textit{``runwayml/stable-diffusion-v1-5''}. The text prompt used is \textit{``A female celebrity from CelebA''}. The SDEdit method implementation for the SDv1.5 model was taken from the Huggingface library \citep{wolf2020transformers}, i.e. \textit{``StableDiffusionImg2ImgPipeline''}, with hyperparameters: \textit{strength} 0.75, \textit{guidance scale} 7.5, \textit{number of inference steps} 50. The output of SDEdit pipeline has been downscaled from 512$\times$512 size to 64$\times$64 size using bicubic interpolation.

\subsection{AFHQ details}\label{app:afhq-details}
We first pretrain the networks using Bridge Matching for 100000 steps, then run DSBM for 20 iterations with 25000 steps per outer iteration. We follow \citep{shi2023diffusion} and use the same U-Net architecture. The batch size is 4, and the EMA rate is 0.999. We choose $\sigma^2$ = 5, and again we use 100 sampling steps with constant stepsizes.

\subsection{Computational resources}\label{app:computational_resources}
The experiment on CelebA for each of the starting processes takes approximately 5 days and 7 days on Nvidia A100 for DSBM and ASBM, respectively. Experiments with Colored MNIST take less than 2 days of training on an A100 GPU for ASBM or DSBM, and for each starting process. Illustrative 2D examples and Schrödinger Bridge benchmark experiments take several hours on GPU A100 each for ASBM or DSBM and for each starting process.

\end{document}